\theoremstyle{plain}
\newtheorem{theorem}{Theorem}[section]
\newtheorem{proposition}[theorem]{Proposition}
\newtheorem{lemma}[theorem]{Lemma}
\theoremstyle{definition}
\theoremstyle{remark}
\newtheorem*{theorem*}{Theorem}
\newcommand{\circled}[1]{\textcircled{\raisebox{-0.9pt}{#1}}}
\newcommand{\red}[1]{\textcolor{red}{#1}}
\newtheorem*{rep@theorem}{\rep@title}
\newcommand{\newreptheorem}[2]{%
\newenvironment{rep#1}[1]{%
 \def\rep@title{#2 \ref{##1}}%
 \begin{rep@theorem}}%
 {\end{rep@theorem}}}
\def\Secref#1{Section~\ref{#1}}
\def\eqref#1{(\ref{#1})}
\def\1{\bm{1}}
\def\vzero{{\bm{0}}}
\def\vtheta{{\bm{\theta}}}
\def\va{{\bm{a}}}
\def\vr{{\bm{r}}}
\def\vs{{\bm{s}}}
\def\vw{{\bm{w}}}
\def\vphi{{\bm{\phi}}}
\def\vepsilon{{\bm{\epsilon}}}
\def\mI{{\bm{I}}}
\DeclareMathAlphabet{\mathsfit}{\encodingdefault}{\sfdefault}{m}{sl}
\SetMathAlphabet{\mathsfit}{bold}{\encodingdefault}{\sfdefault}{bx}{n}
\def\gB{{\mathcal{B}}}
\def\gD{{\mathcal{D}}}
\def\gE{{\mathcal{E}}}
\def\gF{{\mathcal{F}}}
\def\gG{{\mathcal{G}}}
\def\gL{{\mathcal{L}}}
\def\gM{{\mathcal{M}}}
\def\gN{{\mathcal{N}}}
\def\gP{{\mathcal{P}}}
\def\sA{{\mathbb{A}}}
\def\sD{{\mathbb{D}}}
\def\sS{{\mathbb{S}}}
\def\cG{{\mathcal{G}}}
\def\cF{{\mathcal{F}}}
\newcommand{\E}{\mathbb{E}}
\newcommand{\abs}[1]{\left\vert#1\right\rvert}
\newcommand{\norm}[1]{\left\Vert#1\right\Vert}
\newcommand{\cbr}[1]{\left\{#1\right\}}
\newcommand{\br}[1]{\left(#1\right)}
\newcommand{\sbr}[1]{\left[#1\right]}
\newcommand{\given}{\,|\,}
\newcommand{\denv}{\gD_{\mathrm{env}}}
\newcommand{\dmodel}{\gD_{\mathrm{model}}}
\newcommand{\dbtrue}{d_{\pi_b}^{P^{*}}}
\newcommand{\dpitrue}{d_{\pi}^{P^{*}}}
\newcommand{\wQ}{{\widehat{Q}}}
\newcommand{\wP}{{\widehat{P}}}
\newcommand{\trueP}{{P^{*}}}
\newcommand*{\defeq}{\stackrel{\mathsmaller{\mathsf{def}}}{=}}
\icmltitlerunning{Model-based Stationary Distribution Regularization}
\begin{document}

\twocolumn[
\icmltitle{
 Regularizing a Model-based Policy Stationary  Distribution\\ to Stabilize Offline Reinforcement Learning
}



\icmlsetsymbol{equal}{*}

\begin{icmlauthorlist}
\icmlauthor{Shentao Yang}{utmccombs}
\icmlauthor{Yihao Feng}{utcs}
\icmlauthor{Shujian Zhang}{utsds}
\icmlauthor{Mingyuan Zhou}{utmccombs,utsds}
\end{icmlauthorlist}

\icmlaffiliation{utmccombs}{McCombs School of Business,}
\icmlaffiliation{utsds}{Department of Statistics \& Data Science, The University of Texas at Austin}
\icmlaffiliation{utcs}{Department of Computer Science,}

\icmlcorrespondingauthor{Shentao Yang}{shentao.yang@mccombs.utexas.edu}

\icmlcorrespondingauthor{Mingyuan Zhou}{mingyuan.zhou@mccombs.utexas.edu}

\icmlkeywords{Machine Learning, ICML}

\vskip 0.3in
]



\printAffiliationsAndNotice{}  

\begin{abstract}
Offline reinforcement learning (RL) extends the paradigm of classical RL algorithms to purely learning from static datasets, without interacting with the underlying environment during the learning process. A key challenge of offline RL is the instability of policy training, caused by the mismatch between the distribution of the offline data and the undiscounted stationary state-action distribution of the learned policy. To avoid the detrimental impact of distribution mismatch, we regularize the undiscounted stationary distribution of the current policy towards the offline data during the policy optimization process. Further, we train a dynamics model to both implement this regularization and better estimate the stationary distribution of the current policy, reducing the error induced by distribution mismatch. On a wide range of continuous-control offline RL datasets, our method indicates competitive performance, which validates our algorithm. The code is publicly \href{https://github.com/Shentao-YANG/SDM-GAN_ICML2022}{available}.
\end{abstract}
\section{Introduction}\label{sec:intro}

Offline reinforcement learning (RL), traditionally known as batch RL, eschews environmental interactions during the policy learning process and focuses on training policy from static dataset collected by one or more data-collecting policies, which we collectively call the behavior policy \citep{treerl2005, batchrl2012, offlinetutorial2020}.
This paradigm extends the applicability of RL from trial-and-error in the simulator \cite{dqn2013,dqn2015,alphago2016,alphazero2017} to applications where environmental interactions can be costly or even dangerous \citep{whentotreat2019,autodrivesurvey2020}, and to domains with abundant accumulated data \citep{offlineabtest2018}.
While classical off-policy RL algorithms \citep{dqn2013,ddpg2016,c512017,qrdqn2017,td32018,sacnew2018} can be directly applied to the offline setting, they often fail to learn purely from a static dataset, especially when the offline dataset exhibits a narrow distribution on the state-action space \citep{bcq2019,bear2019}.
One important cause to such a failure is the mismatch between the stationary state-action distribution of the behavior policy and that of the learned policy, resulting in possibly pathological estimates of the queried action-values during the training process and thus the instability and failure of the policy learning.

To successfully learn a policy from the offline dataset, prior model-free offline RL works try to avoid the action-distribution shift during the training process, so that the overestimated action-values in out-of-distribution (OOD) actions can be mitigated.
Their approaches can be mainly characterized into three categories: 
\textbf{(1)} robustly train the action-value function or provide a conservative estimate of the action-value function whose numerical values distinguish in- and out-of-distribution actions \citep{rem2020,cql2020,addressingextra2021,s4rl2021};
\textbf{(2)} design a tactful behavior-cloning scheme to learn only from ``good" actions in the dataset \citep{crr2020,decisiontrans2021,iql2021};
\textbf{(3)} regularize the current policy to be close to the action choice in the offline dataset during the training process, so that the learned policy mainly takes action within the reach of the action-value function estimate \citep{bcq2019,sibb2019,bear2019,brac2019,offlinerldialog2019,abm2020,oraac2021,fisherbrc2021,uwac2021,td3bc2021}.

Model-based RL (MBRL) can provide a potential enhancement over the model-free RL.
By learning an approximate dynamic model and subsequently using that for policy learning, MBRL can provide a sample-efficient solution to answer the counterfactual question pertaining to action-value estimation in the online setting \citep{mbpo2019,gamembrl2020}, and to provide an augmentation to the static dataset in the offline setting \citep{mopo2020}.
Directly applying model-based RL methods into the offline setting, however, still faces the challenge of mismatching stationary state-action distribution.
In particular, when the offline dataset only covers a narrow region on the state-action space, the estimated dynamic model may not support far extrapolation \citep{wmopo2021}.
Consequently, poor empirical performance related to the ``model exploitation" issue can appear when directly using the learned model for offline policy learning \citep{asi2012,mbmpo2018,metrpo2018,gamembrl2020,mopo2020}.
As remedies, prior model-based offline RL methods try to avoid state-action pairs far from the offline dataset by uncertainty quantification \citep{mopo2020, morel2020, fan2021contextual,moose2021}, model-based constrained policy optimization \citep{bremen2021,mabe2021}, or model-based conservative action-value function estimation \citep{combo2021}.

In this work, we propose to directly regularize the undiscounted stationary state-action distribution of the current policy towards that of the behavior policy during the policy learning process.
The learned policy thus avoids visiting state-action pairs far from the offline dataset, reducing the occurrence of overestimated action-values and stabilizing policy training.
{
We derive a tractable bound for the distance between the undiscounted stationary distributions.
To implement this bound, we \textbf{(1)} train a dynamic model in a sufficiently rich function class via the maximum likelihood estimation (MLE);
and \textbf{(2)} add a tractable regularizer into the policy optimization objective. This regularizer only requires samples from the offline dataset, the short-horizon model rollouts, the current policy, and the estimated dynamic.}
Further, model-based synthetic rollouts are used to better estimate the stationary state-action distribution of the current policy, which helps the performance of the final policy. 
Besides, our framework allows training an implicit policy to better approximate the maximizer of the action-value function estimate, particularly when the latter exhibits multi-modality.
Without assuming any knowledge about the underlying environment, including the reward function and the termination condition, our method shows competitive performance on a wide range of continuous-control offline-RL datasets from the D4RL benchmark \citep{fu2021d4rl}, validating the effectiveness of our algorithmic designs.

\section{Background} \label{sec:background}

We follow the classical RL setting \citep{rlintro2018} to model the interaction between the agent and the environment as a Markov Decision Process (MDP), specified by the tuple $\gM = \br{\sS, \sA, P, r, \gamma, \mu_0}$, where $\sS$ denotes the state space, $\sA$ the action space, $P\br{s' \given s, a}: \sS \times \sS \times \sA \rightarrow [0,1]$ the environmental dynamic, $r(s,a): \sS \times \sA \rightarrow [-r_{max}, r_{max}]$ the reward function, $\gamma \in (0,1]$ the discount factor, and $\mu_0(s): \sS \rightarrow [0,1]$ the initial-state distribution.

For a given policy $\pi(a\given s)$, we follow the literature \citep{markovdp1994,breakingcurse2018}
to denote the state-action distribution at timestep $t \geq 0$ induced by policy $\pi$ on MDP $\gM$ as 
$
 d_{\pi, t}(s,a) :=  \Pr\br{s_t = s, a_t = a \given s_0 \sim \mu_0, a_t \sim \pi, s_{t+1} \sim P,\forall\, t \geq 0}.
$
Denote the \textit{discounted} stationary state-action distribution induced by $\pi$ as
$
    d_{\pi,\gamma}(s,a) = (1 - \gamma) \sum_{t=0}^\infty \gamma^t d_{\pi, t}(s,a)
$
and the \textit{undiscounted (average)} stationary state-action distribution as $d_\pi(s,a) = \lim_{T \rightarrow \infty}\sum_{t=0}^T \frac{1}{T+1} d_{\pi, t}(s,a)$.
We have $d_{\pi,\gamma}(s,a) = d_{\pi,\gamma}(s) \pi(a \given s)$ and $d_\pi(s,a) = d_\pi(s) \pi(a \given s)$.
In offline RL \citep{offlinetutorial2020}, one has only access to a static dataset $\denv = \cbr{(s,a,r,s')}$ collected by some behavior policy $\pi_b$, which can be a mixture of several data-collecting policies.

Denote the action-value function for policy $\pi$ on $\gM$ as $Q^\pi(s,a) = \E_{\pi, P, r}\sbr{\sum_{t=0}^\infty \gamma^t r(s_t,a_t) \given s_0 = s, a_0 = a}$.
In the actor-critic algorithm \citep{rlintro2018}, the policy $\pi$ and critic $Q^\pi(s,a)$ are typically modelled as parametric functions $\pi_\vphi$ and $Q_\vtheta$,  parametrized by $\vphi$ and $\vtheta$, respectively.
In offline RL, the critic is trained in the policy evaluation step by Bellman backup as minimizing {\it w.r.t.} $\vtheta$:
\begin{align}
    &\ell(\vtheta):= \E_{(s,a,r,s')\sim \denv} \left[\left(Q_{\vtheta}(s,a)  - \hat{B}^\pi Q_{\vtheta^\prime}(s,a)\right)^2\right]\,,\notag \\ 
    &\hat{B}^\pi Q_{\vtheta^\prime}(s,a)= r(s,a) + \gamma \E_{a^\prime\sim \pi(\cdot  \given s^\prime)}\left[Q_{\vtheta^\prime}(s^\prime, a^\prime)\right]\,, \label{eq:ac_bellman}
\end{align}
where $Q_{\vtheta^\prime}$ is the target network. 
The actor $\pi_\vphi$ is trained in the policy improvement step as
\begin{equation}\textstyle \label{eq:ac_actor}
     \arg\max_\vphi \E_{s \sim d_{\pi_b}(s),\, a \sim \pi_{\vphi}(\cdot \given s)}\sbr{Q_{\vtheta}\br{s, a}},
\end{equation}
where samples from the offline dataset $\denv$ is used to approximate the samples from $d_{\pi_b}(s)$ \citep{diagbottleneck2019, offlinetutorial2020}.

In model-based offline RL, a stochastic dynamic model $\widehat P\br{s' \given s, a}$ within some function class $\mathcal{P}$ is learned to approximate the true environmental dynamic, denoted as $P^*$.
With the offline dataset $\denv$, $\widehat P$ is typically trained using MLE \citep{mbpo2019, mopo2020, combo2021}  as
\begin{equation}\textstyle \label{eq:model_objective}
    \arg\max_{\widehat P \in \mathcal{P}} \E_{\br{s,a,s'} \sim \denv}\sbr{\log \widehat P\br{s' \given s, a}}.
\end{equation}
Models for the reward function $\hat r$ and the termination conditional can also be trained similarly if assumed unknown.
With $\widehat P$ and $\hat r$, an approximated MDP to $\gM$ can be constructed as $\widehat \gM = (\sS, \sA, \widehat P, \hat r, \gamma, \mu_0)$.
To distinguish the aforementioned stationary distributions induced by policy $\pi$ on $\gM$ and on $\widehat \gM$, we denote the undiscounted stationary state-action distribution induced by $\pi$ on the true dynamic $P^*$ (or $\gM$) as $d_\pi^{P^*}(s,a)$, on the learned dynamic $\widehat P$ (or $\widehat M$) as $d_\pi^{\widehat P}(s,a)$, and similarly for the discounted stationary state-action distributions $d_{\pi,\gamma}^{P^*}(s,a)$ and $d_{\pi,\gamma}^{\widehat P}(s,a)$.
With the estimated dynamic $\widehat P$, prior model-based offline RL works \citep{morel2020,mopo2020,combo2021,mabe2021} typically approximate $d_{\pi_\vphi}^{P^*}(s,a)$ by simulating the learned policy $\pi_\vphi$ in $\widehat M$ for a short horizon $h$ starting from state $s \in \denv$.
The resulting trajectories are stored into a replay buffer $\dmodel$, similar to the replay buffer in off-policy RL \citep{replaybuffer1992,ddpg2016}.
Sampling from $\denv$ in Eqs.~\eqref{eq:ac_bellman} and \eqref{eq:ac_actor} is commonly replaced by sampling from the augmented dataset $\mathcal{D} := f \denv + (1-f) \dmodel, f \in [0,1]$, denoting  sampling from $\denv$ with probability $f$ and from $\dmodel$ with probability $1 - f$.

We follow the offline RL literature \citep{breakingcurse2018, algaedice2019, confoundingrobust2020,gendice2020} to assume
\textbf{(i)} that all the Markov chains induced by the studied (approximated) dynamics and policies are ergodic, and thus the undiscounted stationary state-action distribution $d_\pi(s,a)$ equals to the limiting state-action occupancy measure induced by $\pi$ on the corresponding MDP;
and \textbf{(ii)} that the offline dataset $\denv$ is constructed as the rollouts of $\pi_b$ on the true dynamic $P^*$, \textit{i.e.}, $\denv \sim d_{\pi_b}^{P^*}(s,a)$.
We denote the state-action and state distributions in $\denv$ as $d_{\denv}(s,a)$ and $d_{\denv}(s)$, which are  discrete approximations to $d_{\pi_b}^{P ^*}(s,a)$ and $d_{\pi_b}^{P ^*}(s)$, respectively.

\section{Main Method} \label{sec:main_method}

In this section, we introduce our approach to stabilize the policy training. 
Specifically, except optimizing the policy $\pi$ to maximize the action-value function, we add a distribution regularization into the policy optimization objective, which encourages the closeness between the stationary distribution of the learned policy $\pi$ and that of the behavior $\pi_b$.
Our policy optimization objective is summarized as
\begin{align*}
    J(\pi):= \lambda \E_{s \sim \denv,\, a \sim \pi(\cdot | s)}\sbr{Q_{\vtheta}\br{s, a}} -  D(\dbtrue, \dpitrue),
\end{align*}
where $\lambda$ is the regularization coefficient, $D(\cdot, \cdot)$ is a statistical distance  between two probability distributions, such as the integral probability metric (IPM, \citet{ipm1997}) and the Jensen--Shannon divergence (JSD, \citet{jsd1991}),
$\dbtrue(s,a)$ is the stationary distribution induced by behavior policy $\pi_{b}$ in the true dynamic $P^{*}$, and $\dpitrue(s,a)$ is the stationary distribution induced by the current policy $\pi$ in $P^{*}$.
 
\subsection{A Tractable Bound to the Regularization Term} \label{sec:method_theory}
As discussed in \Secref{sec:background}, the offline dataset $\denv$ is drawn from $\dbtrue$. 
Since the current policy $\pi$ can not interact with the environment during the training process, we can not directly estimate $\dpitrue$ using simulation. 
Hence, we use a learned dynamic model $\widehat P$ to approximate $\dpitrue$. 
Using the triangle inequality for the distance $D$, we have
\begin{align}
    D(\dbtrue, \dpitrue) \leq D(\dbtrue, d_{\pi}^{\widehat P}) + D(d_{\pi}^{\widehat P}, d_{\pi}^{P^*})\,. \label{equ:reg_relax}
\end{align}
Next, we describe how to estimate $D(\dbtrue, d_{\pi}^{\widehat P})$ and how to upper bound $D(d_{\pi}^{\widehat P}, d_{\pi}^{P^*})$ in the RHS of Eq.~\eqref{equ:reg_relax}. 

{
For a given dynamic model $\widehat P(s^\prime | s,a)$, since we do not know the formula of $d_{\pi}^{\widehat{P}}(s,a)$, a natural way to estimate this distribution is collecting rollouts of $\pi$ on the dynamic $\widehat P$. 
A major drawback of this approach is that it requires drawing many trajectories. 
This can be time-consuming as trajectories are typically long or even infinite, especially when they converge to the stationary distribution slowly. 
Even worse, since $\pi$ changes during the policy-learning process, we have to repeat this expensive rollout process many times.
To estimate $D(\dbtrue, d_{\pi}^{\widehat P})$ in Eq.~\eqref{equ:reg_relax}, we instead derive the following theorem, which avoids sampling full trajectories and only requires sampling a \textit{single} $(s^\prime, a^\prime)$ pair using policy $\pi$ and model $\widehat{P}$ starting from $(s,a)$ drawn from the offline dataset. 
This can significantly save computation and training time.
}

\begin{theorem}\label{prop:ipm1}
If the distance metric $D$ is the IPM w.r.t some function class $\mathcal{G}=\{g: \sS \times \sA \to \mathbb{R}\}$, defined as 
\begin{align*}
    D_{\cG}(\dbtrue, d_{\pi}^{\widehat P}):= \sup_{g\in \mathcal{G}}\left\vert\E_{\dbtrue}\left[g(s,a)\right] - \E_{d_{\pi}^{\widehat P}}\left[g(s,a)\right]\right\vert\,,
\end{align*}
then there exists a $\gG$ s.t. we can rewrite $D_{\cG}(\dbtrue, d_{\pi}^{\widehat P})$ as:
\begin{equation*} 
    \resizebox{.48\textwidth}{!}{%
    $
\begin{aligned}
    &D_{\cG}(\dbtrue, d_{\pi}^{\widehat P})= \mathcal{R}_{\cF}(\dbtrue, \pi,\widehat{P}) \notag \\
    :=&\sup_{f\in \cF} \bigg\vert\E_{(s,a)\sim \dbtrue  }[f(s,a)]-\E_{{s^\prime \sim \widehat P(\cdot | s,a), a^\prime \sim \pi(\cdot | s^\prime)}\atop {(s,a)\sim \dbtrue }}\left[f(s^\prime, a^\prime)\right]\bigg\vert\,, 
\end{aligned}
$%
}
\end{equation*}
where 
\begin{equation*}
    \resizebox{.48\textwidth}{!}{%
$
\begin{aligned}
         \cF:=\Bigg\{f:&~ f(s,a)=\E_{\pi, \widehat{P}}\left[\sum_{t=0}^{\infty}\left(g(s_t,a_t) - \widehat{\eta}^\pi\right)~|~{{a_0=a}\atop{s_0=s}}\right], \, \\
    &\widehat{\eta}^\pi =\lim_{T\to \infty}\E_{\pi, \widehat{P}}\left[\frac{1}{T+1}\sum_{t=0}^{T}g(s_t, a_t)\right], g\in \cG\Bigg\}\,. 
    \end{aligned}
    $%
}
\end{equation*}
\end{theorem}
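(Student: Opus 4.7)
The plan is to recognize that, for each test function $g\in\gG$, the function $f$ built from $g$ in the theorem is precisely the solution to the Poisson equation for the Markov chain obtained by running $\pi$ on $\widehat P$, viewed as a chain on state--action pairs. Let me denote the one-step transition operator of this chain by
\[
(Th)(s,a) \;:=\; \E_{s'\sim \widehat P(\cdot\mid s,a),\, a'\sim \pi(\cdot\mid s')}\!\left[h(s',a')\right].
\]
The ergodicity assumption stated in \Secref{sec:background} ensures that $\widehat{\eta}^\pi = \E_{d_\pi^{\widehat P}}[g(s,a)]$, and (together with boundedness of $g$, which holds for the standard IPM classes one cares about) guarantees convergence of the series defining $f$.

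The first step is to establish the Bellman-type fixed-point identity
\[
f(s,a) - (Tf)(s,a) \;=\; g(s,a) - \widehat{\eta}^\pi .
\]
This is a direct telescoping computation: peeling off the $t=0$ term from the series $f(s,a)=\sum_{t\ge 0}\E_{\pi,\widehat P}[g(s_t,a_t)-\widehat{\eta}^\pi\mid s_0=s,a_0=a]$ and reindexing the remaining terms, the tail is exactly $\E_{s'\sim\widehat P(\cdot\mid s,a),\,a'\sim\pi(\cdot\mid s')}[f(s',a')] = (Tf)(s,a)$.

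The second step is to integrate this identity against $(s,a)\sim d_{\pi_b}^{P^*}$. Using $\widehat{\eta}^\pi = \E_{d_\pi^{\widehat P}}[g]$, this yields
\[
\E_{d_{\pi_b}^{P^*}}[f(s,a)] \;-\; \E_{(s,a)\sim d_{\pi_b}^{P^*},\,s'\sim\widehat P(\cdot\mid s,a),\,a'\sim\pi(\cdot\mid s')}[f(s',a')] \;=\; \E_{d_{\pi_b}^{P^*}}[g] - \E_{d_\pi^{\widehat P}}[g].
\]
Taking absolute values makes the two sides agree pointwise in $g$. Since the map $g\mapsto f$ is a well-defined bijection between $\gG$ and $\cF$ by construction, taking the supremum over $g\in\gG$ on the left and over the corresponding $f\in\cF$ on the right gives $D_{\gG}(d_{\pi_b}^{P^*},d_\pi^{\widehat P}) = \gR_{\cF}(d_{\pi_b}^{P^*},\pi,\widehat P)$, as claimed.

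The main obstacle is the analytic justification that the series defining $f$ converges and that the expectation can be swapped with the infinite sum when integrating against $d_{\pi_b}^{P^*}$. Here I would lean on assumption (i) of \Secref{sec:background}: ergodicity of the state--action chain induced by $\pi$ on $\widehat{M}$, strengthened to geometric ergodicity in the standard Poisson-equation existence theorem, combined with boundedness of the IPM witness functions $g\in\gG$, delivers absolute convergence of $\sum_t (T^t g - \widehat{\eta}^\pi)$ and legitimizes the Fubini exchange used in the second step. Everything else is an algebraic manipulation around the Poisson identity.
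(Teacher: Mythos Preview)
Your proposal is correct and follows essentially the same approach as the paper: both hinge on the average-reward Bellman/Poisson identity $f(s,a)-(Tf)(s,a)=g(s,a)-\widehat{\eta}^\pi$ for the differential value function, integrate it against $d_{\pi_b}^{P^*}$, and then pass to the supremum using the correspondence $g\leftrightarrow f$. The only cosmetic differences are that the paper invokes the Bellman equation as a known fact from average-reward MDP theory (citing Puterman and Sutton--Barto) rather than deriving it by telescoping, and it is slightly less explicit than you are about the convergence/Fubini justification.
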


We describe the definition of the function class $\gG$ in Appendix~\ref{sec:proof_31}.
In Theorem \ref{prop:ipm1}, the supremum is taken over a new function class $\mathcal{F}$, which captures the \textit{relative value function} following the policy $\pi$ and the transition $\widehat{P}$ when the reward function is the deterministic function $g(s,a)$ \citep{markovdp1994,rlintro2018}. 
{Using this new function class $\gF$ instead of the original $\gG$ enables the aforementioned avoidance of full-trajectory sampling.}
A more detailed discussion on the relation between $f(s,a)$ and $g(s,a)$ can be found in  Appendix \ref{sec:proof_31}.

{
We remark that in practice the relative value function $f$ is approximated by neural networks with regularity, under the same assumptions as approximating the solution of Bellman backup using neural networks.
Because of the expressiveness of neural networks, we assume that $f$ then resides in a richer function class that contains the class $\mathcal{F}$ of differential value functions. 
Since $f$ is maximized over a larger function class than $\gF$, we effectively minimize an upper bound of the original regularization term in Theorem~\ref{prop:ipm1}.
This relaxation avoids explicitly estimating the differential value function for each reward function $g$ encountered in the training process. Such explicit estimation can be time-consuming.
}

Our next step is to bound $D(d_{\pi}^{\widehat P}, d_{\pi}^{P^*})$. 

\begin{theorem}\label{thm:model_error}
Let $P^{*}$ be the true dynamic, $D_\gG$ the IPM with the same $\gG$ as Theorem~\ref{prop:ipm1}, and $\widehat{P}$ the estimated dynamic.
There exist a constant $C \geq 0$ and a function class $\Psi =\{\psi: \sS \times \sA \to \mathbb{R}\}$ such that for any policy $\pi$,
\begin{equation} \label{equ:mb_upper}
    \resizebox{.48\textwidth}{!}{%
$
\begin{aligned}
    D_{\cG}(d_{\pi}^{\widehat P}, d_{\pi}^{P^*}) \leq & C\cdot \E_{(s,a)\sim \dbtrue}\left[\sqrt{\frac{1}{2}\mathrm{KL}\left(P^{*}(s^\prime |s,a) || \widehat{P}(s^\prime | s,a) \right)}\right]   \\
    & \quad +  \mathcal{R}_{\Psi}(\dbtrue, \pi)\,, 
\end{aligned}
    $%
}
\end{equation}
where 
\begin{equation*}
    \resizebox{.48\textwidth}{!}{%
$
\begin{aligned}
 \mathcal{R}_{\Psi}(\dbtrue, \pi):=\sup_{\psi \in \Psi}\left\vert \E_{(s,a)\sim \dbtrue}[\psi(s,a)] - \E_{s\sim \dbtrue\atop{a\sim \pi(\cdot|s)}}[\psi(s,a)]\right\vert\,.
\end{aligned}
    $%
}
\end{equation*}
\end{theorem}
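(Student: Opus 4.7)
I would prove Theorem~\ref{thm:model_error} by applying the differential-value-function identity behind Theorem~\ref{prop:ipm1} twice: the first application converts $\E_{d_\pi^{\widehat P}}[g] - \E_{d_\pi^{P^*}}[g]$ into an expectation of a one-step model-error term, and the second reduces the residual distribution shift to the purely action-based form $\mathcal{R}_\Psi$.

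Fix $g\in\cG$, and let $\widehat V$ be the differential value function of $(\widehat P,\pi)$ with instantaneous reward $g$, so that $g(s,a)=\widehat V(s,a)+\widehat\eta^\pi-\E_{s'\sim\widehat P,\,a'\sim\pi}[\widehat V(s',a')]$ with $\widehat\eta^\pi=\E_{d_\pi^{\widehat P}}[g]$. Integrating under $d_\pi^{P^*}$ and invoking the stationarity of $d_\pi^{P^*}$ under $(P^*,\pi)$ yields
\[
\E_{d_\pi^{P^*}}[g]-\E_{d_\pi^{\widehat P}}[g]\;=\;\E_{(s,a)\sim d_\pi^{P^*}}\bigl[\widehat\Delta(s,a)\bigr],
\]
where $\widehat\Delta(s,a):=\int (P^*-\widehat P)(s'\mid s,a)\,\E_{a'\sim\pi}[\widehat V(s',a')]\,ds'$. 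I then split $\E_{d_\pi^{P^*}}[\widehat\Delta]=\E_{\dbtrue}[\widehat\Delta]+\bigl(\E_{d_\pi^{P^*}}[\widehat\Delta]-\E_{\dbtrue}[\widehat\Delta]\bigr)$: the first summand becomes the model-error contribution, and the second becomes the residual policy-shift term.

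For the first summand, the pointwise bound $|\widehat\Delta(s,a)|\leq 2\|\widehat V\|_\infty\cdot\mathrm{TV}(P^*(\cdot\mid s,a),\widehat P(\cdot\mid s,a))$ combined with Pinsker's inequality yields exactly the KL term in the theorem, with $C$ absorbing $2\|\widehat V\|_\infty$ (finite, uniformly in $g\in\cG$, by the standing boundedness and ergodicity assumptions that guarantee bounded differential value functions of bounded rewards). For the residual summand, I apply the Theorem~\ref{prop:ipm1} construction a second time, now inside the true MDP: let $V^*_{\widehat\Delta}$ be the differential value function of $(P^*,\pi)$ with reward $\widehat\Delta$, and set $\Psi:=\{V^*_{\widehat\Delta}:g\in\cG\}$. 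The corresponding Bellman identity integrated under $\dbtrue$ gives $\E_{\dbtrue}[\widehat\Delta]-\E_{d_\pi^{P^*}}[\widehat\Delta]=\E_{\dbtrue}[V^*_{\widehat\Delta}]-\E_{(s,a)\sim\dbtrue,\,s'\sim P^*,\,a'\sim\pi}[V^*_{\widehat\Delta}(s',a')]$, and the stationarity of $\dbtrue$ under $(P^*,\pi_b)$ collapses the right-hand side to $\E_{s\sim\dbtrue,a\sim\pi_b}[V^*_{\widehat\Delta}]-\E_{s\sim\dbtrue,a\sim\pi}[V^*_{\widehat\Delta}]$, whose absolute value is at most $\mathcal{R}_\Psi(\dbtrue,\pi)$. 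Combining the two summands and taking $\sup_{g\in\cG}$ delivers the stated bound.

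The main obstacle is engineering $\Psi$ correctly: it must contain $V^*_{\widehat\Delta}$ uniformly over $g\in\cG$ so that $\mathcal{R}_\Psi$ legitimately upper-bounds the residual, while having uniformly bounded sup-norm so that the $C$ in the theorem is finite; both are consequences of the ergodicity hypothesis, which ensures that differential value functions of uniformly bounded rewards are themselves uniformly bounded. A minor technical point is that the direction $\mathrm{KL}(P^*\|\widehat P)$ in the statement is exactly what Pinsker's inequality produces from $\mathrm{TV}(P^*,\widehat P)$ appearing inside $\widehat\Delta$, so no re-orientation of the KL is needed.
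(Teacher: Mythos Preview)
Your proposal is correct and follows essentially the same route as the paper: both apply the differential-value-function (Bellman) identity under $(\widehat P,\pi)$ to rewrite the IPM as $\E_{d_\pi^{P^*}}[\widehat\Delta]$ (the paper calls this $\mathcal{E}(s,a;\widehat P)$), split into $\E_{\dbtrue}[\widehat\Delta]$ plus the residual, bound the first summand via TV and Pinsker, and then apply the same change-of-variable trick a second time under $(P^*,\pi)$ to define $\Psi$ and reduce the residual to the action-only form using the equality of the $s$ and $s'$ marginals under $\dbtrue$. The only cosmetic differences are notation and that the paper invokes a sup--expectation exchange lemma where you use the direct pointwise bound on $|\widehat\Delta|$; the resulting constants differ but are both absorbed into $C$.
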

We describe the definition of the function class $\Psi$ in Appendix~\ref{sec:proof_thm_32}. 
Detailed proof of Theorem \ref{thm:model_error} can be found in Appendix \ref{sec:proof_thm_32}. 

\begin{proposition}\label{prop:kl_mle_sec_method}
An upper bound of the first term on the right-hand-side of Eq.~\eqref{equ:mb_upper}
can be minimized via the MLE for $\widehat P$, {\it i.e.}, Eq.~\eqref{eq:model_objective}.
\end{proposition}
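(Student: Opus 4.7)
The plan is to produce an upper bound of
\[
C\cdot \E_{(s,a)\sim \dbtrue}\!\left[\sqrt{\tfrac{1}{2}\mathrm{KL}\!\left(P^{*}(s^\prime |s,a) \,\|\, \widehat{P}(s^\prime | s,a)\right)}\right]
\]
whose minimizer in $\widehat P$ coincides with the maximizer of the MLE objective in Eq.~\eqref{eq:model_objective}. The argument factors into two elementary steps: a concavity inequality to linearize the square root in $\widehat P$, and a rewriting of the expected KL divergence as an MLE objective plus a term independent of $\widehat P$.

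First, I would apply Jensen's inequality. Since $\sqrt{\cdot}$ is concave on $[0,\infty)$, for any nonnegative random variable $X$ we have $\E[\sqrt{X}] \le \sqrt{\E[X]}$. Applying this with $X(s,a) = \tfrac{1}{2}\mathrm{KL}(P^{*}(\cdot|s,a) \,\|\, \widehat{P}(\cdot|s,a))$ and the expectation taken over $\dbtrue$, I obtain
\[
C\cdot \E_{(s,a)\sim \dbtrue}\!\left[\sqrt{\tfrac{1}{2}\mathrm{KL}(P^{*} \,\|\, \widehat P)}\right]
\;\le\; C\cdot \sqrt{\tfrac{1}{2}\,\E_{(s,a)\sim \dbtrue}\!\left[\mathrm{KL}(P^{*}(\cdot|s,a) \,\|\, \widehat P(\cdot|s,a))\right]}.
\]
Because $\sqrt{\cdot}$ is monotonically increasing and $C\ge 0$, minimizing the right-hand side over $\widehat P$ is equivalent to minimizing the expected KL divergence inside the square root.

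Second, I would expand the expected KL and separate the $\widehat P$-dependent part:
\[
\E_{(s,a)\sim \dbtrue}\!\left[\mathrm{KL}(P^{*}(s'|s,a) \,\|\, \widehat P(s'|s,a))\right]
= \E_{(s,a,s')\sim d^{P^{*}}_{\pi_b}(s,a)P^{*}(s'|s,a)}\!\left[\log P^{*}(s'|s,a)\right] - \E\!\left[\log \widehat P(s'|s,a)\right],
\]
where both expectations use the same joint measure. The first summand is a (negative) conditional entropy of $P^{*}$ and does not depend on $\widehat P$, so minimizing the expression over $\widehat P$ reduces to maximizing $\E[\log \widehat P(s'|s,a)]$ under $(s,a,s')\sim d^{P^{*}}_{\pi_b}(s,a)P^{*}(s'|s,a)$. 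By the standing assumption that $\denv$ consists of rollouts of $\pi_b$ on $P^{*}$, samples $(s,a,s')\in \denv$ are drawn precisely from this joint, so the maximization is exactly the MLE objective in Eq.~\eqref{eq:model_objective}, completing the proof.

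There is no real obstacle; the only subtleties are (a) justifying that the upper bound and the original quantity share the same minimizer direction, which follows from monotonicity of $\sqrt{\cdot}$ and nonnegativity of $C$, and (b) noting that the entropy term is a fixed constant in $\widehat P$ (assumed finite, which is standard under the regularity of $P^{*}$ and $\dbtrue$). Strict equality of minimizers between the population MLE and its empirical counterpart over $\denv$ is, as usual, an approximation that becomes exact in the infinite-data limit.
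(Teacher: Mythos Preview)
Your proof is correct and follows essentially the same approach as the paper: apply Jensen's inequality to the concave square root to pull the expectation inside, then expand the expected KL divergence and observe that the $P^*$-entropy term is constant in $\widehat P$, so minimization reduces to the MLE objective of Eq.~\eqref{eq:model_objective}. Your added remarks on monotonicity of $\sqrt{\cdot}$, nonnegativity of $C$, and the identification of $(s,a,s')\sim\denv$ with the joint $d_{\pi_b}^{P^*}(s,a)P^*(s'|s,a)$ are slightly more explicit than the paper's version but do not change the argument.
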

Informally, this proposition coincides with the intuition that estimating the environmental dynamic using MLE can be helpful for matching $d_{\pi}^{\widehat P}$ with $d_{\pi}^{P^*}$.
This proposition is restated and proved in Proposition \ref{eq:kl_mle} in Appendix~\ref{sec:proof_thm_32}.

\begin{theorem}\label{col:sum_up}
Combine Theorems \ref{prop:ipm1} and \ref{thm:model_error} and Proposition \ref{prop:kl_mle_sec_method}, and suppose the dynamic model $\widehat P$ is trained by the MLE objective (Eq.~\eqref{eq:model_objective}), then for the same $\gG$ and any policy $\pi$, up to some constants {\it w.r.t.} $\pi$ and $\widehat P$, we have
\begin{equation*}
    \resizebox{.48\textwidth}{!}{%
$
\begin{aligned}
    D_{\cG}(\dbtrue, \dpitrue) \leq \mathcal{R}_{\cF}(\dbtrue, \pi,\widehat{P}) + \mathcal{R}_{\Psi}(\dbtrue, \pi) + \mathcal{E}_{model},
\end{aligned}
    $%
}
\end{equation*}
where $\mathcal{E}_{model}$ is the error associated with the maximum likelihood training, and is independent of the policy $\pi$.
\end{theorem}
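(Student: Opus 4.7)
The plan is to chain together the three preceding results, with the triangle inequality in Eq.~\eqref{equ:reg_relax} serving as the backbone. First I would fix the distance to be $D = D_{\cG}$ for the same function class $\cG$ that appears in both Theorems~\ref{prop:ipm1} and \ref{thm:model_error}, so that all intermediate bounds refer to the same metric on distributions. Starting from Eq.~\eqref{equ:reg_relax}, I would apply Theorem~\ref{prop:ipm1} to the first summand to obtain the exact identity $D_{\cG}(\dbtrue, d_{\pi}^{\widehat P}) = \mathcal{R}_{\cF}(\dbtrue, \pi, \widehat P)$; this step is a substitution, not a relaxation.

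Next I would apply Theorem~\ref{thm:model_error} to the second summand $D_{\cG}(d_{\pi}^{\widehat P}, d_{\pi}^{P^{*}})$, producing a sum of (i) an expected $\sqrt{\mathrm{KL}}$ model-error term scaled by the constant $C$ and (ii) $\mathcal{R}_{\Psi}(\dbtrue, \pi)$. At this point I would invoke Proposition~\ref{prop:kl_mle_sec_method} to control the expected square-root KL: by the proposition, this quantity is upper bounded by (a monotone transform of) the residual of the MLE objective in Eq.~\eqref{eq:model_objective}, which I collect into a single scalar $\mathcal{E}_{model}$ that depends only on $\widehat P$ and on the data-generating distribution, and hence is constant in $\pi$.

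Combining the three estimates and absorbing the constant $C$ into the ``up to constants w.r.t.\ $\pi$ and $\widehat P$'' qualifier in the statement yields
\begin{equation*}
    D_{\cG}(\dbtrue, \dpitrue) \leq \mathcal{R}_{\cF}(\dbtrue, \pi, \widehat P) + \mathcal{R}_{\Psi}(\dbtrue, \pi) + \mathcal{E}_{model},
\end{equation*}
which is the claimed bound. No individual step is hard; the only subtlety is the bookkeeping check that the function class $\cG$ invoked in Theorem~\ref{prop:ipm1} can simultaneously be the one used in Theorem~\ref{thm:model_error}, and that $\mathcal{E}_{model}$ is genuinely $\pi$-independent so that it truly behaves as a constant during subsequent policy optimization. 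Since the MLE objective in Eq.~\eqref{eq:model_objective} makes no reference to $\pi$, this independence is immediate, and the theorem then follows as essentially a one-line composition of the three prior results.
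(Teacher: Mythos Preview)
Your proposal is correct and follows essentially the same approach as the paper's own proof: start from the triangle inequality in Eq.~\eqref{equ:reg_relax}, substitute Theorem~\ref{prop:ipm1} for the first term, bound the second term by Theorem~\ref{thm:model_error}, then use Proposition~\ref{prop:kl_mle_sec_method} to package the $\sqrt{\mathrm{KL}}$ term as the $\pi$-independent $\mathcal{E}_{model}$ and absorb $C$ into the ``up to constants'' qualifier. The paper does exactly this, with the same bookkeeping observations about the shared $\cG$ and the $\pi$-independence of the MLE residual.
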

If the function class $\mathcal{P}$ for the estimated dynamic $\widehat P$ is rich enough and we have sufficiently many empirical samples from $\dbtrue$, under the classical statistical regularity condition, we can achieve a small model error $\mathcal{E}_{model}$ by MLE \citep{ferguson1996,casellaberger2001}. 
So we focus on minimizing the first two terms in Theorem~\ref{col:sum_up} as a regularization during the policy improvement step.

\subsection{Practical Implementation} \label{sec:main_method_details}
Based on the analysis in \Secref{sec:method_theory}, we would like to minimize $D_{\cG}(\dbtrue, \dpitrue)$ as a regularization term in the policy optimization step.

\textbf{Model Training.} The first step is to minimize the model error $\mathcal{E}_{model}$ by pre-training the dynamic model under a sufficiently rich function class via MLE.
In this paper, we take on a fully-offline perspective, assuming no prior knowledge about the reward function and the termination condition.
We use neural networks to parameterize the learned transition, reward, and termination.
Following prior model-based offline-RL work \citep{mopo2020,combo2021,mabe2021}, we assume stochastic transition and reward, and use an ensemble of Gaussian probabilistic networks, denoted as $\widehat P(\cdot \given s, a)$ and $\widehat r(s, a)$, to model these distributions.
We use the same model structure, training strategy, and hyperparameters as in \citet{mopo2020}, where the model outputs the mean and log-standard-deviation of the normal distributions of the reward and next state. 
The termination condition is modelled as an ensemble of deterministic sigmoid networks with the same training strategy as $\widehat P$ and $\widehat r$, and outputs the probability of the input satisfying the termination condition.
For computational simplicity, the termination-condition networks share the input and hidden layers with $\widehat P$ and $\widehat r$.
A default cutoff-value of $0.5$ is used to decide termination.
Further details on model training is in Appendix \ref{sec:algo_details_model}.

\paragraph{Model Rollouts.} In practice, we have only a finite static dataset, and we therefore replace sampling from $\dbtrue$ with sampling from its discrete approximation $\denv$.
With the learned model $\widehat P$, we follow prior model-based RL work \citep{mbpo2019, mopo2020} to augment $\denv$ with the replay buffer $\dmodel$ consisting of $h$-horizon rollouts of the current policy $\pi_\vphi$ on the learned model $\widehat P$, by branching from states in the offline dataset $\denv$.
As discussed in \Secref{sec:background}, the augmented dataset is defined as $\gD := f \cdot \denv + (1 - f)\cdot \dmodel$, with $f \in [0,1]$ denotes the percentage of real data.
While theoretically not required in the derivation in \Secref{sec:method_theory}, in our preliminary study we find that adding a significant amount of synthetic rollouts can benefit the performance.
We hypothesize that adding $\dmodel$ can better estimate the stationary state-action distribution of the current policy and hence mitigate the issue of distribution mismatch in offline policy evaluation and optimization.
Besides, $\dmodel$ can also serve as a natural data-augmentation to $\denv$, reducing the sampling error in training the action-value function, as discussed in \citet{combo2021}.
We therefore use $\gD$ in lieu of $\denv$ in the policy evaluation, policy optimization, and regularizer construction.
We follow \citet{combo2021} to use $f=0.5$ for our algorithm and conduct an ablation study on $f$ in \Secref{sec:exp_ablation}.

\textbf{Regularizer Construction.} Though technically possible to use different test functions under different function classes to separately estimate $\mathcal{R}_{\cF}(\dbtrue, \pi,\widehat{P})$ and $\mathcal{R}_{\Psi}(\dbtrue, \pi)$,
for computational simplicity, we use the same test function to directly estimate the sum $\mathcal{R}_{\cF}(\dbtrue, \pi,\widehat{P}) + \mathcal{R}_{\Psi}(\dbtrue, \pi)$, and parametrize the test function using neural network.
We leave the strategy of separate estimation as future work.

A sample-based estimate of $\mathcal{R}_{\cF}(\dbtrue, \pi,\widehat{P}) + \mathcal{R}_{\Psi}(\dbtrue, \pi)$ requires sampling from the data distribution, which we dub as ``true" samples; and sampling from the policy's distribution, dubbed as ``fake" samples.
The ``true" samples $\gB_{\mathrm{true}}$ are formed by sampling from $\denv$.
To form the ``fake" samples $\gB_{\mathrm{fake}}$, we first sample $s \sim \gD$, followed by getting a corresponding action for each $s$ using $a \sim \pi_\vphi(\cdot \given s)$.
Then for each $(s,a)$ pair, we get a sample of the next state $s'$ using the learned model $\widehat P$ as $s' \sim \widehat P(\cdot \given s, a)$, followed by sampling one next action $a'$ for each $s'$ via $a' \sim \pi_\vphi(\cdot \given s')$.
Finally, we construct the ``fake" sample $\gB_{\mathrm{fake}}$ as 
\begin{equation} \label{eq:fake_sample}
\gB_{\mathrm{fake}} :=
\begin{bmatrix}
    (s~, & a~) \\
    (s', & a')
    \end{bmatrix}.
\end{equation}
Here, $(s,a)$ is used to estimate $\E_{s\sim \dbtrue\atop{a\sim \pi(\cdot|s)}}[\psi(s,a)]$, the second term inside the $\abs{\cdot}$ of $\mathcal{R}_{\Psi}(\dbtrue, \pi)$.
The pair $(s',a')$ corresponds to $\E_{{s^\prime \sim \widehat P(\cdot | s,a), a^\prime \sim \pi(\cdot | s^\prime)}\atop {(s,a)\sim \dbtrue }}\left[f(s^\prime, a^\prime)\right]$, the second term inside the absolute-value of $\mathcal{R}_{\cF}(\dbtrue, \pi,\widehat{P})$.

Notice that both the IPM and JSD are well-defined statistical metrics, and matching {\it w.r.t.} one of them effectively matches the other.
Meanwhile, the GAN \citep{gan2014} framework provides a stable training scheme for approximately minimizing JSD, with easy reference to hyperparameter configuration in literature.
We henceforth change the IPM structure in $\mathcal{R}_{\cF}(\dbtrue, \pi,\widehat{P}) + \mathcal{R}_{\Psi}(\dbtrue, \pi)$ to JSD and approximately minimize it via GAN.
This amounts to training a discriminator $D_\vw$ to maximize
\begin{equation}\label{eq:dis_objective}
\textstyle
    \begin{split}
        \frac{1}{\abs{\gB_{\mathrm{true}}}} & \sum_{(s,a)\sim \gB_{\mathrm{true}}}\sbr{\log D_\vw(s,a)} + \\
        & \quad \frac{1}{\abs{\gB_{\mathrm{fake}}}} \sum_{(s,a)\sim \gB_{\mathrm{fake}}} \sbr{\log\br{1-D_\vw(s,a)}},
    \end{split}
\end{equation}
and adding the generator loss $\gL_g(\vphi)$ defined as
\begin{equation} \label{eq:generator_loss}
\textstyle
    \gL_g(\vphi) := \frac{1}{\abs{\gB_{\mathrm{fake}}}} \sum_{(s,a) \in \gB_{\mathrm{fake}}}\sbr{\log\br{1-D_\vw\br{s,a}}}
\end{equation}
as the regularizer into the policy optimization objective.
In the ablation study (\Secref{sec:exp_ablation}), we compare our JSD implementation with minimizing the dual form of Wasserstein-$1$ distance, a special instance of IPM.

\textbf{Critic Training.} Motivated by \citet{bcq2019} and \citet{bear2019}, we use the following critic target:
\begin{equation} \label{eq:q_tilde}
\resizebox{.47\textwidth}{!}{%
$
     \begin{aligned}
         &\widetilde{Q}\br{s, a} \triangleq  r(s,a).\mathrm{clamp}(r_{\mathrm{min}}, r_{\mathrm{max}}) + \\
         & \gamma \E_{a'\sim \pi_{\vphi'}\br{\cdot \given s'}}\sbr{c \min_{j=1,2}Q_{\vtheta'_j}\br{s', a'} + (1-c) \max_{j=1,2}Q_{\vtheta'_j}\br{s', a'}} %
     \end{aligned}
$
}
\end{equation}
with $c=0.75$ as in prior work, $r_{\mathrm{min}} = r_0 - 3 \sigma_r$ and $r_{\mathrm{max}} = r_1 + 3\sigma_r$, where $r_0, r_1, \sigma_r$ respectively denote the minimum, maximum, and standard deviation of the rewards in the offline dataset.
Here we adopt similar 
reward-clipping strategy as \citet{wmopo2021} to mitigate the inaccuracy in the reward model.
Each of the double critic networks is trained to minimize the critic loss over $\br{s, a} \in \gB \sim \gD$, {\it i.e.,} $\forall\, j=1,2$,
\begin{equation}\label{eq:critic_target_main}\textstyle
\resizebox{.425\textwidth}{!}{%
$
    \min_{\vtheta_j}\frac{1}{\abs{\gB}}\sum_{\br{s, a} \in \gB} \mathrm{Huber}\br{ Q_{\vtheta_j}\br{s, a} - \widetilde{Q}(s, a)},
    $
}
\end{equation}
where we replace the usual mean-square-error with the Huber loss $\mathrm{Huber}(\cdot)$ for training stability.

\textbf{Policy Training.} We follow \citet{bear2019} to define the policy optimization objective as
\begin{equation} \label{eq:policy_target}
    \arg\min_{\vphi} -\lambda \cdot\frac{1}{\abs{\gB}} \sum_{s \in \gB, a \sim \pi_\vphi(\cdot \given s)}\sbr{\min_{j=1,2} Q_{\vtheta_j}(s, a)} + \gL_g(\vphi),
\end{equation}
where the regularization coefficient $\lambda$ is constructed similar to \citet{td3bc2021} as $$\lambda = \alpha/Q_{avg},$$ with soft-updated $Q_{avg}$.
We use $\alpha=10$ across all datasets in our main results in \Secref{sec:exp_main}.

\textbf{Implicit Policy.} To better approximate the maximizer of the action-value function, instead of the classical deterministic or Gaussian policy, in this paper we use a richer function class, the class of implicit distributions, to define our policy.
Specifically, similar to the generator in the conditional GAN (CGAN, \citet{cgan2014}), our implicit policy uses a deterministic neural network to transform a given noise distribution $p_z(z)$ into the state-conditional action distribution. 
Concretely, given state $s$, 
\begin{equation*} \textstyle 
    a \sim \pi_\vphi(\cdot \given s) = \pi_\vphi(s, z), \text{ with } z \overset{iid}{\sim} p_z(z),
\end{equation*}
where $\pi_\vphi$ is a deterministic network.
\Secref{sec:exp_toy} conducts a toy experiment to show the efficacy of the implicit policy, particularly when fitting distributions with multimodality.

We summarize the main steps of our method Algorithm~\ref{alg:simple}, which is implemented by approximately minimizing JSD via GAN and is dubbed as Stationary Distribution Matching via GAN (SDM-GAN).
Further implementation details are provided in Appendix \ref{sec:algo_details}.

\begin{algorithm}[t]
\caption{SDM-GAN, Main Steps}
\begin{algorithmic}
\label{alg:simple}
\STATE Initialize dynamic model $\widehat P$, policy network $\pi_{\vphi}$, critic network $Q_{\vtheta_1}$ and $Q_{\vtheta_2}$, discriminator network $D_\vw$.
\STATE Train $\widehat P$ using maximum likelihood estimation Eq. \eqref{eq:model_objective}
\FOR{each iteration}
\STATE Rollout $\pi_\vphi$ on $\widehat P$, add the synthetic data into $\dmodel$.
\STATE Sample mini-batch $\gB \sim \gD = f\denv + (1-f) \dmodel$. 
\STATE Get critic target via Eq.~\eqref{eq:q_tilde} and train critics by Eq.~\eqref{eq:critic_target_main}.
\STATE Construct $\gB_{\mathrm{fake}}$ via Eq.~\eqref{eq:fake_sample} and sample $\gB_{\mathrm{true}}\sim \denv$.
\STATE Optimize the discriminator $D_\vw$ using Eq.~\eqref{eq:dis_objective}.
\STATE Calculate generator loss $\gL_g(\vphi)$ using Eq.~\eqref{eq:generator_loss}.
\STATE Optimize policy network $\pi_{\vphi}$ by Eq.~\eqref{eq:policy_target}.
\ENDFOR
\end{algorithmic}
\end{algorithm}

\section{Related Work} \label{sec:related_work}
\textbf{Model-free Offline RL.}
This paper is related to a classical yet persistent idea in model-free offline RL, {\it i.e.}, matching the learned policy with the behavior policy so that the learned policy mostly visits state-action pairs similar to the offline dataset.
In literature, several designs have been proposed for this purpose, such as constraining the learned policy towards an estimated behavior policy \citep{bear2019,brac2019,offlinerldialog2019}, a tactful decomposition of the action or the action-value function into a behavior-cloning part and an offset \citep{bcq2019, fisherbrc2021}, modification of the critic-learning objective \citep{algaedice2019,cql2020}.
Our paper adds to the literature by directly matching the undiscounted stationary distribution of the learned policy towards the offline dataset. 
To achieve this, we develop the undiscounted version of the change of variable trick to derive a tractable regularizer for policy optimization, which has been explored only in the discounted case \citep{dualdice2019} in the off-policy evaluation (OPE) literature \citep{jiang16doubly,breakingcurse2018,gendice2020}.
Different from the goal of estimating the performance of some policies using offline data, our goal here is to construct a tractable regularization for policy optimization, which involves a learned transition model trained by MLE.
{\citet{jointmatching2022} recently propose the design of an implicit policy via the CGAN structure, together with several additional enhancing techniques. Our paper improves on this prior work by adopting a model-based approach, which facilitates regularizer construction and mitigates distribution mismatch.}

\textbf{Model-based Offline RL.}
Similar notion of guiding the policy away from the OOD state-action pairs also persists in model-based offline RL.
In literature, such guidance has been implemented by model-output-based uncertainty quantification \citep{mopo2020, morel2020}, estimation of a conservative action-value function \citep{combo2021}, and regularizing the model-based policy learning with a possibly-adaptive behavior prior \citep{mabe2021,bremen2021}.
Recently, techniques in the OPE literarture has been adopted into policy training and model learning.
\citet{wmopo2021} propose an EM-style iterative training of model and policy, with a density-ratio-based weighted model estimation.
Similar to our work, \citet{repbsde2021} utilize the change of variable technique to train the dynamic, which can potentially improve 
the quality of the representations of the dynamic during the model training. 
By contrast, our paper uses the model-generated rollouts to implement a tractable bound for directly constraining the undiscounted stationary distribution of the learned policy towards the offline dataset during the policy optimization step.
For algorithmic simplicity, our paper does not apply the technique of uncertainty quantification, learnable behavior prior, advanced design of the dynamic model via the GPT-Transformer \citep{gpt2018,fan2020bayesian,trajectorytransformer2021, zhang2022allsh}, {\it etc.}
These are orthogonal to our method and may further improve the empirical performance.

\section{Experiments} \label{sec:experiment}
\begin{figure}[t]
     \centering
     \begin{subfigure}[b]{0.11\textwidth}
         \centering
         \includegraphics[width=\textwidth]{./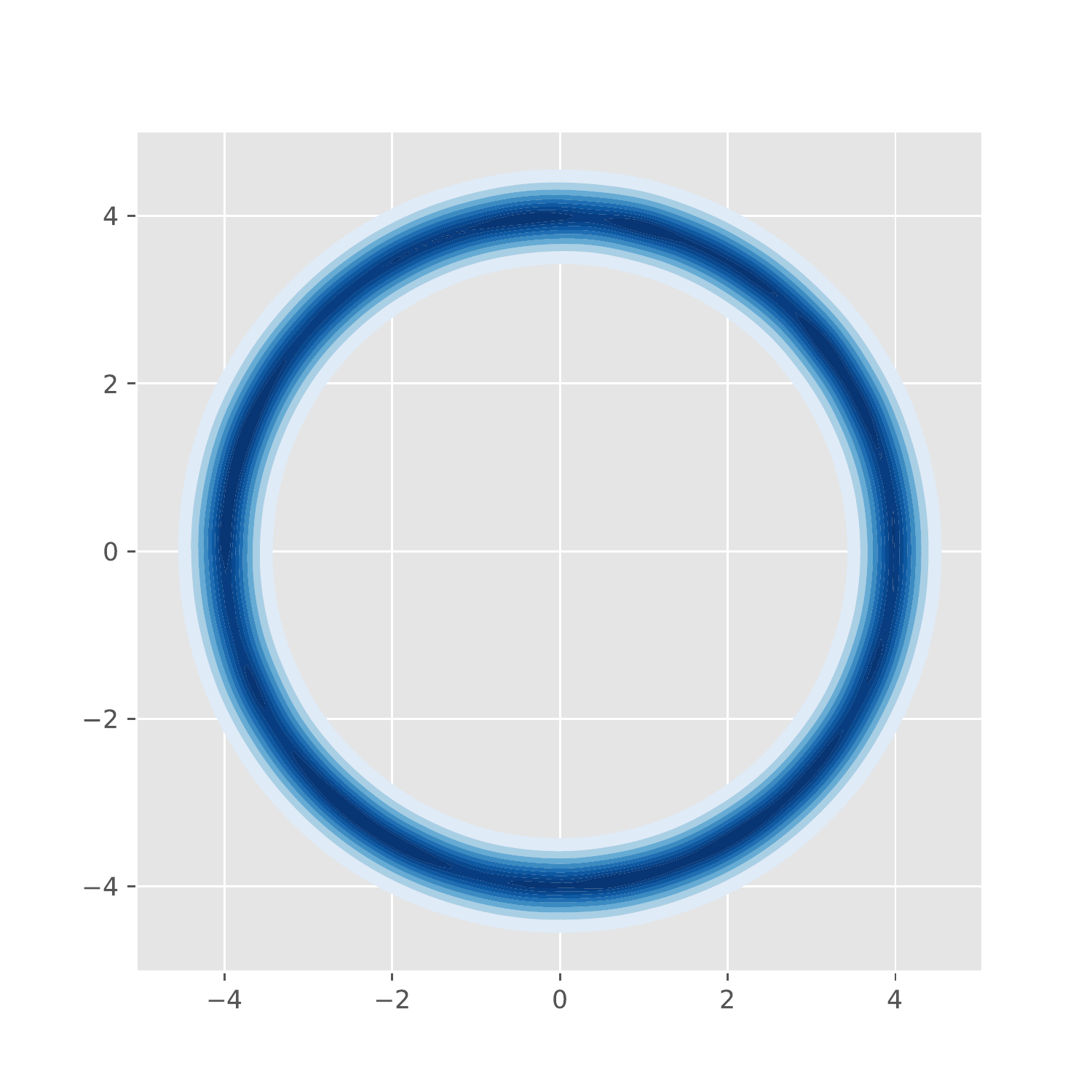}
         \caption{Truth}
         \label{fig:toy_truth}
     \end{subfigure}
     \hfill
     \begin{subfigure}[b]{0.11\textwidth}
         \centering
         \includegraphics[width=\textwidth]{./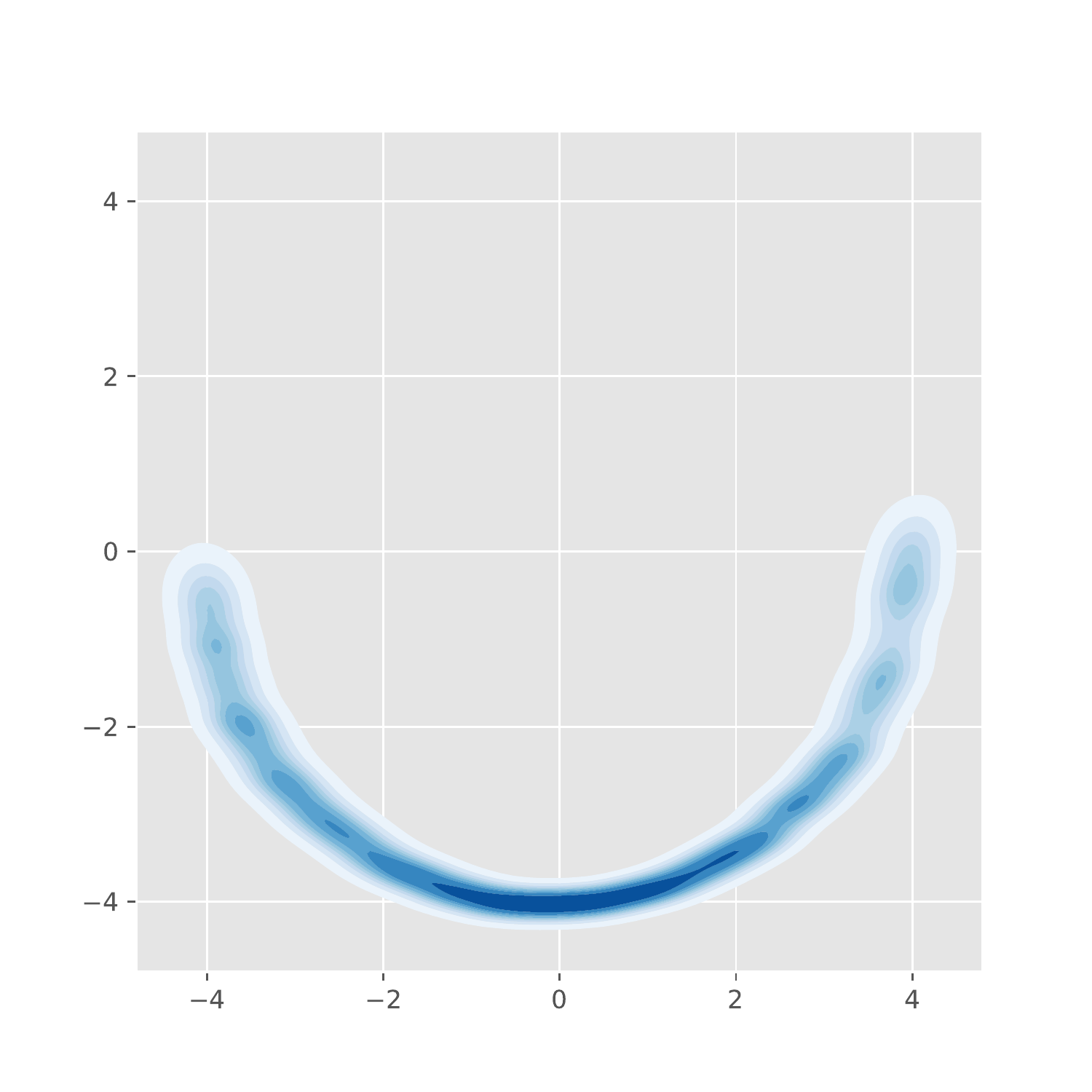}
         \caption{D-CGAN}
         \label{fig:toy_dcgan}
     \end{subfigure}
     \hfill
     \begin{subfigure}[b]{0.11\textwidth}
         \centering
         \includegraphics[width=\textwidth]{./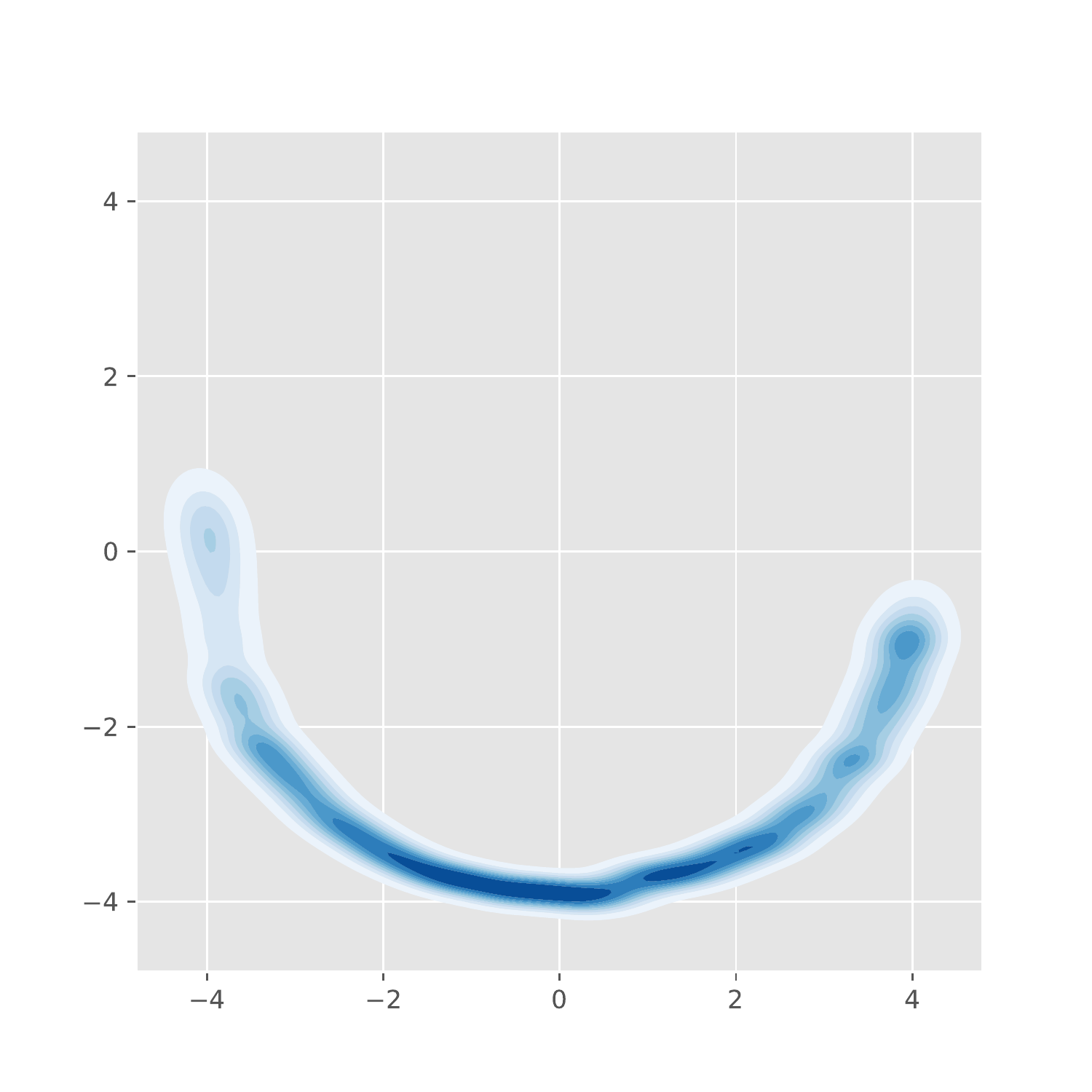}
         \caption{G-CGAN}
         \label{fig:toy_gcgan}
     \end{subfigure}
     \hfill
     \begin{subfigure}[b]{0.11\textwidth}
         \centering
         \includegraphics[width=\textwidth]{./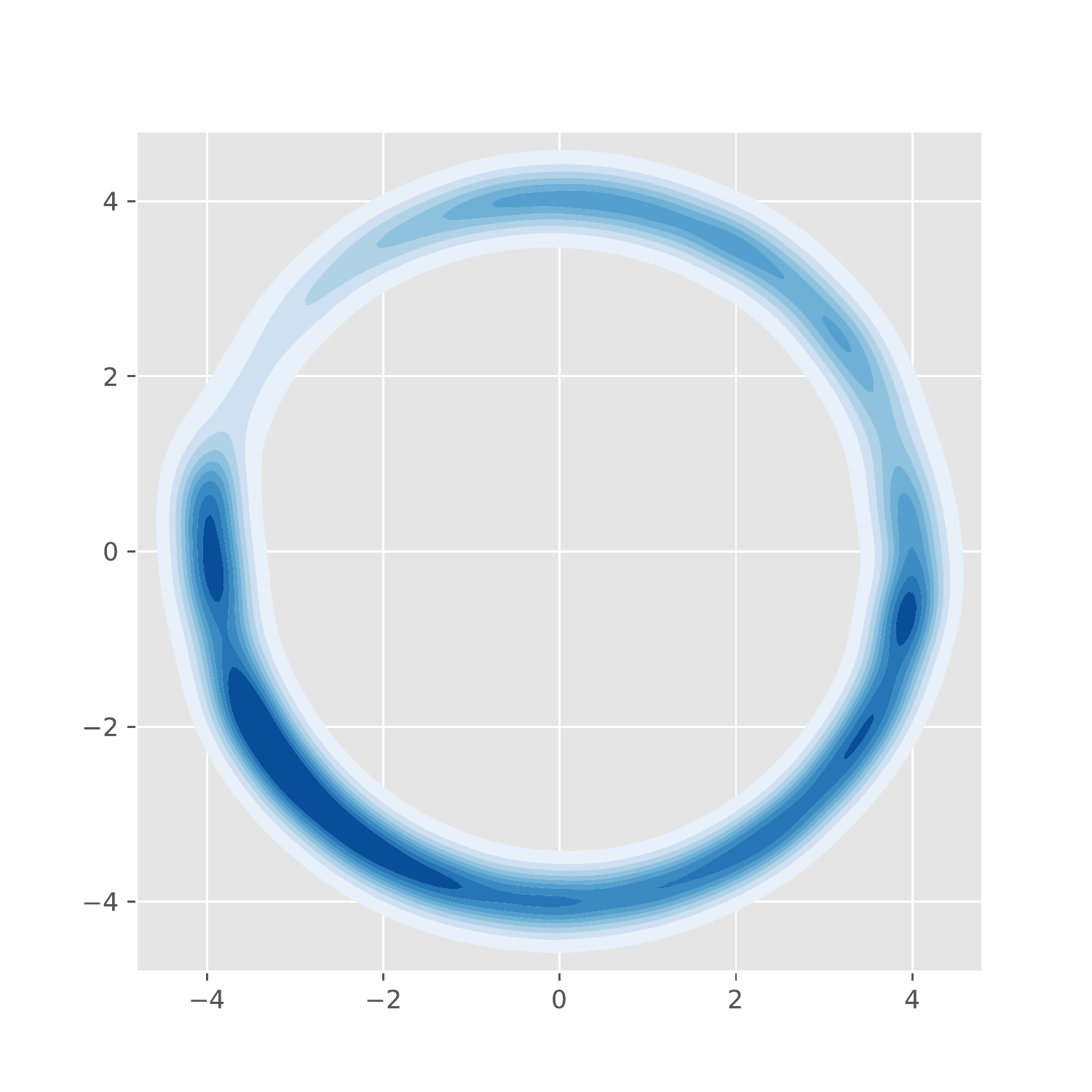}
         \caption{CGAN}
         \label{fig:toy_cgan}
     \end{subfigure}
     \vspace{-3mm}
        \caption{ 
        Performance of approximating the behavior policy on the circle dataset. 
        A deterministic-generator conditional GAN (``D-CGAN"), a Gaussian-generator conditional GAN (“G-CGAN”) and a conditional GAN (``CGAN") are fitted using the classical policy (conditional distribution) matching approach similar to \citet{brac2019}.
        More details are provided in Appendix \ref{sec:toy_details}.
        }
        \label{fig:toy}
        \vspace{-6.5mm}
\end{figure}

In what follows, we first show the effectiveness of an implicit policy through a toy example (\Secref{sec:exp_toy}).
We then present an empirical evaluation of our algorithm (\Secref{sec:exp_main}), followed by an ablation study (\Secref{sec:exp_ablation}).
{Source code for the experiments is publicly \href{https://github.com/Shentao-YANG/SDM-GAN_ICML2022}{available}.}


\subsection{Toy Experiments}\label{sec:exp_toy}

As a motivation to train a flexible implicit policy instead of the classical deterministic or Gaussian policy, we conduct a toy behavior-cloning experiment, as shown on Figure~\ref{fig:toy}.
We remark that the simplest form of offline behavior cloning is essentially a supervised learning task, with the reward function and the termination condition being unnecessary.
Throughout Figure~\ref{fig:toy}, we use the $x$-axis to represent the state and $y$-axis the action.
Figure~\ref{fig:toy_truth} plots the state-action distribution that we seek to clone, which presents multimodality on almost all states.
Such an offline dataset may come from multiple data-collecting policies or from the multiple maxima of the action-value function on the action space.
We hence expect the learned policy to capture the targeted state-action distribution, which can translate into a better capture of the rewarding actions and an improved generalization beyond the static dataset in offline RL tasks.
Figures~\ref{fig:toy_dcgan}-\ref{fig:toy_cgan} plot the state-action distributions of the learned policies on the testset, in which we compare the implicit policy implemented by conditional GAN (Figure~\ref{fig:toy_cgan}) with its variants of changing the implicit generator to deterministic transformation (Figure~\ref{fig:toy_dcgan}) and to the Gaussian generator (Figure~\ref{fig:toy_gcgan}).
Experimental details are discussed in Appendix~\ref{sec:toy_details}.

We see from Figures~\ref{fig:toy_dcgan} and \ref{fig:toy_gcgan} that both the deterministic and the Gaussian policy fail to capture necessary action modes, which directly relates to their less flexible structures.
By contrast, as shown in Figure~\ref{fig:toy_cgan}, the implicit policy defined by conditional GAN does capture all the action modes on each state and fits well onto the targeted distribution.
This toy example motivates our algorithmic design of training an implicit policy via the conditional GAN structure.

\subsection{Main Results}\label{sec:exp_main}
\begin{table*}[h] 
 \vspace{-1mm}
\caption{
Normalized returns for the experiments on the D4RL datasets.
We use the ``v0” version of the datasets in the Gym-MuJoCo and Adroit domains.
On the Gym-Mojoco domain, we bold both the highest score over all algorithms and the highest score of model-based  algorithms. 
On the Maze2D and Adroit tasks, we bold the best scores. 
} 
\label{table:main} 
\centering 
\def\arraystretch{1.}
\resizebox{\textwidth}{!}{
\begin{tabular}{lccccccccc}
\toprule
                 Task Name &  aDICE &                               CQL &                        FisherBRC &                           TD3+BC &                          OptiDICE &                                     MOPO &                                    COMBO &                                WMOPO &                           SDM-GAN \\
\midrule
        halfcheetah-medium &   -2.2 &    39.0 $\pm$ {\footnotesize 0.8} &   41.1 $\pm$ {\footnotesize 0.6} &   43.0 $\pm$ {\footnotesize 0.5} &    38.2 $\pm$ {\footnotesize 0.5} &           47.2 $\pm$ {\footnotesize 1.0} &           53.7 $\pm$ {\footnotesize 2.1} &           \textbf{55.6} $\pm$ {\footnotesize 1.3} &    42.5 $\pm$ {\footnotesize 0.5} \\
           walker2d-medium &    0.3 &   60.2 $\pm$ {\footnotesize 30.8} &   \textbf{78.4} $\pm$ {\footnotesize 1.8} &   77.3 $\pm$ {\footnotesize 4.0} &   14.3 $\pm$ {\footnotesize 15.0} &            0.0 $\pm$ {\footnotesize 0.1} &          40.9 $\pm$ {\footnotesize 28.9} &          22.7 $\pm$ {\footnotesize 27.7} &    \textbf{66.7} $\pm$ {\footnotesize 1.8} \\
             hopper-medium &    1.2 &   34.5 $\pm$ {\footnotesize 11.7} &   99.2 $\pm$ {\footnotesize 0.3} &   \textbf{99.6} $\pm$ {\footnotesize 0.6} &   92.3 $\pm$ {\footnotesize 16.9} &           23.4 $\pm$ {\footnotesize 7.2} &          51.8 $\pm$ {\footnotesize 32.8} &          \textbf{66.5} $\pm$ {\footnotesize 46.0} &   62.8 $\pm$ {\footnotesize 14.3} \\
 halfcheetah-medium-replay &   -2.1 &    43.4 $\pm$ {\footnotesize 0.8} &   43.2 $\pm$ {\footnotesize 1.3} &   41.9 $\pm$ {\footnotesize 2.0} &    39.8 $\pm$ {\footnotesize 0.8} &           \textbf{52.5} $\pm$ {\footnotesize 1.4} &           51.8 $\pm$ {\footnotesize 1.6} &           51.8 $\pm$ {\footnotesize 5.6} &    41.7 $\pm$ {\footnotesize 0.4} \\
    walker2d-medium-replay &    0.6 &    16.4 $\pm$ {\footnotesize 6.6} &  38.4 $\pm$ {\footnotesize 16.6} &   24.6 $\pm$ {\footnotesize 6.7} &    20.2 $\pm$ {\footnotesize 5.8} &          51.9 $\pm$ {\footnotesize 15.8} &          14.2 $\pm$ {\footnotesize 11.9} &          \textbf{54.8} $\pm$ {\footnotesize 12.3} &    20.3 $\pm$ {\footnotesize 4.0} \\
      hopper-medium-replay &    1.1 &    29.5 $\pm$ {\footnotesize 2.3} &   33.4 $\pm$ {\footnotesize 2.8} &   31.4 $\pm$ {\footnotesize 2.7} &    29.0 $\pm$ {\footnotesize 4.9} &          47.1 $\pm$ {\footnotesize 16.2} &           34.5 $\pm$ {\footnotesize 2.0} &           \textbf{93.9} $\pm$ {\footnotesize 1.9} &    30.6 $\pm$ {\footnotesize 2.8} \\
 halfcheetah-medium-expert &   -0.8 &   34.5 $\pm$ {\footnotesize 15.8} &   \textbf{92.5} $\pm$ {\footnotesize 8.5} &   90.1 $\pm$ {\footnotesize 6.9} &   91.2 $\pm$ {\footnotesize 16.6} &           \textbf{92.1} $\pm$ {\footnotesize 8.3} &          90.0 $\pm$ {\footnotesize 10.5} &          42.7 $\pm$ {\footnotesize 13.0} &    89.1 $\pm$ {\footnotesize 6.6} \\
    walker2d-medium-expert &    0.4 &   79.8 $\pm$ {\footnotesize 22.7} &  \textbf{98.2} $\pm$ {\footnotesize 13.1} &  96.1 $\pm$ {\footnotesize 15.8} &   67.1 $\pm$ {\footnotesize 30.2} &          36.0 $\pm$ {\footnotesize 49.6} &          61.3 $\pm$ {\footnotesize 36.1} &          48.6 $\pm$ {\footnotesize 37.0} &    \textbf{97.9} $\pm$ {\footnotesize 4.9} \\
      hopper-medium-expert &    1.1 &  103.5 $\pm$ {\footnotesize 20.2} &  112.3 $\pm$ {\footnotesize 0.3} &  111.9 $\pm$ {\footnotesize 0.3} &  101.8 $\pm$ {\footnotesize 18.5} &           27.8 $\pm$ {\footnotesize 3.6} &          \textbf{112.6} $\pm$ {\footnotesize 1.8} &          97.8 $\pm$ {\footnotesize 19.3} &   104.5 $\pm$ {\footnotesize 5.4} \\
              maze2d-large &   -0.1 &   43.7 $\pm$ {\footnotesize 18.6} &   -2.1 $\pm$ {\footnotesize 0.4} &  87.6 $\pm$ {\footnotesize 15.4} &  130.7 $\pm$ {\footnotesize 56.1} &  - &  - &  - &  \textbf{207.7} $\pm$ {\footnotesize 11.7} \\
             maze2d-medium &   10.0 &    30.7 $\pm$ {\footnotesize 9.8} &   4.6 $\pm$ {\footnotesize 20.4} &  59.1 $\pm$ {\footnotesize 47.7} &  \textbf{140.8} $\pm$ {\footnotesize 44.0} &  - &  - &  - &  115.4 $\pm$ {\footnotesize 34.2} \\
              maze2d-umaze &  -15.7 &    50.5 $\pm$ {\footnotesize 7.9} &  -2.3 $\pm$ {\footnotesize 17.9} &  13.8 $\pm$ {\footnotesize 22.8} &  \textbf{107.6} $\pm$ {\footnotesize 33.1} &  - &  - &  - &   36.1 $\pm$ {\footnotesize 28.4} \\
                 pen-human &   -3.3 &    2.1 $\pm$ {\footnotesize 13.7} &    0.0 $\pm$ {\footnotesize 3.9} &   -1.7 $\pm$ {\footnotesize 3.8} &    -0.1 $\pm$ {\footnotesize 5.6} &  - &  - &  - &    \textbf{17.8} $\pm$ {\footnotesize 1.7} \\
                pen-cloned &   -2.9 &     1.5 $\pm$ {\footnotesize 6.2} &   -2.0 $\pm$ {\footnotesize 0.8} &   -2.4 $\pm$ {\footnotesize 1.4} &     1.4 $\pm$ {\footnotesize 6.8} &  - &  - &  - &    \textbf{40.6} $\pm$ {\footnotesize 6.1} \\
                pen-expert &   -3.5 &   95.9 $\pm$ {\footnotesize 18.1} &  31.6 $\pm$ {\footnotesize 24.4} &  32.4 $\pm$ {\footnotesize 24.3} &    -1.1 $\pm$ {\footnotesize 4.7} &  - &  - &  - &  \textbf{135.8} $\pm$ {\footnotesize 11.7} \\
               door-expert &    0.0 &   87.9 $\pm$ {\footnotesize 21.6} &  57.6 $\pm$ {\footnotesize 37.7} &   -0.3 $\pm$ {\footnotesize 0.0} &   87.9 $\pm$ {\footnotesize 25.8} &  - &  - &  - &    \textbf{93.5} $\pm$ {\footnotesize 6.7} \\
\bottomrule
\end{tabular}
}
\vspace{-4mm}
\end{table*}

As discussed in \Secref{sec:main_method_details}, we implement our algorithm by approximately matching the JSD between the stationary distributions via the conditional GAN, which we dub as Stationary Distribution Matching via GAN (SDM-GAN).
We compare our algorithm with three state-of-the-art (SOTA) model-free offline-RL algorithms: CQL \citep{cql2020}, FisherBRC \citep{fisherbrc2021}, and TD3+BC \citep{td3bc2021}; three SOTA model-based offline-RL algorithms: MOPO \citep{mopo2020}, COMBO \citep{combo2021}, and WMOPO \citep{wmopo2021};
{and two ``DICE-style" algorithms: AlgaeDICE \citep{algaedice2019} and OptiDICE \citep{optidice2021}}.
For WMOPO, we use the version of $\alpha = 0.2$, which is the paper's proposal.
Experimental details and hyperparameter setups are discussed in detail in Appendix~\ref{sec:rl_exp_details}.
We remark that unlike our algorithm, the three model-based RL baselines assume known termination function and possibly also known reward function, which potentially limits their applicability.
{
Except AlgaeDICE, we rerun each baseline method using the official source code under the recommended hyperparameters.
The results of AlgaeDICE (``aDICE") is obtained from the D4RL whitepaper \citep{fu2021d4rl}.
}
We compare our algorithm with the baselines on a diverse set of continuous-control offline-RL datasets (discussed in detail in Appendix~\ref{sec:rl_exp_details}), ranging across the Gym-Mojoco, Maze2D, and Adroit domains in the D4RL benchmark.
Note that the three model-based RL baselines do not provide hyperparameter configurations for the Maze2D and Adroit domains, and therefore we do not evaluate them on these datasets.
Table~\ref{table:main} shows the mean and standard deviation of each algorithm on each tested dataset.

In Table~\ref{table:main}, our algorithm shows competitive and relatively-stable performance across a diverse array of datasets.
Specifically, compared to the baselines, our algorithm shows better and more robust performance on the two task domains that are traditionally considered as challenging in offline-RL  \citep{fu2021d4rl}: 
the Adroit domain featured by high dimensionality and narrow data-distribution; and the Maze2D domain collected by non-Markovian policies.
On these two domains, behavioral-cloning (BC) based algorithms, such as FisherBRC and TD3+BC, are likely to fail. 
Concretely, on the Adroit datasets, a main difficulty is estimating the behavior policy and regularizing the current policy towards it in a high-dimensional space with limited data. 
On the Maze2D datasets, policy learning is challenged by the error of using a Markovian policy to clone the non-Markovian data-collecting policy.
The complexity of these datasets and underlying environments may also call for a more flexible policy class, and thus excels our design of an implicit policy over the classical deterministic or Gaussian policy.
On the Gym-Mojoco domain, our algorithm is able to learn from median-quality datasets and from samples collected by a mixture of data-collecting policies, which are closer to the real-world settings.

{
Comparing SDM-GAN with AlgaeDICE and OptiDICE reveals the overall benefit of our approach over the ``DICE-style" method.
As an example, OptiDICE requires BC, which is implemented by a Gaussian-mixture policy with several components.
While this BC strategy may be beneficial on relatively-lower dimensional Maze2D datasets, it may not generalize onto the higher-dimensional Adroit datasets. 
This may be a direct result of the previously-stated difficulty of BC-based methods on the Adroit datasets.
}

Additionally, we notice that some of the designs in these SOTA baselines are orthogonal to our algorithm, such as a tactful estimation of the action-value function (CQL, FisherBRC, COMBO), an incorporation of uncertainty quantification into the estimated dynamic (MOPO), and a density-ratio-based weighted model estimation (WMOPO).
These designs may be easily incorporated into our algorithm, leaving future work and potentials for further improvement.

\subsection{Ablation Study} \label{sec:exp_ablation}

\begin{figure*}[ht]
     \begin{subfigure}[b]{0.33\textwidth}
         \centering
         \includegraphics[width=\textwidth]{./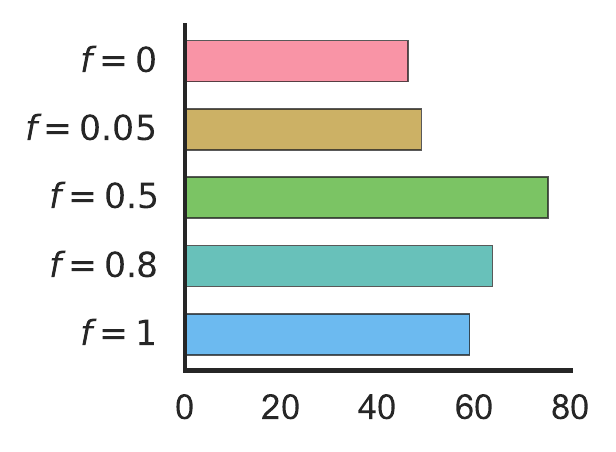}
         \caption{Varying $f \in \cbr{0, 0.05, 0.5, 0.8, 1}$}
         \label{fig:varying_f}
     \end{subfigure}
     \hfill
     \begin{subfigure}[b]{0.33\textwidth}
         \centering
         \includegraphics[width=\textwidth]{./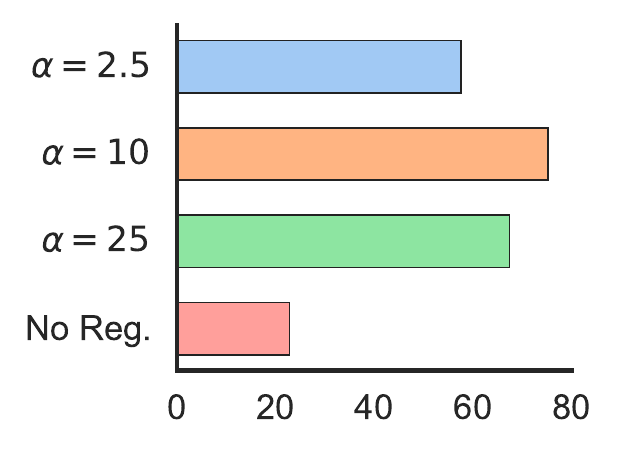}
         \caption{Varying $\alpha \in \cbr{2.5, 10, 25}$ and No Reg.}
         \label{fig:varying_alpha}
     \end{subfigure}
     \hfill
     \begin{subfigure}[b]{0.33\textwidth}
         \centering
         \includegraphics[width=\textwidth]{./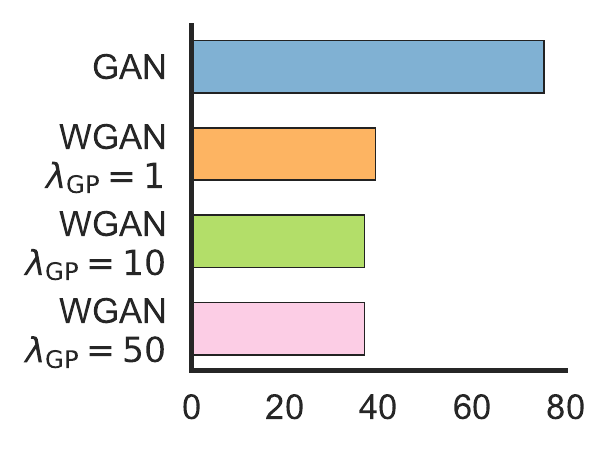}
         \caption{JSD {\it v.s.} IPM (Wasserstein-$1$ Dual)}
         \label{fig:jsd_imp}
     \end{subfigure}
     \vspace{-8mm}
        \caption{ 
        Average score ($x$-axis) over the tested datasets for each of the comparison in the ablation study (\Secref{sec:exp_ablation}).
        }
        \label{fig:ablation}
        \vspace{-6mm}
\end{figure*}

In the ablation study we seek to answer the following questions:
\textbf{(a)} Does the rollout dataset $\dmodel$ help the performance?
\textbf{(b)} Is our algorithm sensitive to the choice of the regularization coefficient $\alpha$?
\textbf{(c)} Does approximately matching the JSD between the stationary distributions perform better than matching the IPM?
Unless stated otherwise, experimental details for all algorithmic variants follow the main results (stated in Appendix~\ref{sec:rl_exp_details}).

\textbf{(a).} To investigate whether rollout data are helpful for policy training, we follow \citet{combo2021} to compare our algorithm used in \Secref{sec:exp_main} ($f=0.5$) with its variant of fewer and more rollout data, respectively $f=0.8$ and $f=0.05$, with all other settings kept the same.
{Additionally, we consider the variants of no rollout data ($f=1$) and only rollout data ($f=0$).}

Figure~\ref{fig:varying_f} plots the average score over the tested datasets for each variant, with detail comparison provided in Table~\ref{table:f_0508}.
On $11$ out of $16$ datasets, the $f=0.5$ version has higher mean returns than the $f=0.8$ version, showing the efficacy of using more rollout data.
This empirical result aligns with our intuition that rollout data can better approximate the stationary state-action distribution of the learned policy, especially when the learned policy  deviates from the behavior.
{The efficacy of model-based rollout is further corroborated by the generally worse performance of the $f=1$ variant, compared with $f=0.8$ and $f=0.5$.
Nevertheless, using too-many rollout data can be harmful, as shown by the comparison between the $f=0$, $f=0.05$, and $f=0.5$ variants.
In Table~\ref{table:f_0508}, the scores for the $f=0$ and $f=0.05$ variants are generally lower than $f=0.5$, and the standard deviations relative to the scores are generally higher.
This comparison shows the gain of adding offline data for combating the inaccuracy in the estimated dynamic.}

\textbf{(b).} 
We test our algorithm on both a smaller and a larger value of the regularization coefficient $\alpha \in \cbr{2.5, 25}$, {together with the variant without the proposed regularization term (denoted as ``No Reg.")}.
Figure~\ref{fig:varying_alpha} plots the average scores and Table~\ref{table:alpha_sweep} presents the results.
We see that on a large portion of datasets, our method is relatively robust to the choice of $\alpha$, especially when using a larger value of $\alpha$.

A small value of $\alpha$, \textit{e.g.}, $\alpha=2.5$ in \citet{td3bc2021}, can be too conservative for our algorithm, limiting the improvement of the policy.
A larger value of $\alpha=25$, which puts more weight on policy optimization than the default $\alpha=10$, overall does not lead to significant deterioration in the performance, though the results do in general possess higher variance.
{
The inferior performance of the ``No Reg." variant shows the necessity of proper regularization, especially on the Maze2D and Adroit datasets. 
}

\textbf{(c).} 
We compare our default algorithm of approximately matching the JSD between the stationary distributions (SDM-GAN) with the variant of approximately matching the dual form of the Wasserstein-$1$ metric, which is an instance of the IPM \citep{ipm1997,ipm2009} and is dubbed as ``SDM-WGAN.''
SDM-WGAN is implemented by changing the CGAN structure in SDM-GAN with the WGAN-GP structure \citep{wgangp2017}.
We vary the strength of the Lipschitz-$1$ constraint $\lambda_{\mathrm{GP}}$ in WGAN-GP over $\lambda_{\mathrm{GP}} \in \cbr{1, 10, 50}$, with $10$ being the original default.
All other hyperparameters of SDM-WGAN remain the same as in SDM-GAN.
Detail results are presented in Table~\ref{table:jsdipm}.

From the plot of average scores, Figure~\ref{fig:jsd_imp}, we see that the performance of SDM-WGAN is relatively robust to the choice of $\lambda_{\mathrm{GP}}$.
Compared with SDM-GAN, the three variants of SDM-WGAN overall perform worse, and the performances exhibit relatively larger variation across datasets.
Furthermore, the performances of SDM-WGAN are less stable than SDM-GAN, as shown by the relatively larger variance in the results on many tested datasets.
We believe that a thorough guidance to the hyperparameter setups and training schemes for WGAN-GP could potentially improve the performance of SDM-WGAN.
Such guidance is currently missing from the literature.
The investigation onto other instances of IPM, such as the Maximum Mean Discrepancy \citep{kerneltwosampletest2012}, is left as future work.

\begin{table}[ht] 
 \vspace{-6mm}
\caption{
Normalized returns for comparing the default algorithm with variants of approximately matching the Wasserstein-$1$ dual via WGAN-GP.
We vary $\lambda_{\mathrm{GP}} \in \cbr{1, 10, 50}$.
Here ``med" denotes medium, ``rep" for replay, and ``exp" for expert.
} 
\label{table:jsdipm} 
\centering 
\def\arraystretch{1.}
\resizebox{.48\textwidth}{!}{
\begin{tabular}{lcccc}
\toprule
                 Tasks/SDM Variants &           GAN & $\lambda_{\mathrm{GP}} = 1$ & $\lambda_{\mathrm{GP}} = 10$ & $\lambda_{\mathrm{GP}} = 50$ \\
\midrule
              maze2d-umaze &   36.1 {\footnotesize $\pm$ 28.4} &                      26.7 {\footnotesize $\pm$ 42.2} &                       28.4 {\footnotesize $\pm$ 45.2} &                       56.7 {\footnotesize $\pm$ 15.0} \\
             maze2d-med &  115.4 {\footnotesize $\pm$ 34.2} &                       99.8 {\footnotesize $\pm$ 8.0} &                       75.1 {\footnotesize $\pm$ 10.9} &                       96.0 {\footnotesize $\pm$ 10.1} \\
              maze2d-large &  207.7 {\footnotesize $\pm$ 11.7} &                      26.2 {\footnotesize $\pm$ 30.0} &                       44.3 {\footnotesize $\pm$ 63.9} &                         1.5 {\footnotesize $\pm$ 9.5} \\
        halfcheetah-med &    42.5 {\footnotesize $\pm$ 0.5} &                       44.3 {\footnotesize $\pm$ 1.0} &                        45.5 {\footnotesize $\pm$ 0.2} &                        44.4 {\footnotesize $\pm$ 0.6} \\
           walker2d-med &    66.7 {\footnotesize $\pm$ 1.8} &                       9.6 {\footnotesize $\pm$ 12.7} &                       21.6 {\footnotesize $\pm$ 10.7} &                       11.7 {\footnotesize $\pm$ 12.7} \\
             hopper-med &   62.8 {\footnotesize $\pm$ 14.3} &                        9.8 {\footnotesize $\pm$ 9.2} &                         3.5 {\footnotesize $\pm$ 2.9} &                         2.7 {\footnotesize $\pm$ 3.4} \\
 halfcheetah-med-rep &    41.7 {\footnotesize $\pm$ 0.4} &                       49.3 {\footnotesize $\pm$ 0.9} &                        49.3 {\footnotesize $\pm$ 0.6} &                        48.6 {\footnotesize $\pm$ 0.5} \\
    walker2d-med-rep &    20.3 {\footnotesize $\pm$ 4.0} &                      16.2 {\footnotesize $\pm$ 10.7} &                         6.6 {\footnotesize $\pm$ 8.4} &                         8.0 {\footnotesize $\pm$ 6.2} \\
      hopper-med-rep &    30.6 {\footnotesize $\pm$ 2.8} &                       37.2 {\footnotesize $\pm$ 3.1} &                        33.3 {\footnotesize $\pm$ 4.2} &                        28.1 {\footnotesize $\pm$ 3.7} \\
 halfcheetah-med-exp &    89.1 {\footnotesize $\pm$ 6.6} &                       17.1 {\footnotesize $\pm$ 4.7} &                        15.9 {\footnotesize $\pm$ 3.9} &                        22.1 {\footnotesize $\pm$ 6.1} \\
    walker2d-med-exp &    97.9 {\footnotesize $\pm$ 4.9} &                      66.2 {\footnotesize $\pm$ 18.7} &                       56.6 {\footnotesize $\pm$ 23.9} &                        55.3 {\footnotesize $\pm$ 7.8} \\
      hopper-med-exp &   104.5 {\footnotesize $\pm$ 5.4} &                       25.5 {\footnotesize $\pm$ 2.5} &                        25.6 {\footnotesize $\pm$ 5.0} &                        21.3 {\footnotesize $\pm$ 6.3} \\
                 pen-human &    17.8 {\footnotesize $\pm$ 1.7} &                       27.4 {\footnotesize $\pm$ 2.6} &                        24.0 {\footnotesize $\pm$ 7.8} &                        18.7 {\footnotesize $\pm$ 2.9} \\
                pen-cloned &    40.6 {\footnotesize $\pm$ 6.1} &                      46.9 {\footnotesize $\pm$ 18.1} &                       44.0 {\footnotesize $\pm$ 17.4} &                       44.1 {\footnotesize $\pm$ 11.3} \\
                pen-expert &  135.8 {\footnotesize $\pm$ 11.7} &                      63.7 {\footnotesize $\pm$ 26.7} &                       67.1 {\footnotesize $\pm$ 20.7} &                       72.4 {\footnotesize $\pm$ 22.9} \\
               door-expert &    93.5 {\footnotesize $\pm$ 6.7} &                      63.5 {\footnotesize $\pm$ 27.4} &                       50.2 {\footnotesize $\pm$ 36.4} &                       59.1 {\footnotesize $\pm$ 28.8} \\
               \midrule
             Average Score &              75.2 &                                 39.3 &                                  36.9 &                                  36.9 \\
\bottomrule

\end{tabular}
}
\end{table}
\section{Conclusion} \label{sec:conclusion}
In this paper, we directly approach a central difficulty in offline RL, {\it i.e.}, the mismatch between the distribution of the offline dataset and the undiscounted stationary state-action distribution of the learned policy.
Specifically, we derive a tractable bound for the difference between the stationary distribution of the behavior policy in the underlying environmental dynamic and that of the learned policy {\it w.r.t.} the IPM.
Based on the theoretical derivation, we design a practical model-based algorithm that uses this tractable bound as a regularizer in the policy improvement step.
Our practical implementation is tested on a diverse array of continuous-control offline-RL datasets and shows competitive performance compared with several state-of-the-art model-based and model-free baselines.
Ablation study is also provided to validate some of the key algorithmic choices.

\section*{Acknowledgements}
The authors acknowledge the support of NSF IIS1812699 and ECCS-1952193, the support of a gift fund from ByteDance
Inc., and the Texas Advanced Computing Center (TACC)
for providing HPC resources that have contributed to the
research results reported within this paper.


\bibliography{reference}

\begin{thebibliography}{87}
\providecommand{\natexlab}[1]{#1}
\providecommand{\url}[1]{\texttt{#1}}
\expandafter\ifx\csname urlstyle\endcsname\relax
  \providecommand{\doi}[1]{doi: #1}\else
  \providecommand{\doi}{doi: \begingroup \urlstyle{rm}\Url}\fi

\bibitem[Agarwal et~al.(2020)Agarwal, Schuurmans, and Norouzi]{rem2020}
Agarwal, R., Schuurmans, D., and Norouzi, M.
\newblock An optimistic perspective on offline reinforcement learning.
\newblock In \emph{International Conference on Machine Learning}, pp.\
  104--114. PMLR, 2020.

\bibitem[Bellemare et~al.(2017)Bellemare, Dabney, and Munos]{c512017}
Bellemare, M.~G., Dabney, W., and Munos, R.
\newblock {A Distributional Perspective on Reinforcement Learning}.
\newblock In \emph{{International Conference on Machine Learning}}, 2017.

\bibitem[Binkowski et~al.(2018)Binkowski, Sutherland, Arbel, and
  Gretton]{mmdgangp2018}
Binkowski, M., Sutherland, D.~J., Arbel, M., and Gretton, A.
\newblock {Demystifying MMD GANs}.
\newblock \emph{ArXiv}, abs/1801.01401, 2018.

\bibitem[Cang et~al.(2021)Cang, Rajeswaran, Abbeel, and Laskin]{mabe2021}
Cang, C., Rajeswaran, A., Abbeel, P., and Laskin, M.
\newblock {Behavioral Priors and Dynamics Models: Improving Performance and
  Domain Transfer in Offline RL}.
\newblock \emph{ArXiv}, abs/2106.09119, 2021.

\bibitem[Casella \& Berger(2001)Casella and Berger]{casellaberger2001}
Casella, G. and Berger, R.
\newblock \emph{Statistical Inference}.
\newblock {Duxbury Resource Center}, June 2001.

\bibitem[Chen et~al.(2021)Chen, Lu, Rajeswaran, Lee, Grover, Laskin, Abbeel,
  Srinivas, and Mordatch]{decisiontrans2021}
Chen, L., Lu, K., Rajeswaran, A., Lee, K., Grover, A., Laskin, M., Abbeel, P.,
  Srinivas, A., and Mordatch, I.
\newblock {Decision Transformer: Reinforcement Learning via Sequence Modeling}.
\newblock \emph{ArXiv}, abs/2106.01345, 2021.

\bibitem[Chua et~al.(2018)Chua, Calandra, McAllister, and Levine]{pets2018}
Chua, K., Calandra, R., McAllister, R., and Levine, S.
\newblock {Deep Reinforcement Learning in a Handful of Trials using
  Probabilistic Dynamics Models}.
\newblock In \emph{{Advances in neural information processing systems}}, 2018.

\bibitem[Clavera et~al.(2018)Clavera, Rothfuss, Schulman, Fujita, Asfour, and
  Abbeel]{mbmpo2018}
Clavera, I., Rothfuss, J., Schulman, J., Fujita, Y., Asfour, T., and Abbeel, P.
\newblock {Model-Based Reinforcement Learning via Meta-Policy Optimization}.
\newblock \emph{{ArXiv}}, abs/1809.05214, 2018.

\bibitem[Dabney et~al.(2018)Dabney, Rowland, Bellemare, and Munos]{qrdqn2017}
Dabney, W., Rowland, M., Bellemare, M., and Munos, R.
\newblock Distributional reinforcement learning with quantile regression.
\newblock In \emph{Proceedings of the AAAI Conference on Artificial
  Intelligence}, volume~32, 2018.

\bibitem[Ernst et~al.(2005)Ernst, Geurts, and Wehenkel]{treerl2005}
Ernst, D., Geurts, P., and Wehenkel, L.
\newblock {Tree-Based Batch Mode Reinforcement Learning.}
\newblock \emph{{J. Mach. Learn. Res.}}, 6:\penalty0 503--556, 2005.

\bibitem[Fan et~al.(2020)Fan, Zhang, Chen, and Zhou]{fan2020bayesian}
Fan, X., Zhang, S., Chen, B., and Zhou, M.
\newblock Bayesian attention modules.
\newblock \emph{Advances in Neural Information Processing Systems},
  33:\penalty0 16362--16376, 2020.

\bibitem[Fan et~al.(2021)Fan, Zhang, Tanwisuth, Qian, and
  Zhou]{fan2021contextual}
Fan, X., Zhang, S., Tanwisuth, K., Qian, X., and Zhou, M.
\newblock Contextual dropout: An efficient sample-dependent dropout module.
\newblock \emph{arXiv preprint arXiv:2103.04181}, 2021.

\bibitem[Ferguson(1996)]{ferguson1996}
Ferguson, T.~S.
\newblock \emph{{A Course in Large Sample Theory}}.
\newblock {Chapman \& Hall}, London, 1996.

\bibitem[Fu et~al.(2019)Fu, Kumar, Soh, and Levine]{diagbottleneck2019}
Fu, J., Kumar, A., Soh, M., and Levine, S.
\newblock {Diagnosing Bottlenecks in Deep Q-learning Algorithms}.
\newblock In \emph{{International Conference on Machine Learning}}, 2019.

\bibitem[Fu et~al.(2020)Fu, Kumar, Nachum, Tucker, and Levine]{fu2021d4rl}
Fu, J., Kumar, A., Nachum, O., Tucker, G., and Levine, S.
\newblock D4rl: Datasets for deep data-driven reinforcement learning.
\newblock \emph{arXiv preprint arXiv:2004.07219}, 2020.

\bibitem[Fujimoto \& Gu(2021)Fujimoto and Gu]{td3bc2021}
Fujimoto, S. and Gu, S.~S.
\newblock {A Minimalist Approach to Offline Reinforcement Learning}.
\newblock \emph{{ArXiv}}, abs/2106.06860, 2021.

\bibitem[Fujimoto et~al.(2018)Fujimoto, van Hoof, and Meger]{td32018}
Fujimoto, S., van Hoof, H., and Meger, D.
\newblock {Addressing Function Approximation Error in Actor-Critic Methods}.
\newblock In Dy, J. and Krause, A. (eds.), \emph{{Proceedings of the 35th
  International Conference on Machine Learning}}, volume~80 of
  \emph{Proceedings of Machine Learning Research}, pp.\  1587--1596. PMLR,
  10--15 Jul 2018.

\bibitem[Fujimoto et~al.(2019)Fujimoto, Meger, and Precup]{bcq2019}
Fujimoto, S., Meger, D., and Precup, D.
\newblock {Off-Policy Deep Reinforcement Learning without Exploration}.
\newblock In Chaudhuri, K. and Salakhutdinov, R. (eds.), \emph{{Proceedings of
  the 36th International Conference on Machine Learning}}, volume~97 of
  \emph{Proceedings of Machine Learning Research}, pp.\  2052--2062. PMLR,
  09--15 Jun 2019.

\bibitem[Gilotte et~al.(2018)Gilotte, Calauz{\`e}nes, Nedelec, Abraham, and
  Doll{\'e}]{offlineabtest2018}
Gilotte, A., Calauz{\`e}nes, C., Nedelec, T., Abraham, A., and Doll{\'e}, S.
\newblock {Offline A/B Testing for Recommender Systems}.
\newblock \emph{Proceedings of the Eleventh ACM International Conference on Web
  Search and Data Mining}, 2018.

\bibitem[Goodfellow(2016)]{gantutorial2017}
Goodfellow, I.
\newblock Nips 2016 tutorial: Generative adversarial networks.
\newblock \emph{arXiv preprint arXiv:1701.00160}, 2016.

\bibitem[Goodfellow et~al.(2014)Goodfellow, Pouget-Abadie, Mirza, Xu,
  Warde-Farley, Ozair, Courville, and Bengio]{gan2014}
Goodfellow, I., Pouget-Abadie, J., Mirza, M., Xu, B., Warde-Farley, D., Ozair,
  S., Courville, A., and Bengio, Y.
\newblock {Generative Adversarial Nets}.
\newblock In Ghahramani, Z., Welling, M., Cortes, C., Lawrence, N., and
  Weinberger, K.~Q. (eds.), \emph{{Advances in Neural Information Processing
  Systems}}, volume~27. Curran Associates, Inc., 2014.

\bibitem[Gretton et~al.(2012)Gretton, Borgwardt, Rasch, Sch{\"o}lkopf, and
  Smola]{kerneltwosampletest2012}
Gretton, A., Borgwardt, K., Rasch, M., Sch{\"o}lkopf, B., and Smola, A.
\newblock {A Kernel Two-Sample Test}.
\newblock \emph{J. Mach. Learn. Res.}, 13:\penalty0 723--773, 2012.

\bibitem[Gulcehre et~al.(2021)Gulcehre, Colmenarejo, ziyu wang, Sygnowski,
  Paine, Zolna, Chen, Hoffman, Pascanu, and de~Freitas]{addressingextra2021}
Gulcehre, C., Colmenarejo, S.~G., ziyu wang, Sygnowski, J., Paine, T., Zolna,
  K., Chen, Y., Hoffman, M., Pascanu, R., and de~Freitas, N.
\newblock {Addressing Extrapolation Error in Deep Offline Reinforcement
  Learning}.
\newblock 2021.

\bibitem[Gulrajani et~al.(2017)Gulrajani, Ahmed, Arjovsky, Dumoulin, and
  Courville]{wgangp2017}
Gulrajani, I., Ahmed, F., Arjovsky, M., Dumoulin, V., and Courville, A.~C.
\newblock {Improved Training of Wasserstein GANs}.
\newblock In \emph{{Advances in neural information processing systems}}, 2017.

\bibitem[Haarnoja et~al.(2018)Haarnoja, Zhou, Hartikainen, Tucker, Ha, Tan,
  Kumar, Zhu, Gupta, Abbeel, and Levine]{sacnew2018}
Haarnoja, T., Zhou, A., Hartikainen, K., Tucker, G., Ha, S., Tan, J., Kumar,
  V., Zhu, H., Gupta, A., Abbeel, P., and Levine, S.
\newblock {Soft Actor-Critic Algorithms and Applications}.
\newblock \emph{ArXiv}, abs/1812.05905, 2018.

\bibitem[Hishinuma \& Senda(2021)Hishinuma and Senda]{wmopo2021}
Hishinuma, T. and Senda, K.
\newblock {Weighted model estimation for offline model-based reinforcement
  learning}.
\newblock In \emph{{Advances in neural information processing systems}}, 2021.

\bibitem[Janner et~al.(2019)Janner, Fu, Zhang, and Levine]{mbpo2019}
Janner, M., Fu, J., Zhang, M., and Levine, S.
\newblock {When to Trust Your Model: Model-Based Policy Optimization}.
\newblock \emph{{ArXiv}}, abs/1906.08253, 2019.

\bibitem[Janner et~al.(2021)Janner, Li, and Levine]{trajectorytransformer2021}
Janner, M., Li, Q., and Levine, S.
\newblock {Reinforcement Learning as One Big Sequence Modeling Problem}.
\newblock \emph{{ArXiv}}, abs/2106.02039, 2021.

\bibitem[Jaques et~al.(2019)Jaques, Ghandeharioun, Shen, Ferguson, Lapedriza,
  Jones, Gu, and Picard]{offlinerldialog2019}
Jaques, N., Ghandeharioun, A., Shen, J.~H., Ferguson, C., Lapedriza, {\`A}.,
  Jones, N.~J., Gu, S., and Picard, R.~W.
\newblock {Way Off-Policy Batch Deep Reinforcement Learning of Implicit Human
  Preferences in Dialog}.
\newblock \emph{ArXiv}, abs/1907.00456, 2019.

\bibitem[Jiang \& Li(2016)Jiang and Li]{jiang16doubly}
Jiang, N. and Li, L.
\newblock Doubly robust off-policy evaluation for reinforcement learning.
\newblock In \emph{Proceedings of the 23rd International Conference on Machine
  Learning}, pp.\  652--661, 2016.

\bibitem[Kallus \& Zhou(2020)Kallus and Zhou]{confoundingrobust2020}
Kallus, N. and Zhou, A.
\newblock {Confounding-robust policy evaluation in infinite-horizon
  reinforcement learning}.
\newblock \emph{{Advances in Neural Information Processing Systems}},
  33:\penalty0 22293--22304, 2020.

\bibitem[Kidambi et~al.(2020)Kidambi, Rajeswaran, Netrapalli, and
  Joachims]{morel2020}
Kidambi, R., Rajeswaran, A., Netrapalli, P., and Joachims, T.
\newblock {MOReL : Model-Based Offline Reinforcement Learning}.
\newblock \emph{ArXiv}, abs/2005.05951, 2020.

\bibitem[Kingma \& Ba(2014)Kingma and Ba]{adam2014}
Kingma, D.~P. and Ba, J.
\newblock {Adam: A Method for Stochastic Optimization}.
\newblock In \emph{{International Conference on Learning Representations}},
  2014.

\bibitem[Kostrikov et~al.(2021{\natexlab{a}})Kostrikov, Nair, and
  Levine]{iql2021}
Kostrikov, I., Nair, A., and Levine, S.
\newblock {Offline Reinforcement Learning with Implicit Q-Learning}.
\newblock \emph{{ArXiv}}, abs/2110.06169, 2021{\natexlab{a}}.

\bibitem[Kostrikov et~al.(2021{\natexlab{b}})Kostrikov, Tompson, Fergus, and
  Nachum]{fisherbrc2021}
Kostrikov, I., Tompson, J., Fergus, R., and Nachum, O.
\newblock {Offline Reinforcement Learning with Fisher Divergence Critic
  Regularization}.
\newblock In \emph{{International Conference on Machine Learning}},
  2021{\natexlab{b}}.

\bibitem[Kumar et~al.(2019)Kumar, Fu, Soh, Tucker, and Levine]{bear2019}
Kumar, A., Fu, J., Soh, M., Tucker, G., and Levine, S.
\newblock {Stabilizing Off-Policy Q-Learning via Bootstrapping Error
  Reduction}.
\newblock In \emph{{Advances in Neural Information Processing Systems}},
  volume~32. Curran Associates, Inc., 2019.

\bibitem[Kumar et~al.(2020)Kumar, Zhou, Tucker, and Levine]{cql2020}
Kumar, A., Zhou, A., Tucker, G., and Levine, S.
\newblock {Conservative Q-Learning for Offline Reinforcement Learning}.
\newblock In Larochelle, H., Ranzato, M., Hadsell, R., Balcan, M.~F., and Lin,
  H. (eds.), \emph{{Advances in Neural Information Processing Systems}},
  volume~33, pp.\  1179--1191. Curran Associates, Inc., 2020.

\bibitem[Kurutach et~al.(2018)Kurutach, Clavera, Duan, Tamar, and
  Abbeel]{metrpo2018}
Kurutach, T., Clavera, I., Duan, Y., Tamar, A., and Abbeel, P.
\newblock {Model-Ensemble Trust-Region Policy Optimization}.
\newblock \emph{{International Conference on Learning Representations}},
  abs/1802.10592, 2018.

\bibitem[Lange et~al.(2012)Lange, Gabel, and Riedmiller]{batchrl2012}
Lange, S., Gabel, T., and Riedmiller, M.
\newblock \emph{{Batch Reinforcement Learning}}, pp.\  45--73.
\newblock Springer Berlin Heidelberg, Berlin, Heidelberg, 2012.
\newblock ISBN 978-3-642-27645-3.
\newblock \doi{10.1007/978-3-642-27645-3_2}.

\bibitem[Laroche \& Trichelair(2019)Laroche and Trichelair]{sibb2019}
Laroche, R. and Trichelair, P.
\newblock {Safe Policy Improvement with Baseline Bootstrapping}.
\newblock In \emph{{International Conference on Machine Learning}}, 2019.

\bibitem[Lee et~al.(2021{\natexlab{a}})Lee, Lee, and Kim]{repbsde2021}
Lee, B.-J., Lee, J., and Kim, K.-E.
\newblock {Representation Balancing Offline Model-based Reinforcement
  Learning}.
\newblock In \emph{{International Conference on Learning Representations}},
  2021{\natexlab{a}}.

\bibitem[Lee et~al.(2021{\natexlab{b}})Lee, Jeon, Lee, Pineau, and
  Kim]{optidice2021}
Lee, J., Jeon, W., Lee, B.-J., Pineau, J., and Kim, K.-E.
\newblock {OptiDICE: Offline Policy Optimization via Stationary Distribution
  Correction Estimation}.
\newblock \emph{ArXiv}, abs/2106.10783, 2021{\natexlab{b}}.

\bibitem[Levine et~al.(2020)Levine, Kumar, Tucker, and Fu]{offlinetutorial2020}
Levine, S., Kumar, A., Tucker, G., and Fu, J.
\newblock Offline reinforcement learning: Tutorial, review, and perspectives on
  open problems.
\newblock \emph{arXiv preprint arXiv:2005.01643}, 2020.

\bibitem[Liao et~al.(2020)Liao, Qi, Klasnja, and Murphy]{liao2020batch}
Liao, P., Qi, Z., Klasnja, P., and Murphy, S.
\newblock Batch policy learning in average reward markov decision processes.
\newblock \emph{arXiv preprint arXiv:2007.11771}, 2020.

\bibitem[Lillicrap et~al.(2016)Lillicrap, Hunt, Pritzel, Heess, Erez, Tassa,
  Silver, and Wierstra]{ddpg2016}
Lillicrap, T., Hunt, J.~J., Pritzel, A., Heess, N., Erez, T., Tassa, Y.,
  Silver, D., and Wierstra, D.
\newblock {Continuous Control with Deep Reinforcement Learning}.
\newblock \emph{CoRR}, abs/1509.02971, 2016.

\bibitem[Lin(1991)]{jsd1991}
Lin, J.
\newblock {Divergence Measures Based on the Shannon Entropy}.
\newblock \emph{IEEE Transactions on Information theory}, 37:\penalty0
  145--151, 1991.

\bibitem[Lin(1992)]{replaybuffer1992}
Lin, L.-J.
\newblock {Self-Improving Reactive Agents Based on Reinforcement Learning,
  Planning and Teaching}.
\newblock \emph{Machine Learning}, 8\penalty0 (3--4):\penalty0 293--321, 1992.

\bibitem[Liu et~al.(2018)Liu, Li, Tang, and Zhou]{breakingcurse2018}
Liu, Q., Li, L., Tang, Z., and Zhou, D.
\newblock {Breaking the Curse of Horizon: Infinite-Horizon Off-Policy
  Estimation}.
\newblock In \emph{{Advances in neural information processing systems}}, 2018.

\bibitem[Lu et~al.(2021)Lu, Grover, Abbeel, and Mordatch]{lu2021lisp}
Lu, K., Grover, A., Abbeel, P., and Mordatch, I.
\newblock Reset-free lifelong learning with skill-space planning.
\newblock In \emph{9th International Conference on Learning Representations,
  {ICLR} 2021, Virtual Event, Austria, May 3-7, 2021}, 2021.

\bibitem[Matsushima et~al.(2021)Matsushima, Furuta, Matsuo, Nachum, and
  Gu]{bremen2021}
Matsushima, T., Furuta, H., Matsuo, Y., Nachum, O., and Gu, S.~S.
\newblock {Deployment-Efficient Reinforcement Learning via Model-Based Offline
  Optimization}.
\newblock \emph{{International Conference on Learning Representations}},
  abs/2006.03647, 2021.

\bibitem[Mirza \& Osindero(2014)Mirza and Osindero]{cgan2014}
Mirza, M. and Osindero, S.
\newblock {Conditional Generative Adversarial Nets}.
\newblock \emph{ArXiv}, abs/1411.1784, 2014.

\bibitem[Mnih et~al.(2013)Mnih, Kavukcuoglu, Silver, Graves, Antonoglou,
  Wierstra, and Riedmiller]{dqn2013}
Mnih, V., Kavukcuoglu, K., Silver, D., Graves, A., Antonoglou, I., Wierstra,
  D., and Riedmiller, M.
\newblock Playing atari with deep reinforcement learning.
\newblock \emph{arXiv preprint arXiv:1312.5602}, 2013.

\bibitem[Mnih et~al.(2015)Mnih, Kavukcuoglu, Silver, Rusu, Veness, Bellemare,
  Graves, Riedmiller, Fidjeland, Ostrovski, Petersen, Beattie, Sadik,
  Antonoglou, King, Kumaran, Wierstra, Legg, and Hassabis]{dqn2015}
Mnih, V., Kavukcuoglu, K., Silver, D., Rusu, A.~A., Veness, J., Bellemare,
  M.~G., Graves, A., Riedmiller, M., Fidjeland, A.~K., Ostrovski, G., Petersen,
  S., Beattie, C., Sadik, A., Antonoglou, I., King, H., Kumaran, D., Wierstra,
  D., Legg, S., and Hassabis, D.
\newblock {Human-level control through deep reinforcement learning}.
\newblock \emph{Nature}, 518\penalty0 (7540):\penalty0 529--533, February 2015.
\newblock ISSN 00280836.

\bibitem[M{\"u}ller(1997)]{ipm1997}
M{\"u}ller, A.
\newblock {Integral Probability Metrics and Their Generating Classes of
  Functions}.
\newblock \emph{Advances in Applied Probability}, 29:\penalty0 429--443, 1997.

\bibitem[Nachum et~al.(2019{\natexlab{a}})Nachum, Chow, Dai, and
  Li]{dualdice2019}
Nachum, O., Chow, Y., Dai, B., and Li, L.
\newblock {DualDICE: Behavior-Agnostic Estimation of Discounted Stationary
  Distribution Corrections}.
\newblock In \emph{{Advances in neural information processing systems}},
  2019{\natexlab{a}}.

\bibitem[Nachum et~al.(2019{\natexlab{b}})Nachum, Dai, Kostrikov, Chow, Li, and
  Schuurmans]{algaedice2019}
Nachum, O., Dai, B., Kostrikov, I., Chow, Y., Li, L., and Schuurmans, D.
\newblock {AlgaeDICE: Policy Gradient from Arbitrary Experience}.
\newblock \emph{ArXiv}, abs/1912.02074, 2019{\natexlab{b}}.

\bibitem[Nie et~al.(2019)Nie, Brunskill, and Wager]{whentotreat2019}
Nie, X., Brunskill, E., and Wager, S.
\newblock {Learning When-to-Treat Policies}.
\newblock \emph{Journal of the American Statistical Association}, 116:\penalty0
  392 -- 409, 2019.

\bibitem[Puterman(2014)]{markovdp1994}
Puterman, M.~L.
\newblock \emph{Markov decision processes: discrete stochastic dynamic
  programming}.
\newblock John Wiley \& Sons, 2014.

\bibitem[Radford \& Sutskever(2018)Radford and Sutskever]{gpt2018}
Radford, A. and Sutskever, I.
\newblock {Improving Language Understanding by Generative Pre-Training}.
\newblock In \emph{{arxiv}}, 2018.

\bibitem[Radford et~al.(2016)Radford, Metz, and Chintala]{dcgan2016}
Radford, A., Metz, L., and Chintala, S.
\newblock {Unsupervised Representation Learning with Deep Convolutional
  Generative Adversarial Networks}.
\newblock \emph{CoRR}, abs/1511.06434, 2016.

\bibitem[Rajeswaran et~al.(2018)Rajeswaran, Kumar, Gupta, Schulman, Todorov,
  and Levine]{adroit2018}
Rajeswaran, A., Kumar, V., Gupta, A., Schulman, J., Todorov, E., and Levine, S.
\newblock {Learning Complex Dexterous Manipulation with Deep Reinforcement
  Learning and Demonstrations}.
\newblock \emph{ArXiv}, abs/1709.10087, 2018.

\bibitem[Rajeswaran et~al.(2020)Rajeswaran, Mordatch, and Kumar]{gamembrl2020}
Rajeswaran, A., Mordatch, I., and Kumar, V.
\newblock {A Game Theoretic Framework for Model Based Reinforcement Learning}.
\newblock In \emph{{International Conference on Machine Learning}}, 2020.

\bibitem[Ross \& Bagnell(2012)Ross and Bagnell]{asi2012}
Ross, S. and Bagnell, J.~A.
\newblock {Agnostic System Identification for Model-Based Reinforcement
  Learning}.
\newblock \emph{{International Conference on Machine Learning}}, abs/1203.1007,
  2012.

\bibitem[Salimans et~al.(2016)Salimans, Goodfellow, Zaremba, Cheung, Radford,
  and Chen]{improvetechgan2016}
Salimans, T., Goodfellow, I., Zaremba, W., Cheung, V., Radford, A., and Chen,
  X.
\newblock {Improved Techniques for Training GANs}.
\newblock In \emph{{Advances in neural information processing systems}}, 2016.

\bibitem[Siegel et~al.(2020)Siegel, Springenberg, Berkenkamp, Abdolmaleki,
  Neunert, Lampe, Hafner, Heess, and Riedmiller]{abm2020}
Siegel, N.~Y., Springenberg, J.~T., Berkenkamp, F., Abdolmaleki, A., Neunert,
  M., Lampe, T., Hafner, R., Heess, N., and Riedmiller, M.
\newblock Keep doing what worked: Behavioral modelling priors for offline
  reinforcement learning.
\newblock \emph{arXiv preprint arXiv:2002.08396}, 2020.

\bibitem[Silver et~al.(2016)Silver, Huang, Maddison, Guez, Sifre, van~den
  Driessche, Schrittwieser, Antonoglou, Panneershelvam, Lanctot, Dieleman,
  Grewe, Nham, Kalchbrenner, Sutskever, Lillicrap, Leach, Kavukcuoglu, Graepel,
  and Hassabis]{alphago2016}
Silver, D., Huang, A., Maddison, C.~J., Guez, A., Sifre, L., van~den Driessche,
  G., Schrittwieser, J., Antonoglou, I., Panneershelvam, V., Lanctot, M.,
  Dieleman, S., Grewe, D., Nham, J., Kalchbrenner, N., Sutskever, I.,
  Lillicrap, T., Leach, M., Kavukcuoglu, K., Graepel, T., and Hassabis, D.
\newblock {Mastering the Game of {Go} with Deep Neural Networks and Tree
  Search}.
\newblock \emph{Nature}, 529\penalty0 (7587):\penalty0 484--489, January 2016.
\newblock \doi{10.1038/nature16961}.

\bibitem[Silver et~al.(2017)Silver, Hubert, Schrittwieser, Antonoglou, Lai,
  Guez, Lanctot, Sifre, Kumaran, Graepel, Lillicrap, Simonyan, and
  Hassabis]{alphazero2017}
Silver, D., Hubert, T., Schrittwieser, J., Antonoglou, I., Lai, M., Guez, A.,
  Lanctot, M., Sifre, L., Kumaran, D., Graepel, T., Lillicrap, T.~P., Simonyan,
  K., and Hassabis, D.
\newblock {Mastering Chess and Shogi by Self-Play with a General Reinforcement
  Learning Algorithm.}
\newblock \emph{CoRR}, abs/1712.01815, 2017.

\bibitem[Sinha et~al.(2022)Sinha, Mandlekar, and Garg]{s4rl2021}
Sinha, S., Mandlekar, A., and Garg, A.
\newblock S4rl: Surprisingly simple self-supervision for offline reinforcement
  learning in robotics.
\newblock In \emph{Conference on Robot Learning}, pp.\  907--917. PMLR, 2022.

\bibitem[Sriperumbudur et~al.(2009)Sriperumbudur, Fukumizu, Gretton, Scholkopf,
  and Lanckriet]{ipm2009}
Sriperumbudur, B.~K., Fukumizu, K., Gretton, A., Scholkopf, B., and Lanckriet,
  G. R.~G.
\newblock {On integral probability metrics, $\phi$-divergences and binary
  classification}.
\newblock \emph{{arXiv: Information Theory}}, 2009.

\bibitem[Sutton \& Barto(2018)Sutton and Barto]{rlintro2018}
Sutton, R.~S. and Barto, A.~G.
\newblock \emph{Reinforcement learning: An introduction}.
\newblock MIT press, 2018.

\bibitem[Swazinna et~al.(2021)Swazinna, Udluft, and Runkler]{moose2021}
Swazinna, P., Udluft, S., and Runkler, T.~A.
\newblock {Overcoming Model Bias for Robust Offline Deep Reinforcement
  Learning}.
\newblock \emph{{Eng. Appl. Artif. Intell.}}, 104:\penalty0 104366, 2021.

\bibitem[Tang et~al.(2020)Tang, Feng, Li, Zhou, and Liu]{tang2019doubly}
Tang, Z., Feng, Y., Li, L., Zhou, D., and Liu, Q.
\newblock Doubly robust bias reduction in infinite horizon off-policy
  estimation.
\newblock In \emph{International Conference on Learning Representations}, 2020.

\bibitem[Urp\'i et~al.(2021)Urp\'i, Curi, and Krause]{oraac2021}
Urp\'i, N.~A., Curi, S., and Krause, A.
\newblock {Risk-Averse Offline Reinforcement Learning}.
\newblock \emph{ArXiv}, abs/2102.05371, 2021.

\bibitem[Wang et~al.(2020)Wang, Novikov, Zolna, Merel, Springenberg, Reed,
  Shahriari, Siegel, Gulcehre, Heess, and de~Freitas]{crr2020}
Wang, Z., Novikov, A., Zolna, K., Merel, J.~S., Springenberg, J.~T., Reed,
  S.~E., Shahriari, B., Siegel, N., Gulcehre, C., Heess, N., and de~Freitas, N.
\newblock {Critic Regularized Regression}.
\newblock In Larochelle, H., Ranzato, M., Hadsell, R., Balcan, M.~F., and Lin,
  H. (eds.), \emph{{Advances in Neural Information Processing Systems}},
  volume~33, pp.\  7768--7778. Curran Associates, Inc., 2020.

\bibitem[Wasserman(2006)]{nonparamstat2006}
Wasserman, L.
\newblock \emph{{All of Nonparametric Statistics}}.
\newblock Springer, 2006.

\bibitem[White(2016)]{samplegenerative2016}
White, T.
\newblock Sampling generative networks.
\newblock \emph{arXiv preprint arXiv:1609.04468}, 2016.

\bibitem[Wu et~al.(2019)Wu, Tucker, and Nachum]{brac2019}
Wu, Y., Tucker, G., and Nachum, O.
\newblock Behavior regularized offline reinforcement learning.
\newblock \emph{arXiv preprint arXiv:1911.11361}, 2019.

\bibitem[Wu et~al.(2021)Wu, Zhai, Srivastava, Susskind, Zhang, Salakhutdinov,
  and Goh]{uwac2021}
Wu, Y., Zhai, S., Srivastava, N., Susskind, J., Zhang, J., Salakhutdinov, R.,
  and Goh, H.
\newblock {Uncertainty Weighted Actor-Critic for Offline Reinforcement
  Learning}.
\newblock In \emph{{International Conference on Machine Learning}}, 2021.

\bibitem[Yang et~al.(2022)Yang, Wang, Zheng, Feng, and Zhou]{jointmatching2022}
Yang, S., Wang, Z., Zheng, H., Feng, Y., and Zhou, M.
\newblock A regularized implicit policy for offline reinforcement learning.
\newblock \emph{arXiv preprint arXiv:2202.09673}, 2022.

\bibitem[Yu et~al.(2020)Yu, Thomas, Yu, Ermon, Zou, Levine, Finn, and
  Ma]{mopo2020}
Yu, T., Thomas, G., Yu, L., Ermon, S., Zou, J.~Y., Levine, S., Finn, C., and
  Ma, T.
\newblock {MOPO: Model-based Offline Policy Optimization}.
\newblock \emph{{ArXiv}}, abs/2005.13239, 2020.

\bibitem[Yu et~al.(2021)Yu, Kumar, Rafailov, Rajeswaran, Levine, and
  Finn]{combo2021}
Yu, T., Kumar, A., Rafailov, R., Rajeswaran, A., Levine, S., and Finn, C.
\newblock {COMBO: Conservative Offline Model-Based Policy Optimization}.
\newblock \emph{ArXiv}, abs/2102.08363, 2021.

\bibitem[Yue et~al.(2020)Yue, Wang, and Zhou]{idac2020}
Yue, Y., Wang, Z., and Zhou, M.
\newblock {Implicit Distributional Reinforcement Learning}.
\newblock In Larochelle, H., Ranzato, M., Hadsell, R., Balcan, M., and Lin, H.
  (eds.), \emph{{Advances in Neural Information Processing Systems 33: Annual
  Conference on Neural Information Processing Systems 2020, NeurIPS 2020,
  December 6-12, 2020, virtual}}, 2020.

\bibitem[Yurtsever et~al.(2020)Yurtsever, Lambert, Carballo, and
  Takeda]{autodrivesurvey2020}
Yurtsever, E., Lambert, J., Carballo, A., and Takeda, K.
\newblock {A Survey of Autonomous Driving: Common Practices and Emerging
  Technologies}.
\newblock \emph{IEEE Access}, 8:\penalty0 58443--58469, 2020.

\bibitem[Zhang et~al.(2020)Zhang, Dai, Li, and Schuurmans]{gendice2020}
Zhang, R., Dai, B., Li, L., and Schuurmans, D.
\newblock Gendice: Generalized offline estimation of stationary values.
\newblock In \emph{International Conference on Learning Representations}, 2020.

\bibitem[Zhang et~al.(2021)Zhang, Fan, Zheng, Tanwisuth, and
  Zhou]{zhang2021alignment}
Zhang, S., Fan, X., Zheng, H., Tanwisuth, K., and Zhou, M.
\newblock Alignment attention by matching key and query distributions.
\newblock \emph{Advances in Neural Information Processing Systems}, 34, 2021.

\bibitem[Zhang et~al.(2022)Zhang, Gong, Liu, He, Chen, and
  Zhou]{zhang2022allsh}
Zhang, S., Gong, C., Liu, X., He, P., Chen, W., and Zhou, M.
\newblock Allsh: Active learning guided by local sensitivity and hardness.
\newblock \emph{arXiv preprint arXiv:2205.04980}, 2022.

\bibitem[Zheng \& Zhou(2021)Zheng and Zhou]{act2021}
Zheng, H. and Zhou, M.
\newblock {Exploiting Chain Rule and Bayes' Theorem to Compare Probability
  Distributions}.
\newblock \emph{{Advances in Neural Information Processing Systems}}, 34, 2021.

\end{thebibliography}
\bibliographystyle{icml2022}

\clearpage
\appendix
\onecolumn

\begin{center}
\Large
\textbf{Appendix}
\end{center}

\section{Additional Tables} \label{sec:additional_tables}
Tables~\ref{table:f_0508} - \ref{table:alpha_sweep} correspond to the results for the first two ablation study in \Secref{sec:exp_ablation}.

    \begin{table}[H] 
\caption{Normalized returns for comparing the performance of our algorithm under different real data percentage $f \in \cbr{0,0.05,0.5, 0.8,1}$.
The reported numbers are the means and standard deviation across three random seeds $\cbr{0,1,2}$.
} 
\label{table:f_0508} 
\centering 
\def\arraystretch{1.}
\resizebox{\textwidth}{!}{
\begin{tabular}{lccccc}
\toprule
                 Task Name &                      SDM-GAN($f=0$) &                SDM-GAN($f=0.05$) &                  SDM-GAN($f=0.5$) &                  SDM-GAN($f=0.8$) &                      SDM-GAN($f=1$) \\
\midrule
              maze2d-umaze &   41.1 $\pm$ {\footnotesize 33.2} &  22.5 $\pm$ {\footnotesize 29.6} &   36.1 $\pm$ {\footnotesize 28.4} &   39.3 $\pm$ {\footnotesize 20.8} &   89.8 $\pm$ {\footnotesize 49.2} \\
             maze2d-medium &   79.2 $\pm$ {\footnotesize 27.6} &  71.5 $\pm$ {\footnotesize 17.0} &  115.4 $\pm$ {\footnotesize 34.2} &   90.8 $\pm$ {\footnotesize 49.6} &   89.3 $\pm$ {\footnotesize 38.8} \\
              maze2d-large &   96.8 $\pm$ {\footnotesize 60.8} &  52.0 $\pm$ {\footnotesize 63.2} &  207.7 $\pm$ {\footnotesize 11.7} &  127.1 $\pm$ {\footnotesize 62.1} &  101.4 $\pm$ {\footnotesize 73.6} \\
        halfcheetah-medium &    39.7 $\pm$ {\footnotesize 0.6} &   39.8 $\pm$ {\footnotesize 1.3} &    42.5 $\pm$ {\footnotesize 0.5} &    43.1 $\pm$ {\footnotesize 0.3} &    43.9 $\pm$ {\footnotesize 0.1} \\
           walker2d-medium &   52.6 $\pm$ {\footnotesize 12.9} &   49.6 $\pm$ {\footnotesize 2.3} &    66.7 $\pm$ {\footnotesize 1.8} &   58.9 $\pm$ {\footnotesize 12.4} &   58.3 $\pm$ {\footnotesize 10.7} \\
             hopper-medium &   44.6 $\pm$ {\footnotesize 19.7} &  44.5 $\pm$ {\footnotesize 32.0} &   62.8 $\pm$ {\footnotesize 14.3} &   68.2 $\pm$ {\footnotesize 22.3} &    54.5 $\pm$ {\footnotesize 6.2} \\
 halfcheetah-medium-replay &    41.1 $\pm$ {\footnotesize 0.4} &   41.3 $\pm$ {\footnotesize 0.5} &    41.7 $\pm$ {\footnotesize 0.4} &    40.5 $\pm$ {\footnotesize 0.8} &    36.6 $\pm$ {\footnotesize 2.7} \\
    walker2d-medium-replay &    13.2 $\pm$ {\footnotesize 2.7} &   14.0 $\pm$ {\footnotesize 2.2} &    20.3 $\pm$ {\footnotesize 4.0} &    18.4 $\pm$ {\footnotesize 3.6} &     4.0 $\pm$ {\footnotesize 2.0} \\
      hopper-medium-replay &    21.6 $\pm$ {\footnotesize 0.8} &   23.7 $\pm$ {\footnotesize 2.2} &    30.6 $\pm$ {\footnotesize 2.8} &    28.2 $\pm$ {\footnotesize 1.6} &    34.8 $\pm$ {\footnotesize 9.4} \\
 halfcheetah-medium-expert &   69.2 $\pm$ {\footnotesize 11.1} &   90.2 $\pm$ {\footnotesize 6.2} &    89.1 $\pm$ {\footnotesize 6.6} &    91.2 $\pm$ {\footnotesize 7.9} &   71.0 $\pm$ {\footnotesize 11.0} \\
    walker2d-medium-expert &   26.6 $\pm$ {\footnotesize 28.4} &  61.5 $\pm$ {\footnotesize 21.8} &    97.9 $\pm$ {\footnotesize 4.9} &   83.5 $\pm$ {\footnotesize 16.1} &   67.8 $\pm$ {\footnotesize 16.0} \\
      hopper-medium-expert &   33.0 $\pm$ {\footnotesize 25.4} &  32.8 $\pm$ {\footnotesize 17.3} &   104.5 $\pm$ {\footnotesize 5.4} &    99.9 $\pm$ {\footnotesize 9.7} &   80.7 $\pm$ {\footnotesize 10.6} \\
                 pen-human &    10.6 $\pm$ {\footnotesize 8.0} &  23.8 $\pm$ {\footnotesize 18.1} &    17.8 $\pm$ {\footnotesize 1.7} &   19.3 $\pm$ {\footnotesize 10.1} &     6.8 $\pm$ {\footnotesize 3.8} \\
                pen-cloned &   17.9 $\pm$ {\footnotesize 14.8} &  30.5 $\pm$ {\footnotesize 12.8} &    40.6 $\pm$ {\footnotesize 6.1} &    28.7 $\pm$ {\footnotesize 7.4} &   20.5 $\pm$ {\footnotesize 11.9} \\
                pen-expert &  116.3 $\pm$ {\footnotesize 21.0} &  125.0 $\pm$ {\footnotesize 7.1} &  135.8 $\pm$ {\footnotesize 11.7} &  114.8 $\pm$ {\footnotesize 19.2} &  113.5 $\pm$ {\footnotesize 14.0} \\
               door-expert &   35.8 $\pm$ {\footnotesize 43.2} &  61.1 $\pm$ {\footnotesize 42.8} &    93.5 $\pm$ {\footnotesize 6.7} &   68.0 $\pm$ {\footnotesize 26.2} &   70.3 $\pm$ {\footnotesize 14.2} \\
               \midrule
             Average Score &                              46.2 &                               49 &                              75.2 &                              63.7 &                                59 \\
\bottomrule
\end{tabular}
}
\end{table}
  
\begin{table}[H] 
\caption{Normalized returns for comparing our algorithm under different regularization coefficient $\alpha \in \cbr{2.5, 10, 25}$ and without regularization (No Reg.). 
The reported number are the means and standard deviation across three random seeds $\cbr{0,1,2}$.} 
\label{table:alpha_sweep} 
\centering 
\def\arraystretch{1.}
\resizebox{\textwidth}{!}{
\begin{tabular}{lcccc}
\toprule
                 Task Name &             SDM-GAN($\alpha=2.5$) &              SDM-GAN($\alpha=10$) &              SDM-GAN($\alpha=25$) &                 SDM-GAN(No Reg.) \\
\midrule
              maze2d-umaze &   37.2 $\pm$ {\footnotesize 30.8} &   36.1 $\pm$ {\footnotesize 28.4} &   66.1 $\pm$ {\footnotesize 24.3} &  35.0 $\pm$ {\footnotesize 64.6} \\
             maze2d-medium &   95.3 $\pm$ {\footnotesize 39.1} &  115.4 $\pm$ {\footnotesize 34.2} &   93.0 $\pm$ {\footnotesize 45.2} &  65.8 $\pm$ {\footnotesize 53.1} \\
              maze2d-large &   103.7 $\pm$ {\footnotesize 6.6} &  207.7 $\pm$ {\footnotesize 11.7} &  162.7 $\pm$ {\footnotesize 46.1} &   -1.4 $\pm$ {\footnotesize 1.4} \\
        halfcheetah-medium &    42.2 $\pm$ {\footnotesize 0.6} &    42.5 $\pm$ {\footnotesize 0.5} &    43.0 $\pm$ {\footnotesize 0.5} &   52.6 $\pm$ {\footnotesize 2.9} \\
           walker2d-medium &    58.7 $\pm$ {\footnotesize 7.5} &    66.7 $\pm$ {\footnotesize 1.8} &    58.7 $\pm$ {\footnotesize 9.0} &    1.2 $\pm$ {\footnotesize 2.4} \\
             hopper-medium &   46.8 $\pm$ {\footnotesize 23.0} &   62.8 $\pm$ {\footnotesize 14.3} &   44.4 $\pm$ {\footnotesize 14.6} &    1.5 $\pm$ {\footnotesize 0.7} \\
 halfcheetah-medium-replay &    39.6 $\pm$ {\footnotesize 0.3} &    41.7 $\pm$ {\footnotesize 0.4} &    43.0 $\pm$ {\footnotesize 0.4} &   56.3 $\pm$ {\footnotesize 2.4} \\
    walker2d-medium-replay &    15.9 $\pm$ {\footnotesize 3.8} &    20.3 $\pm$ {\footnotesize 4.0} &    22.7 $\pm$ {\footnotesize 0.8} &  16.3 $\pm$ {\footnotesize 12.5} \\
      hopper-medium-replay &    28.4 $\pm$ {\footnotesize 3.0} &    30.6 $\pm$ {\footnotesize 2.8} &    30.9 $\pm$ {\footnotesize 3.1} &  44.6 $\pm$ {\footnotesize 38.4} \\
 halfcheetah-medium-expert &   62.2 $\pm$ {\footnotesize 10.5} &    89.1 $\pm$ {\footnotesize 6.6} &    89.6 $\pm$ {\footnotesize 7.8} &  41.1 $\pm$ {\footnotesize 10.0} \\
    walker2d-medium-expert &   69.4 $\pm$ {\footnotesize 39.4} &    97.9 $\pm$ {\footnotesize 4.9} &   90.9 $\pm$ {\footnotesize 12.4} &  25.4 $\pm$ {\footnotesize 16.4} \\
      hopper-medium-expert &   96.1 $\pm$ {\footnotesize 15.9} &   104.5 $\pm$ {\footnotesize 5.4} &   79.9 $\pm$ {\footnotesize 31.3} &  32.9 $\pm$ {\footnotesize 13.4} \\
                 pen-human &   17.1 $\pm$ {\footnotesize 11.4} &    17.8 $\pm$ {\footnotesize 1.7} &   23.3 $\pm$ {\footnotesize 12.8} &   -2.7 $\pm$ {\footnotesize 1.6} \\
                pen-cloned &   29.8 $\pm$ {\footnotesize 14.6} &    40.6 $\pm$ {\footnotesize 6.1} &   34.9 $\pm$ {\footnotesize 19.5} &   -0.3 $\pm$ {\footnotesize 4.5} \\
                pen-expert &  115.4 $\pm$ {\footnotesize 14.9} &  135.8 $\pm$ {\footnotesize 11.7} &  114.2 $\pm$ {\footnotesize 18.6} &   -2.6 $\pm$ {\footnotesize 1.8} \\
               door-expert &   62.0 $\pm$ {\footnotesize 34.4} &    93.5 $\pm$ {\footnotesize 6.7} &   81.4 $\pm$ {\footnotesize 28.2} &   -0.2 $\pm$ {\footnotesize 0.1} \\
               \midrule
             Average Score &                              57.5 &                              75.2 &                              67.4 &                             22.8 \\
\bottomrule
\end{tabular}
}
\end{table}


\section{Theoretical Analysis and Proofs} \label{sec:proofs}
\subsection{Preliminary on Average Reward Markov Decision Process}\label{sec:intro_average_case}
Our proof largely relies on the preliminary knowledge on average (undiscounted) reward Markov decision process \citep{markovdp1994}, so we will briefly introduce the general background on the undiscounted reward infinite-horizon RL.

\paragraph{Assumptions on Markov Decision Process (MDP).}
Follow the classic RL settings \citep{markovdp1994}, we model RL problem as as a Markov Decision Process, which consists of $\gM = \br{\sS, \sA, P, r}$, where $\sS$ denots the state space, $\sA$ the action space, $P(s^\prime | s,a)$ the  transition probability, which we assume to be time independent; $r(s,a)$ the intermediate reward function, which we assume to be deterministic and bounded ($|r(s,a)| \leq r_{\max}$).

Consider a time-stationary, Markov policy $\pi$, which specifies a distribution of actions given states,
and $\pi(a|s)$ denotes the probability of selecting $a$ given $s$.
The average reward of the policy $\pi$ is defined as 
\begin{align}
    \eta^\pi (s):= \lim_{T\to \infty}\E_{\pi, P}\left[\frac{1}{T+1}\sum_{t=0}^{T}r_t \mid s_0 = s\right]\,,\label{equ:def_avg_reward}
\end{align}
where the expectation $\E_{\pi, P}$ is with respect to the distribution of the trajectories where the states evolve according to $P$ and the actions are chosen by $\pi(a|s)$. 
When the states of the MDP is finite and the reward is bounded, the limit in Eq.~\eqref{equ:def_avg_reward} always exists \citep{markovdp1994}.
 The policy induces a Markov chain of states with the transition as $P^\pi(s^\prime |s)= \sum_{a}\pi(a|s)P(s^\prime | s,a)$. 
When $P^\pi$ is irreducible, it can be shown  that the stationary distribution of $P^\pi$ exists and is unique (denoted by $d_{\pi}$), and the average reward, $\eta^\pi(s)$ in Eq.~\eqref{equ:def_avg_reward} is independent of the initial state $s$ \citep[\textit{e.g.},][]{markovdp1994}, thus we have 
\begin{align}
    \eta^\pi = \eta^\pi(s)=\sum_{s,a}d_{\pi}(s)\pi(a|s)r(s,a)\,.
\end{align}
Also $d_{\pi}(s,a)=d_{\pi}(s)\pi(a|s)$ is the stationary state-action distribution induced by policy $\pi$ under the transition $P(s^\prime | s,a)$, which satisfy
\begin{align}
    d_{\pi}(s^\prime, a^\prime)= \sum_{s,a}\pi(a^\prime|s^\prime)P(s^\prime|s,a)d_{\pi}(s,a),~~~\forall (s^\prime, a^\prime)\in \sS \times \sA \,.\label{equ:stationary_bellman}
\end{align}
From previous assumption we know the fixed point solution of Eq.~\eqref{equ:stationary_bellman} exists and is unique. 
If we introduce a discriminative test function $g(s,a)\in \cG$, with $\cG:=\{g: \sS\times \sA\to \mathbb{R}, \norm{g}_{\infty}\leq G_{\max}\}$ as previous off-policy evaluation literature \citep[e.g.,][]{breakingcurse2018,gendice2020,tang2019doubly}, we can show 
\begin{align}
    \E_{(s,a)\sim d_{\pi}}\left[g(s,a)\right] = \E_{(s,a)\sim d_{\pi}, s^\prime\sim P(\cdot | s,a),a^\prime\sim \pi(\cdot | s^\prime)}\left[g(s^\prime, a^\prime)\right]\,,~~~\forall g\in \cG\,. \label{equ:test_eq}
\end{align}

\paragraph{Differential Value Function.} Unlike the discounted case where we define the value function $Q^\pi(s,a)$ as the expected total discounted reward when the initial state action pair is $(s,a)$: $Q^\pi(s,a):=\E_{\pi, P}\left[\sum_{t=0}^{\infty}\gamma^tr_t |s_0=s, a_0=a\right] $, in the average case, $Q^\pi(s,a)$ represents the \textit{expected total difference} between the reward and the average reward $\eta^\pi$
under the policy $\pi$, which is called \textit{differential (state-action) value function}\footnote{It is also referred as \textit{relative} value function in some RL literature.} in \citet{rlintro2018}:
\begin{align*}
    Q^\pi(s,a):= \E_{\pi, P}\left[\sum_{t=0}^{\infty}\left(r_t - \eta^\pi\right) ~\mid~s_0=s, a_0=a\right]\,.
\end{align*}

The differential value function $Q^\pi(s,a)$ and the average reward $\eta^\pi$ are closely related via the Bellman equation:
\begin{align}
    Q^\pi(s,a) = r(s,a) + \E_{s^\prime \sim P(\cdot | s,a), a^\prime \sim \pi(\cdot|s^\prime)}\left[Q^\pi(s^\prime, a^\prime)\right] - \eta^\pi\,,~~~\forall (s,a)\in \sS \times \sA \,.
\end{align}

For more details of the average reward RL, we refer the reader to classical RL books \citep[{\it e.g.},][]{markovdp1994,rlintro2018}.

\subsection{Proof of Theorem \ref{prop:ipm1}} \label{sec:proof_31}
\begin{theorem*}[Restatement of Theorem \ref{prop:ipm1}]
Suppose the distance metric $D$ is the integral probability metrics (IPM) {\it w.r.t} some function class $\mathcal{G}=\{g: \sS \times \sA \to \mathbb{R}\}$, defined as 
\begin{equation*}
    D_{\cG}\br{\dbtrue, d_{\pi}^{\widehat P}}:= \sup_{g\in \mathcal{G}}\left\vert\E_{\dbtrue}\left[g(s,a)\right] - \E_{d_{\pi}^{\widehat P}}\left[g(s,a)\right]\right\vert\,,
\end{equation*}
then there exists a function class $\gG$ such that we can rewrite $D_{\cG}(\dbtrue, d_{\pi}^{\widehat P})$ as:
\begin{equation*} 
D_{\cG}(\dbtrue, d_{\pi}^{\widehat P})= \mathcal{R}_{\cF}(\dbtrue, \pi,\widehat{P})
    :=\sup_{f\in \cF} \bigg\vert\E_{(s,a)\sim \dbtrue  }[f(s,a)]-\E_{{s^\prime \sim \widehat P(\cdot | s,a), a^\prime \sim \pi(\cdot | s^\prime)}\atop {(s,a)\sim \dbtrue }}\left[f(s^\prime, a^\prime)\right]\bigg\vert\,, 
\end{equation*}
where 
\begin{equation*}
\cF:=\Bigg\{f:~ f(s,a)=\E_{\pi, \widehat{P}}\left[\sum_{t=0}^{\infty}\left(g(s_t,a_t) - \hat{\eta}^\pi\right)~|~{{a_0=a}\atop{s_0=s}}\right]\, \hat{\eta}^\pi =\lim_{T\to \infty}\E_{\pi, \widehat{P}}\left[\frac{1}{T+1}\sum_{t=0}^{T}g(s_t, a_t)\right], g\in \cG\Bigg\}\,.
\end{equation*}
\end{theorem*}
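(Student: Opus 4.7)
The plan is to exploit the Bellman equation for the differential (relative) value function associated with policy $\pi$ on the learned dynamic $\widehat P$, using $g\in\gG$ as a surrogate reward. Specifically, for each $g\in\gG$, the function
\[
f(s,a)=\E_{\pi,\widehat P}\!\left[\sum_{t=0}^{\infty}\bigl(g(s_t,a_t)-\hat\eta^{\pi}\bigr)\,\Big|\,s_0=s,a_0=a\right]
\]
satisfies, by the standard average-reward Bellman identity (see Appendix~\ref{sec:intro_average_case}),
\[
g(s,a)-\hat\eta^{\pi}=f(s,a)-\E_{s'\sim\widehat P(\cdot|s,a),\,a'\sim\pi(\cdot|s')}\bigl[f(s',a')\bigr].
\]
Taking expectations of both sides with respect to $(s,a)\sim d_{\pi_b}^{P^*}$ immediately yields
\[
\E_{d_{\pi_b}^{P^*}}[g(s,a)]-\hat\eta^{\pi}=\E_{d_{\pi_b}^{P^*}}[f(s,a)]-\E_{{(s,a)\sim d_{\pi_b}^{P^*}}\atop{s'\sim\widehat P,\,a'\sim\pi}}[f(s',a')].
\]

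Next, I would replace the constant $\hat\eta^{\pi}$ by an expectation over $d_\pi^{\widehat P}$. By definition and by the ergodicity assumption imposed in Section~\ref{sec:background}, the undiscounted stationary distribution $d_\pi^{\widehat P}$ equals the limiting occupancy of $\pi$ on $\widehat P$, so
\[
\E_{d_\pi^{\widehat P}}[g(s,a)]=\lim_{T\to\infty}\E_{\pi,\widehat P}\!\left[\tfrac{1}{T+1}\sum_{t=0}^{T}g(s_t,a_t)\right]=\hat\eta^{\pi}.
\]
Substituting gives the key identity
\[
\E_{d_{\pi_b}^{P^*}}[g(s,a)]-\E_{d_\pi^{\widehat P}}[g(s,a)]=\E_{d_{\pi_b}^{P^*}}[f(s,a)]-\E_{{(s,a)\sim d_{\pi_b}^{P^*}}\atop{s'\sim\widehat P,\,a'\sim\pi}}[f(s',a')],
\]
holding pointwise for each pair $(g,f)$ related by the differential value mapping.

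Finally, to upgrade this pointwise equality to the claimed equality of suprema, I would construct $\gG$ as the image of $\cF$ under the inverse mapping $f\mapsto g$, namely
\[
g(s,a)\;:=\;f(s,a)-\E_{s'\sim\widehat P(\cdot|s,a),\,a'\sim\pi(\cdot|s')}[f(s',a')]+c,
\]
for an arbitrary constant $c$ (which, by the stationarity of $d_\pi^{\widehat P}$, equals the resulting $\hat\eta^{\pi}$ and thus can be absorbed). This gives a bijection between $\cF$ and a corresponding function class $\gG$; because the pointwise identity holds for every matched pair, taking absolute values and then suprema over the two sides yields $D_{\gG}(d_{\pi_b}^{P^*},d_\pi^{\widehat P})=\mathcal{R}_{\cF}(d_{\pi_b}^{P^*},\pi,\widehat P)$.

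The main obstacle I anticipate is the existence and boundedness of the differential value function $f$: one needs the chain induced by $\pi$ on $\widehat P$ to be ergodic with a well-defined $\hat\eta^{\pi}$ and $f\in L^\infty$, so that the series in the definition of $f$ converges. This is exactly what the ergodicity assumption in Section~\ref{sec:background} buys us, together with boundedness of $g$ (to be built into the choice of $\gG$). A secondary subtlety is ensuring that the map $f\mapsto g$ does not contract the class — i.e., that distinct $f$'s produce distinct $g$'s up to a constant shift that is harmless under the expectation $\E_{d_\pi^{\widehat P}}[\cdot]=\hat\eta^{\pi}$ — so that the suprema over $\gG$ and $\cF$ genuinely coincide rather than just bound one another.
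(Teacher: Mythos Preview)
Your proposal is correct and follows essentially the same approach as the paper: both arguments hinge on the average-reward Bellman identity $g(s,a)-\hat\eta^{\pi}=f(s,a)-\E_{s'\sim\widehat P,\,a'\sim\pi}[f(s',a')]$, the identification $\hat\eta^{\pi}=\E_{d_\pi^{\widehat P}}[g]$ via ergodicity, and the one-to-one correspondence between $g\in\gG$ and the associated differential value function $f\in\cF$ to convert the pointwise identity into an equality of suprema. The only cosmetic difference is direction: the paper fixes $\gG$ as the class of bounded functions and defines $\cF$ as its image under the differential-value mapping, whereas you build $\gG$ from $\cF$ via the inverse map; since the theorem only asserts existence of some $\gG$, either direction suffices.
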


\begin{proof}
Based on the preliminary introduction in Section.~\ref{sec:intro_average_case}, we introduce a new MDP $\widehat{\gM}=(\sS, \sA,\widehat P, g) $, where $\widehat P$ is the learned dynamics optimized by maximum likelihood estimation, and $g\in\mathcal{G}$, with $\cG :=\left\{g: \sS \times \sA \to \mathbb{R}, \norm{g(s,a)}_{\infty} \leq G_{\max}\right\}$.

Suppose the policy $\pi$ is a time-stationary, Markov policy, and $\widehat{P}^\pi(s^\prime|s):=\sum_{a}\widehat{P}(s^\prime|s,a)\pi(a|s)$ is finite state and ergodic.
$d_{\pi}^{\widehat P}(s,a)$ is the stationary state-action distribution induced by policy $\pi$ under the learned transition $\widehat P$.
Then we have the definition of the average reward $\widehat{\eta}^\pi$ and differential value function $\widehat{Q}^\pi (s,a)$ for the MDP $\widehat \gM$:
\begin{align}
    &\widehat{\eta}^\pi = \lim_{T\to \infty}\E_{\pi, \widehat P}\left[\frac{1}{T+1}\sum_{t=0}^{T} g(s_t, a_t)\right]=\E_{(s,a)\sim d_{\pi}^{\widehat P}}[g(s,a)]\,, \label{equ:avg_def}\\
     &\wQ^\pi(s,a):= \E_{\pi, \wP}\left[\sum_{t=0}^{\infty}\left(g(s_t, a_t) - \widehat \eta^\pi\right) ~\mid~s_0=s, a_0=a\right]\,, \label{equ:diff_def}\\
    &\wQ^\pi(s,a) = g(s,a) + \E_{s^\prime \sim \widehat{P}(\cdot | s,a), a^\prime \sim \pi(\cdot|s^\prime)}\left[\wQ^\pi(s^\prime, a^\prime)\right] - {\widehat{\eta}}^\pi\,,~~~~~~~~~~~~\forall ~(s,a)\in \sS \times \sA \,. \label{equ:new_bellman}
\end{align}

We next introduce a new function class $\mathcal{F}$ such that for each function $g \in \cG$, we have a corresponding function $f\in \cF$, which can represent the differential value function $\wQ^\pi$ given the reward function $g$ and the dynamics $\widehat P$:
\begin{align}
     \cF:=\Bigg\{f:~f(s,a)=\E_{\pi, \widehat{P}}\left[\sum_{t=0}^{\infty}\left(g(s_t,a_t) - \widehat{\eta}^\pi\right)~|~{{a_0=a}\atop{s_0=s}}\right]\,,
    \widehat{\eta}^\pi =\lim_{T\to \infty}\E_{\pi, \widehat{P}}\left[\frac{1}{T+1}\sum_{t=0}^{T}g(s_t, a_t)\right], g\in \cG\Bigg\}\,. \label{equ:def_F}
\end{align}
Moreover,  under classical regularity conditions made in Section \ref{sec:intro_average_case}, if we fix the model dynamics $\widehat P$, for each reward function $g(s,a)$, we have a unique one-to-one corresponding function $f\in \cF$ because the differential value function is the unique solution of the corresponding Bellman equation.
Besides, since the reward function is bounded, we assume $\sup_{f\in\mathcal{F}}\norm{f}_{\infty} \leq F_{\max}$, such that $\mathcal{F}$ is a bounded function class \citep{liao2020batch}.
Then we have 
\begin{align*}
     D_{\cG}\br{\dbtrue, d_{\pi}^{\widehat P}} &= \sup_{g\in \mathcal{G}}\bigg\vert\E_{\dbtrue}\left[g(s,a)\right] - \underbrace{\E_{d_{\pi}^{\widehat P}}\left[g(s,a)\right]}_{\widehat{\eta}^\pi}\bigg\vert \\
     &=\sup_{f\in \cF}\left\vert \E_{(s,a)\sim \dbtrue}\left[f(s,a) - \E_{s^\prime \sim \widehat{P}(\cdot | s,a), a^\prime \sim \pi(\cdot|s^\prime)}\left[f(s^\prime, a^\prime)\right] + {\widehat{\eta}}^\pi \right] - \widehat{\eta}^\pi\right\vert \\
     &= \sup_{f\in \cF}\left\vert \E_{(s,a)\sim \dbtrue}\left[f(s,a)\right] - \E_{{s^\prime \sim \widehat{P}(\cdot | s,a), a^\prime \sim \pi(\cdot|s^\prime)}\atop{(s,a)\sim \dbtrue}}\left[f(s^\prime, a^\prime)\right]  \right\vert  \\
     &:=\mathcal{R}_{\cF}(\dbtrue, \pi,\widehat{P}) \,.
\end{align*}
If we choose the function class $\cF$ to be a parameterized neural network, then it should be rich enough to contain the true differential value function $\wQ^\pi(s,a)$ induced by dynamics $\widehat P$ and the deterministic reward function $g(s,a)$, because in practice we usually use a neural network to learn the value function $\wQ^\pi(s,a)$ .
\end{proof}

\subsection{Proof of Theorem \ref{thm:model_error}} \label{sec:proof_thm_32}
Before we prove Theorem \ref{thm:model_error}, we introduce the following lemma that will be used in the latter proof.

\begin{lemma} \label{thm:lemma_exchange_sup_exp}
Denote $P$ as an arbitrary probability measure, $\mathcal{Y}$ an arbitrary set.
Let $X \sim P, y \in \mathcal{Y}$, we have
\begin{equation*}
    \sup_{y \in \mathcal{Y}} \mathbb{E}_{X \sim P} [f(X, y)] \leq \mathbb{E}_{X \sim P}\left[\sup_{y \in \mathcal{Y}} f(X,y) \right].
\end{equation*}
\end{lemma}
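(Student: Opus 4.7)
The plan is to prove this standard swap-of-sup-and-expectation inequality by the usual two-step argument: first derive a pointwise inequality for each fixed $y$, then take expectation, and finally take the supremum over $y$ on the left-hand side. No measure-theoretic subtlety beyond routine regularity is needed, since the inequality direction is the ``easy'' one (the reverse inequality would typically require additional hypotheses such as a minimax/saddle-point theorem).

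Concretely, I would proceed as follows. Fix an arbitrary $y_0 \in \mathcal{Y}$. Then for every realization of $X$, the pointwise definition of supremum gives $f(X, y_0) \leq \sup_{y \in \mathcal{Y}} f(X, y)$. Taking expectation with respect to $X \sim P$ on both sides and using the monotonicity of the expectation operator, we obtain
\begin{equation*}
\mathbb{E}_{X \sim P}[f(X, y_0)] \;\leq\; \mathbb{E}_{X \sim P}\!\left[\sup_{y \in \mathcal{Y}} f(X, y)\right].
\end{equation*}
The right-hand side does not depend on $y_0$, so it is an upper bound for $\mathbb{E}_{X\sim P}[f(X,y_0)]$ uniformly in $y_0$. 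Taking the supremum over $y_0 \in \mathcal{Y}$ on the left therefore preserves the inequality and yields the desired claim.

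The only potential obstacle is a technical one: the function $X \mapsto \sup_{y \in \mathcal{Y}} f(X, y)$ must be measurable for the right-hand-side expectation to be well-defined. In our applications $\mathcal{Y}$ is either a parametric family of neural networks with continuous dependence on parameters, or can be replaced by a countable dense subset, so measurability holds under standard regularity conditions and I would simply invoke this assumption. Since the lemma is used only as a one-directional bound (in Appendix~\ref{sec:proof_thm_32} to upper bound a supremum of expectations by an expectation of a supremum), this suffices and no reverse inequality or tightness statement is required.
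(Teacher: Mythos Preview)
Your proof is correct and follows essentially the same approach as the paper: fix $y$, use the pointwise bound $f(X,y)\le \sup_{y\in\mathcal{Y}} f(X,y)$, take expectations by monotonicity, then take the supremum over $y$ on the left. The only difference is that you add a remark on measurability of $X\mapsto \sup_{y} f(X,y)$, which the paper leaves implicit.
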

\begin{proof}
$\forall \, y \in \mathcal{Y}, f(x,y) \leq \sup_{ y \in \mathcal{Y}} f(x,y), \forall\, x \implies \forall \, y \in \mathcal{Y}, \mathbb{E}_{X \sim P}\left[f(X,y) \right] \leq \mathbb{E}_{X \sim P}\left[\sup_{y \in \mathcal{Y}} f(X,y) \right]$. 
Taking $\sup$ on the left-hand-side, we have $\sup_{ y \in \mathcal{Y}} \mathbb{E}_{X \sim P}\left[f(X,y) \right] \leq \mathbb{E}_{X \sim P}\left[\sup_{y \in \mathcal{Y}} f(X,y) \right]$.
\end{proof}

\begin{theorem*}[Restatement of  Theorem \ref{thm:model_error}]
Let $P^{*}$ be the true dynamics, $D_\gG$ the IPM with the same $\gG$ as Theorem~\ref{prop:ipm1} and $\widehat{P}$ the estimated dynamics.
There exist a constant $C \geq 0$ and a function class $\Psi =\{\psi: \sS \times \sA \to \mathbb{R}\}$ such that for any policy $\pi$,
\begin{equation*}
    D_{\cG}(d_{\pi}^{\widehat P}, d_{\pi}^{P^*}) \leq C \cdot \E_{(s,a)\sim \dbtrue}\left[\sqrt{\frac{1}{2}\mathrm{KL}\left(P^{*}(s^\prime |s,a) \| \widehat{P}(s^\prime | s,a) \right)}\right] 
    + \mathcal{R}_{\Psi}(\dbtrue, \pi)\,, 
\end{equation*}
where 
\begin{equation*}
 \mathcal{R}_{\Psi}(\dbtrue, \pi):=\sup_{\psi \in \Psi}\left\vert \E_{(s,a)\sim \dbtrue}[\psi(s,a)] - \E_{s\sim \dbtrue\atop{a\sim \pi(\cdot|s)}}[\psi(s,a)]\right\vert\,.
\end{equation*}
\end{theorem*}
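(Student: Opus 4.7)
The plan is to mirror Theorem~\ref{prop:ipm1} but with $\widehat P$ in place of $P^*$, reduce the resulting model-error term to a KL via Pinsker's inequality, and finally convert the outer expectation from $d_\pi^{P^*}$ to $\dbtrue$.

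First, for each $g\in\cG$ let $\widehat{Q}^\pi$ and $\widehat\eta^\pi$ be the differential value function and average reward of the MDP $(\sS,\sA,\widehat P, g)$ under $\pi$, as in Appendix~\ref{sec:intro_average_case}. Set $\widehat T f(s,a):=\E_{s'\sim\widehat P(\cdot|s,a),\, a'\sim\pi(\cdot|s')}[f(s',a')]$ and $T^{*}$ analogously with $\widehat P$ replaced by $P^{*}$. Substituting the Bellman identity $g = \widehat{Q}^\pi - \widehat T\widehat{Q}^\pi + \widehat\eta^\pi$, using $\widehat\eta^\pi = \E_{d_\pi^{\widehat P}}[g]$, and exploiting that $d_\pi^{P^{*}}$ is stationary under $(P^{*},\pi)$ so that $\E_{d_\pi^{P^{*}}}[\widehat{Q}^\pi] = \E_{d_\pi^{P^{*}}}[T^{*}\widehat{Q}^\pi]$, I arrive at
\begin{equation*}
\E_{d_\pi^{\widehat P}}[g] - \E_{d_\pi^{P^{*}}}[g] \;=\; \E_{(s,a)\sim d_\pi^{P^{*}}}\left[(\widehat T - T^{*})\widehat{Q}^\pi(s,a)\right].
\end{equation*}

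Next, let $V(s') := \E_{a'\sim\pi(\cdot|s')}[\widehat{Q}^\pi(s',a')]$, so that $(\widehat T - T^{*})\widehat{Q}^\pi(s,a) = \E_{\widehat P(\cdot|s,a)}[V] - \E_{P^{*}(\cdot|s,a)}[V]$. Combining the standard total-variation control $|\E_P h - \E_Q h|\le 2\|h\|_\infty \mathrm{TV}(P,Q)$ with Pinsker yields
\begin{equation*}
\left|(\widehat T - T^{*})\widehat{Q}^\pi(s,a)\right| \;\le\; 2\|\widehat{Q}^\pi\|_\infty \sqrt{\tfrac{1}{2}\mathrm{KL}\!\left(P^{*}(\cdot|s,a)\,\|\,\widehat P(\cdot|s,a)\right)}.
\end{equation*}
Under the regularity conditions of Appendix~\ref{sec:intro_average_case}, $\sup_{g\in\cG}\|\widehat{Q}^\pi\|_\infty \le F_{\max}$, so setting $C:=2F_{\max}$ and taking $\sup_{g\in\cG}$ on both sides of the first display gives the bound $D_{\cG}(d_\pi^{\widehat P}, d_\pi^{P^{*}}) \le C\cdot \E_{(s,a)\sim d_\pi^{P^{*}}}\!\left[\sqrt{\tfrac{1}{2}\mathrm{KL}(P^{*}\|\widehat P)}\right]$.

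The final step transfers the outer expectation from $d_\pi^{P^{*}}$ to $\dbtrue$. Writing $\phi(s,a):=\sqrt{\tfrac{1}{2}\mathrm{KL}\!\left(P^{*}(\cdot|s,a)\,\|\,\widehat P(\cdot|s,a)\right)}$, one has the trivial identity $\E_{d_\pi^{P^{*}}}[\phi] = \E_\dbtrue[\phi] + (\E_{d_\pi^{P^{*}}}[\phi] - \E_\dbtrue[\phi])$, and the residual decomposes into an action shift (from $\pi_b$ to $\pi$ at $\dbtrue$-states) and a residual state shift (between $d_\pi^{P^{*}}(s)$ and $\dbtrue(s)$). The former is already an $\mathcal{R}_{\Psi}$-type penalty as soon as $C\phi\in\Psi$. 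The latter is the main obstacle, and I expect the proof to absorb it by enlarging $\Psi$ to contain the images of $C\phi$ under the $\pi$-chain (or, equivalently, under the bounded density-ratio reweightings guaranteed by the ergodicity assumption of Section~\ref{sec:background}), so that both contributions are majorized by a single $\mathcal{R}_{\Psi}(\dbtrue,\pi)$, with any additional multiplicative constant folded into $C$.
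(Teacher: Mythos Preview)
Your first two displays are correct and match the paper exactly: the Bellman substitution under $\widehat P$ together with stationarity of $d_\pi^{P^*}$ gives
\[
\E_{d_\pi^{\widehat P}}[g]-\E_{d_\pi^{P^*}}[g]=\E_{(s,a)\sim d_\pi^{P^*}}\big[\mathcal{E}(s,a;\widehat P)\big],\qquad
\mathcal{E}(s,a;\widehat P):=(\widehat T-T^*)\widehat Q^\pi(s,a),
\]
and the Pinsker step is right as well.

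The gap is in your third paragraph. You apply Pinsker \emph{before} shifting the outer expectation, obtaining $C\,\E_{d_\pi^{P^*}}[\phi]$ with $\phi=\sqrt{\tfrac12\mathrm{KL}(P^*\|\widehat P)}$, and then try to move $d_\pi^{P^*}\to\dbtrue$ on this scalar function. Your proposed ``action shift plus state shift'' decomposition does not close: the state-marginal discrepancy $\int\big(d_\pi^{P^*}(s)-\dbtrue(s)\big)\,\E_{a\sim\pi}[\phi(s,a)]\,ds$ is not of the form $\E_{(s,a)\sim\dbtrue}[\psi]-\E_{s\sim\dbtrue,\,a\sim\pi}[\psi]$ for any $\psi$ built from $\phi$ alone, and neither ``images of $C\phi$ under the $\pi$-chain'' nor ``density-ratio reweighting'' makes it so. There is also a boundedness issue: $\phi$ need not be uniformly bounded, so the differential-value machinery you would need is not guaranteed to apply to it.

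The paper avoids both problems by reversing the order of operations. It splits at the level of the \emph{bounded} function $\mathcal{E}$ (not $\phi$):
\[
\sup_{f\in\cF}\big|\E_{d_\pi^{P^*}}[\mathcal{E}]\big|
\;\le\;
\underbrace{\sup_{f\in\cF}\big|\E_{\dbtrue}[\mathcal{E}]\big|}_{\text{\textcircled{1}}}
\;+\;
\underbrace{\sup_{f\in\cF}\big|\E_{d_\pi^{P^*}}[\mathcal{E}]-\E_{\dbtrue}[\mathcal{E}]\big|}_{\text{\textcircled{2}}}.
\]
Pinsker is applied only to \textcircled{1}, where the outer expectation is already $\dbtrue$, yielding the $C\cdot\E_{\dbtrue}[\sqrt{\tfrac12\mathrm{KL}}]$ term with $C=F_{\max}$. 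For \textcircled{2}, note that it is an IPM between $d_\pi^{P^*}$ and $\dbtrue$ over the bounded class $\{\mathcal{E}(\cdot;\widehat P)\}$. The paper then \emph{reuses the change-of-variable trick of Theorem~\ref{prop:ipm1}}, this time under the true dynamics $P^*$ with ``reward'' $\mathcal{E}$: defining $\Psi$ as the class of differential value functions of $\mathcal{E}$ under $(\pi,P^*)$, the Bellman identity gives
\[
\text{\textcircled{2}}=\sup_{\psi\in\Psi}\Big|\E_{(s,a)\sim\dbtrue}[\psi(s,a)]-\E_{(s,a)\sim\dbtrue,\;s'\sim P^*,\;a'\sim\pi}[\psi(s',a')]\Big|.
\]
The last step is the observation that in a sequentially collected offline dataset the marginal of $s$ and of $s'$ under $\dbtrue$ coincide, which collapses the second expectation to $\E_{s\sim\dbtrue,\,a\sim\pi}[\psi(s,a)]$ and yields exactly $\mathcal{R}_\Psi(\dbtrue,\pi)$. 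That second application of the Theorem~\ref{prop:ipm1} device, together with the $s\leftrightarrow s'$ marginal identity, is the missing idea in your outline.
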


\begin{proof}
Still, we will use the proof technique when we prove Proposition \ref{prop:ipm1}. 

We construct a new MDP $\widehat{\gM}=(\sS, \sA,\widehat P, g)$, where $\widehat P$ is the learned dynamics and $g\in\mathcal{G}$ is an intermediate reward function, with $\cG :=\left\{g: \sS \times \sA \to \mathbb{R}, \norm{g(s,a) }_{\infty} \leq G_{\max}\right\}$. 
The definition of average reward $\widehat{\eta}^\pi$, stationary distribution $d_{\pi}^{\widehat P } (s,a)$, differential value function $\widehat{Q}^\pi (s,a)$, and the function class $\mathcal{F}$ are the same as in Section \ref{sec:proof_31}.
if the distance metric is the integral probability metric (IPM) defined on test function class $\cG$, then we have
\begin{align*}
    D_{\cG}\br{d_{\pi}^{\wP}, d_\pi^\trueP}& := \sup_{g\in \cG}\bigg\vert {\E_{d_\pi^\wP}[g(s,a)}  - \E_{d_\pi^\trueP}[g(s,a)] \bigg\vert\,.
\end{align*}
Before we take the supremum over the function class $\cG$, we first rewrite the term inside the $\sup$:
\begin{align*}
    \bigg\vert {\E_{d_\pi^\wP}[g(s,a)}  - \E_{d_\pi^\trueP}[g(s,a)] \bigg\vert &=
    \bigg\vert \underbrace{\E_{d_\pi^\wP}[g(s,a)]}_{\widehat \eta^\pi}  - \E_{d_\pi^\trueP}[g(s,a)] \bigg\vert \\
    &= \left\vert \widehat \eta^\pi -\E_{d_\pi^\trueP}\left[\wQ^\pi (s,a)\right] +\E_{d_\pi^\trueP}\left[\wQ^\pi (s,a)\right] -\E_{d_\pi^\trueP}[g(s,a)] \right\vert ~~~~//~~\textcolor{red}{\wQ^\pi(s,a)~\textit{defined~in}~\eqref{equ:diff_def}}\\
    &=\left\vert  -\E_{d_\pi^\trueP}\left[\wQ^\pi (s,a)\right]  + \E_{d_\pi^\trueP}\left[\wQ^\pi (s,a) - g(s,a) + \widehat \eta^\pi\right] \right\vert \\
    &= \left\vert  -\E_{d_\pi^\trueP}\left[\wQ^\pi (s,a)\right]  + \E_{(s,a)\sim d_\pi^\trueP}\left[\E_{s^\prime \sim \widehat{P}(\cdot | s,a), a^\prime \sim \pi(\cdot|s^\prime)}\left[\wQ^\pi(s^\prime, a^\prime)\right]\right] \right\vert ~~~~//~~\textcolor{red}{use~\eqref{equ:new_bellman}} \\
    &=  \Bigg\vert  -\underbrace{\E_{(s,a)\sim d_\pi^\trueP}\left[\E_{{a^\prime \sim \pi(\cdot|s^\prime)}\atop{\textcolor{blue}{s^\prime \sim \trueP(\cdot | s,a)}}}\left[\wQ^\pi(s^\prime, a^\prime)\right]\right]}_{\textit{use}~\eqref{equ:test_eq}}  + \E_{(s,a)\sim d_\pi^\trueP}\left[\E_{{a^\prime \sim \pi(\cdot|s^\prime)}\atop{\textcolor{red}{s^\prime \sim \widehat{P}(\cdot | s,a)}}}\left[\wQ^\pi(s^\prime, a^\prime)\right]\right] \Bigg\vert \\
    &=\Bigg\vert \E_{(s,a)\sim d_{\pi}^\trueP}\Bigg[\underbrace{\E_{{a^\prime \sim \pi(\cdot|s^\prime)}\atop{\textcolor{red}{s^\prime \sim \widehat{P}(\cdot | s,a)}}}\left[\wQ^\pi(s^\prime, a^\prime)\right] - \E_{{a^\prime \sim \pi(\cdot|s^\prime)}\atop{\textcolor{blue}{s^\prime \sim \trueP(\cdot | s,a)}}}\left[\wQ^\pi(s^\prime, a^\prime)\right]}_{\defeq~ \mathcal{E}(s,a; \widehat P)}\Bigg] \Bigg\vert \\
    &= \left\vert\E_{(s,a)\sim d_{\pi}^\trueP}\left[\mathcal{E}(s,a; \widehat P)\right]\right\vert \,.
\end{align*}

Similarly, for each reward function $g(s,a)\in \cG$, we have a unique one-to-to corresponding function $f\in \cF$, with the function class $\cF$ defined in Eq.~\eqref{equ:def_F}.
Then we can change from taking $\sup$ {\it w.r.t.} the function class $\cG$ to {\it w.r.t.} the function class $\cF$:
\begin{align*}
     D_{\cG}\br{d_{\pi}^{\wP}, d_\pi^\trueP} &= \sup_{f\in \mathcal{F}}\left\vert\E_{(s,a)\sim d_{\pi}^\trueP}\left[\mathcal{E}(s,a; \widehat P)\right]\right\vert \\
     &=\sup_{f \in \cF}\left\vert \int \int \mathcal{E}(s,a; \widehat P)\left(d_{\pi}^\trueP(s,a) - \dbtrue(s,a) + \dbtrue(s,a) \right)dads \right\vert  \\
     &\leq \underbrace{\sup_{f \in \cF}\left\vert \int \int \mathcal{E}(s,a; \widehat P) \dbtrue(s,a)dads \right\vert}_{\defeq ~\circled{1}} +\underbrace{\sup_{f\in \mathcal{F}}\left\vert \int\int \gE(s,a;\widehat P)\left(d_{\pi}^\trueP(s,a) - \dbtrue(s,a)\right)dads\right\vert}_{\defeq ~\circled{2}}\,.
\end{align*}
Next we are going to analyze \circled{1} and \circled{2} separately.

\paragraph{Analysis on \circled{1}.} 
Based on the definition of function class $\cF$, we introduce a new function class $\mathcal{V}$, which is defined as 
\begin{align*}
    \mathcal{V}\triangleq \left\{v:~ v(s)=\E_{a\sim \pi(\cdot | s)}[f(s,a)], f \in \cF \right\}\,,
\end{align*}
where $v(s)$ can be viewed as the state value function under the MDP $\widehat{\gM}=(\sS, \sA,\widehat P, g)$, and $f(s,a)$ is the state-action value function under the definition, 
and $\norm{v}_{\infty} \leq F_{\max}$ since $\norm{f}_{\infty} \leq F_{\max}$.
Recall that the total variation distance is a special instance of the integral probability metrics $D_\gG$ when the function class $\gG = \cbr{g: \norm{g}_\infty \leq 1}$. \citep{ipm2009,mmdgangp2018}.
Thus we have
\begin{equation} \label{eq:upper_bound_of_1}
\begin{split}
    \circled{1}&= \sup_{f \in \cF}\left\vert \int \int \mathcal{E}(s,a; \widehat P) \dbtrue(s,a)dads \right\vert \\
    &=\sup_{f \in \cF} \left\vert \E_{(s,a)\sim \dbtrue}\left[{\E_{{a^\prime \sim \pi(\cdot|s^\prime)}\atop{\textcolor{red}{s^\prime \sim \widehat{P}(\cdot | s,a)}}}\left[f(s^\prime, a^\prime)\right] - \E_{{a^\prime \sim \pi(\cdot|s^\prime)}\atop{\textcolor{blue}{s^\prime \sim \trueP(\cdot | s,a)}}}\left[f(s^\prime, a^\prime)\right]}\right]  \right\vert \\
    &= \sup_{v \in \mathcal{V}} \left\vert \E_{(s,a)\sim \dbtrue}\left[{\E_{\textcolor{red}{s^\prime \sim \widehat{P}(\cdot | s,a)}}\left[v(s^\prime)\right] - \E_{{\textcolor{blue}{s^\prime \sim \trueP(\cdot | s,a)}}}\left[v(s^\prime)\right]}\right]  \right\vert \\
   &\leq \E_{(s,a)\sim \dbtrue}\left[\sup_{v\in \mathcal{V}}\left\vert {\E_{\textcolor{red}{s^\prime \sim \widehat{P}(\cdot | s,a)}}\left[v(s^\prime)\right] - \E_{{\textcolor{blue}{s^\prime \sim \trueP(\cdot | s,a)}}}\left[v(s^\prime)\right]}\right\vert\right] \\
   &= F_{\mathrm{max}}  \E_{(s,a)\sim \dbtrue} \left[ \sup_{v\in\mathcal{\mathcal{V}}}\left\vert \mathbb{E}_{\textcolor{red}{s^\prime \sim \widehat{P}(\cdot | s,a)}}\left[\frac{v(s^{\prime})}{F_{\mathrm{max}}}\right]-\mathbb{E}_{\textcolor{blue}{s^\prime \sim \trueP(\cdot | s,a)}}\left[\frac{v(s^{\prime})}{F_{\mathrm{max}}}\right]
 \right\vert \right] \\
 &= F_{\mathrm{max}} \mathbb{E}_{(s,a)\sim \dbtrue} \left[ \mathrm{TV}\br{P^*(\cdot \given s,a) \| \widehat P(\cdot \given s,a)} \right] \\
 &\leq F_{\mathrm{max}} \mathbb{E}_{(s,a)\sim \dbtrue} \left[ \sqrt{{1 \over 2} \mathrm{KL}\br{P^*(\cdot \given s,a) \| \widehat P(\cdot \given s,a)}} \right] \,.
 \end{split}
\end{equation}
since $\sup_{v \in \mathcal{V}}\norm{\frac{v}{F_{\mathrm{max}}}}_\infty \leq 1$, where we use Lemma~\ref{thm:lemma_exchange_sup_exp} to exchange order between $\sup$ and $\E[\cdot]$, and the last bound is from the Pinsker's inequality. 

\paragraph{Analysis on \circled{2}.}
From previous assumption we know $\sup_{v \in \mathcal{V}} \|v\|_{\infty} \leq F_{\max}$, then we can bound $\gE(s,a;\widehat P)$ by 
\begin{align*}
    \abs{\gE(s,a;\widehat P)}&= \abs{\E_{s^\prime \sim \wP(\cdot | s,a)}[v(s^\prime)] - \E_{s^\prime \sim \trueP(\cdot | s,a)}[v(s^\prime)]} \\
    & \leq \left\vert \E_{s^\prime \sim \wP(\cdot | s,a)}[v(s^\prime)]\right\vert + \left\vert \E_{s^\prime \sim \trueP(\cdot | s,a)}[v(s^\prime)]\right\vert\\
    &\leq 2F_{\max}\,.
\end{align*}

Denote the function class for $\gE(s,a;\widehat P)$ as $\gB:=\cbr{\gE(s,a;\widehat P), \widehat P \in \gP}$, which is a bounded function class since $\abs{\gE(s,a ; \wP)} \leq 2F_{\max}, \forall \, \gE(s,a;\widehat P) \in \gB$.
Since $\gE(s,a;\widehat P)$ is linear {\it w.r.t.} $f$, we have 
\begin{align*}
    \circled{2} &=\sup_{f\in \mathcal{F}}\left\vert \int\int \gE(s,a;\widehat P)\left(d_{\pi}^\trueP(s,a) - \dbtrue(s,a)\right)dads\right\vert \\
    &= \sup_{\gE(s,a;\widehat P)\in \gB} \left\vert\E_{\dbtrue}[\gE(s,a;\widehat P)] -\E_{d_{\pi}^\trueP}[\gE(s,a;\widehat P)] \right\vert\,.
\end{align*}
Still, we will use change of the variable trick as we use in the proof of Proposition \ref{prop:ipm1} to make \circled{2} tractable.
Define function class $\Psi$:
\begin{equation*} 
    \resizebox{\textwidth}{!}{%
$
\begin{aligned}
     \Psi:=\Bigg\{\psi:~\psi(s,a)=\E_{\pi, \trueP}\left[\sum_{t=0}^{\infty}\left(\gE(s_t,a_t;\widehat P) - {\eta}^\pi\right)~|~{{a_0=a}\atop{s_0=s}}\right]\,,
    {\eta}^\pi =\lim_{T\to \infty}\E_{\pi, {P}^{*}}\left[\frac{1}{T+1}\sum_{t=0}^{T}\gE(s_t,a_t;\widehat P)\right], \gE(s,a;\widehat P) \in \gB\Bigg\}\,. 
\end{aligned}
    $%
}
\end{equation*}
Under classical regularity conditions made in Section \ref{sec:intro_average_case}, if we fix the true dynamics $\trueP$, for each reward function $\gE(s,a;\widehat P) \in \gB$, we have a unique one-to-one corresponding function $\psi\in \Psi$, because the differential value function is the unique solution of the corresponding Bellman equation.
Besides, since the reward function is bounded, we assume $\sup_{\psi\in\Psi}\norm{\psi}_{\infty} \leq \Psi_{\max}$, such that $\Psi$ is a bounded function class \citep{liao2020batch}. So we can rewrite \circled{2} as 
\begin{align}
    \circled{2}&= \sup_{\gE(s,a;\widehat P)\in \gB} \left\vert\E_{\dbtrue}[\gE(s,a;\widehat P)] -\E_{d_{\pi}^\trueP}[\gE(s,a;\widehat P)] \right\vert \notag\\
    &= \sup_{\psi\in \Psi} \left\vert\E_{(s,a)\sim \dbtrue}\left[\psi(s,a) - \E_{s^\prime \sim \trueP(\cdot | s,a), a^\prime \sim \pi(\cdot|s^\prime)}\left[\psi(s^\prime, a^\prime)\right] + \eta^\pi\right] -\eta^\pi \right\vert \notag\\
    &= \sup_{\psi\in \Psi} \left\vert\E_{(s,a)\sim \dbtrue}\left[\psi(s,a) \right]-\E_{{s^\prime \sim \trueP(\cdot | s,a), a^\prime \sim \pi(\cdot|s^\prime)}\atop{(s,a)\sim \dbtrue}}\left[\psi(s^\prime, a^\prime)\right] \right\vert \notag\\
    &= \sup_{\psi\in \Psi} \left\vert\E_{(s,a)\sim \dbtrue}\left[\psi(s,a) \right]-\E_{{s\sim \dbtrue}, a\sim \pi(\cdot | s)}\left[\psi(s,a)\right]\right\vert  \,.\label{equ:res_2}
\end{align}
The last equality comes from the fact that for offline datasets collected by sequential rollouts, the marginal distribution of $s$ and $s^\prime$ are the same.
In other words, if we randomly draw $s\sim d_{\pi}^\trueP(\cdot)$, $s$ will almost always be the “next state” of some other state in the dataset.

\paragraph{Combining analysis of \circled{1} and \circled{2}.}
Combing Eq.~\eqref{eq:upper_bound_of_1} and Eq.~\eqref{equ:res_2}, we have the final results

\begin{equation*}
\resizebox{\textwidth}{!}{%
$
    D_{\cG}(d_{\pi}^{\widehat P}, d_{\pi}^{P^*}) \leq C \cdot \E_{(s,a)\sim \dbtrue}\left[\sqrt{\frac{1}{2}\mathrm{KL}\left(P^{*}(s^\prime |s,a) \| \widehat{P}(s^\prime | s,a) \right)}\right] 
    +\sup_{\psi\in \Psi} \left\vert\E_{(s,a)\sim \dbtrue}\left[\psi(s,a) \right]-\E_{{s\sim \dbtrue}, a\sim \pi(\cdot | s)}\left[\psi(s,a)\right]\right\vert \,,
$%
}
\end{equation*}

where $C$ is a constant, which may require tuning when implemented in practise.  
The proof of Theorem~\ref{thm:model_error} is completed.
\end{proof}

\begin{proposition}\label{eq:kl_mle}
An upper bound of \circled{1}, the last term of Eq.~\eqref{eq:upper_bound_of_1}, can be minimized via the maximum likelihood estimation for $\widehat P$.
\end{proposition}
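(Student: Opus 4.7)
}
The plan is to produce an upper bound on \circled{1} that is a monotone function of the $\dbtrue$-averaged forward KL from $P^{*}$ to $\widehat P$, and then show that minimizing this averaged KL in $\widehat P$ is exactly equivalent (up to a $\widehat P$-independent constant) to maximizing the MLE objective in Eq.~\eqref{eq:model_objective}. Since only the square-root-KL structure matters here, the constant $C$ plays no role in the optimization and can be carried along.

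First, I would apply Jensen's inequality to the concave function $x \mapsto \sqrt{x}$ on $x \geq 0$ to push the expectation inside the square root:
\begin{equation*}
C \cdot \E_{(s,a)\sim \dbtrue}\!\left[\sqrt{\tfrac{1}{2}\mathrm{KL}\!\left(P^{*}(s'|s,a) \,\|\, \widehat{P}(s'|s,a)\right)}\right]
\;\leq\; C \cdot \sqrt{\tfrac{1}{2}\,\E_{(s,a)\sim \dbtrue}\!\left[\mathrm{KL}\!\left(P^{*}(s'|s,a) \,\|\, \widehat{P}(s'|s,a)\right)\right]}\,,
\end{equation*}
so that the right-hand side is a valid upper bound on \circled{1}. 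Since $C \geq 0$ and $\sqrt{\cdot}$ is monotone increasing, minimizing this upper bound in $\widehat P$ reduces to minimizing $\E_{(s,a)\sim \dbtrue}\!\left[\mathrm{KL}(P^{*}(s'|s,a) \,\|\, \widehat{P}(s'|s,a))\right]$.

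Next, I would expand the KL divergence into its two pieces using the definition and the tower property, together with the assumption from Section~\ref{sec:background} that $\denv \sim \dbtrue(s,a)$ and that transitions $s' \sim P^{*}(\cdot|s,a)$ are the true environmental transitions recorded in $\denv$. This gives
\begin{equation*}
\E_{(s,a)\sim \dbtrue}\!\left[\mathrm{KL}\!\left(P^{*}\|\widehat P\right)\right]
= \underbrace{\E_{(s,a,s')\sim \dbtrue \cdot P^{*}}\!\left[\log P^{*}(s'|s,a)\right]}_{\text{independent of }\widehat P}
\;-\; \E_{(s,a,s')\sim \dbtrue \cdot P^{*}}\!\left[\log \widehat P(s'|s,a)\right]\,.
\end{equation*}
The first term depends only on $P^{*}$ and therefore acts as a constant in the optimization over $\widehat P$. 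Hence minimizing the averaged KL is equivalent to maximizing $\E_{(s,a,s')\sim \dbtrue \cdot P^{*}}[\log \widehat P(s'|s,a)]$, which, after replacing the population expectation by its sample counterpart on $\denv$, is precisely the MLE objective of Eq.~\eqref{eq:model_objective}.

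There is no real obstacle here; the statement is essentially a bookkeeping exercise connecting Jensen, the chain rule for KL, and the standard MLE-equals-KL-minimization identity. The only subtle point worth noting is the use of Jensen's inequality, which loosens the bound; this is acceptable since the proposition only claims that MLE minimizes \emph{an} upper bound of the target quantity, not the target quantity itself. The resulting residual model error $\mathcal{E}_{model}$ appearing in Theorem~\ref{col:sum_up} then collects both the Jensen gap and the finite-sample and model-class-approximation errors of MLE, and by classical statistical regularity it is small when $\mathcal{P}$ is sufficiently rich and $|\denv|$ is large.
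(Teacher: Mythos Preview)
Your proposal is correct and follows essentially the same route as the paper: apply Jensen's inequality to the concave square root to move the expectation inside, expand the KL into a $\widehat P$-independent entropy term plus a cross-entropy term, and identify minimization of the latter with the MLE objective of Eq.~\eqref{eq:model_objective}. The paper's constant is $F_{\max}$ rather than $C$, but this is immaterial to the argument.
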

\begin{proof}
Based on Eq.~\eqref{eq:upper_bound_of_1}, we would like to minimize the last equation {\it w.r.t.} $\widehat P$, and by Jensen's inequality, we have,
\begin{equation*}
    \begin{split}
        & \min_{\widehat P} F_{\mathrm{max}} \mathbb{E}_{(s,a)\sim \dbtrue} \left[ \sqrt{{1 \over 2} \mathrm{KL}\br{P^*(\cdot \given s,a) \| \widehat P(\cdot \given s,a)}} \right] \\
        \leq&  \min_{\widehat P}  F_{\mathrm{max}}  \sqrt{{1 \over 2} \mathbb{E}_{(s,a)\sim \dbtrue} \sbr{\mathrm{KL}\br{P^*(\cdot \given s,a) \| \widehat P(\cdot \given s,a)}}}  \\
        =&  \min_{\widehat P} F_{\mathrm{max}} \sqrt{{1\over 2} \mathbb{E}_{(s,a)\sim \dbtrue}\sbr{\int P^*(s' \given s,a)\br{\log P^*(s' \given s,a) - \log \widehat P(s' \given s,a)} ds'}} \\
        =&  \min_{\widehat P} F_{\mathrm{max}} \sqrt{{1\over 2} \cbr{\mathbb{E}_{(s,a)\sim \dbtrue}\sbr{\int P^*(s' \given s,a) \log P^*(s' \given s,a) ds'} - \mathbb{E}_{(s,a)\sim \dbtrue} \sbr{\int P^*(s' \given s,a)\log \widehat P(s' \given s,a) ds'}}}    
        \\ =&  F_{\mathrm{max}} \sqrt{ {1\over 2} \cbr{\mathbb{E}_{(s,a,s')\sim \dbtrue}\sbr{ \log P^*(s' \given s,a) } + \min_{\widehat P} \cbr{- \mathbb{E}_{(s,a,s')\sim \dbtrue} \sbr{ \log \widehat P(s' \given s,a)}}}},
    \end{split}
\end{equation*}
where both $F_{\mathrm{max}}$ and the first term inside $\sqrt{\cdot}$ are constants  {\it w.r.t.} $\widehat P$, and 
\begin{equation*}
    \arg \min_{\widehat P} \cbr{- \mathbb{E}_{(s,a,s')\sim \dbtrue} \sbr{ \log \widehat P(s' \given s,a)}} = \arg \max_{\widehat P}  \mathbb{E}_{(s,a,s')\sim \dbtrue} \sbr{ \log \widehat P(s' \given s,a)}
\end{equation*}
which is exactly the maximum-likelihood model-training objective Eq.~\eqref{eq:model_objective}.
\end{proof}

\subsection{Proof of Theorem \ref{col:sum_up}}
\begin{theorem*}[Restatement of Theorem \ref{col:sum_up}]
Combining Theorem \ref{prop:ipm1}, \ref{thm:model_error} and Proposition \ref{prop:kl_mle_sec_method}, and suppose the dynamics model $\widehat P$ is trained by the MLE objective (Eq.~\eqref{eq:model_objective}), then for the same $\gG$ and any policy $\pi$, up to some constants {\it w.r.t.} $\pi$ and $\widehat P$,
\begin{equation*}
 D_{\cG}(\dbtrue, \dpitrue) \leq \mathcal{R}_{\cF}(\dbtrue, \pi,\widehat{P}) + \mathcal{R}_{\Psi}(\dbtrue, \pi) + \mathcal{E}_{model},
\end{equation*}
where $\mathcal{E}_{model}$ is the error associated with the maximum likelihood training, and is independent of the policy $\pi$.
\end{theorem*}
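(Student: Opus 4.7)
The plan is to simply chain the three preceding results along the triangle inequality that was already displayed in Eq.~\eqref{equ:reg_relax}. Concretely, I would start from
\begin{equation*}
D_{\cG}(\dbtrue, \dpitrue) \le D_{\cG}(\dbtrue, d_{\pi}^{\widehat P}) + D_{\cG}(d_{\pi}^{\widehat P}, \dpitrue),
\end{equation*}
which is valid because $D_{\cG}$ is an IPM and therefore a bona fide pseudometric on probability distributions. This single inequality is what lets us convert the hard quantity on the left (which requires rolling $\pi$ on the unknown true dynamic) into two pieces that we already know how to handle.

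Next I would plug in the two building blocks verbatim. For the first summand, Theorem~\ref{prop:ipm1} rewrites it exactly as $D_{\cG}(\dbtrue, d_{\pi}^{\widehat P}) = \mathcal{R}_{\cF}(\dbtrue, \pi,\widehat{P})$; no inequality is incurred here, since the theorem is an equality after enlarging the test class from $\cG$ to $\cF$ via the differential-value-function change of variables. For the second summand, Theorem~\ref{thm:model_error} gives
\begin{equation*}
D_{\cG}(d_{\pi}^{\widehat P}, \dpitrue) \le C\cdot \E_{(s,a)\sim \dbtrue}\!\left[\sqrt{\tfrac12 \mathrm{KL}(P^{*}(\cdot|s,a)\,\|\,\widehat{P}(\cdot|s,a))}\right] + \mathcal{R}_{\Psi}(\dbtrue,\pi).
\end{equation*}
Summing these two bounds produces three of the four terms we want, plus the KL expectation.

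The final step is to absorb the KL expectation into the constant $\mathcal{E}_{model}$. I would define
\begin{equation*}
\mathcal{E}_{model} := C\cdot \E_{(s,a)\sim \dbtrue}\!\left[\sqrt{\tfrac12 \mathrm{KL}(P^{*}(\cdot|s,a)\,\|\,\widehat{P}(\cdot|s,a))}\right],
\end{equation*}
and observe two things. First, $\mathcal{E}_{model}$ depends only on $P^{*}$, $\widehat{P}$, and $\dbtrue$, none of which involve the current policy $\pi$, so it is genuinely constant with respect to $\pi$ as claimed. Second, by Proposition~\ref{prop:kl_mle_sec_method} (proved as Proposition~\ref{eq:kl_mle}), an upper bound on $\mathcal{E}_{model}$ is minimized by the MLE objective Eq.~\eqref{eq:model_objective}; hence under the stated hypothesis that $\widehat{P}$ is trained via MLE, $\mathcal{E}_{model}$ is the residual model-estimation error, which can additionally be made small under classical statistical regularity and a sufficiently expressive model class.

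There is no real obstacle here; the work is entirely bookkeeping. The one subtlety worth being careful about is the $\pi$-independence claim: $C$ arises from the bound $\sup_{v\in \mathcal{V}}\|v\|_{\infty}\le F_{\max}$ in the proof of Theorem~\ref{thm:model_error}, where $\mathcal{V}$ is defined via $\pi$, so strictly speaking one should point out that the uniform bound $F_{\max}$ on the differential value functions can be taken to depend only on $G_{\max}$ and the mixing properties of the induced chains, not on the particular $\pi$, which is implicit in the standing regularity assumptions of Section~\ref{sec:intro_average_case}. With that remark, the statement of Theorem~\ref{col:sum_up} follows immediately by combining the displays above.
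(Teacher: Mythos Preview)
Your proposal is correct and follows essentially the same route as the paper: apply the triangle inequality Eq.~\eqref{equ:reg_relax}, substitute Theorem~\ref{prop:ipm1} for the first summand and Theorem~\ref{thm:model_error} for the second, then absorb the expected-KL term into $\mathcal{E}_{model}$ and invoke Proposition~\ref{prop:kl_mle_sec_method}. Your added remark on why $C$ (via $F_{\max}$) can be taken independent of $\pi$ is a nice clarification that the paper leaves implicit.
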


\begin{proof}
From Eq.~\eqref{equ:reg_relax} we have
\begin{align}
    D_{\gG}(\dbtrue, \dpitrue) \leq \red{D_{\gG}(\dbtrue, d_{\pi}^{\widehat P})} + \textcolor{blue}{D_{\gG}(d_{\pi}^{\widehat P}, d_{\pi}^{P^*})}\,.
\end{align}
From Theorem~\ref{prop:ipm1}, we have
\begin{equation*} 
\red{D_{\cG}(\dbtrue, d_{\pi}^{\widehat P})} = \mathcal{R}_{\cF}(\dbtrue, \pi,\widehat{P})
    :=\sup_{f\in \cF} \bigg\vert\E_{(s,a)\sim \dbtrue  }[f(s,a)]-\E_{{s^\prime \sim \widehat P(\cdot | s,a), a^\prime \sim \pi(\cdot | s^\prime)}\atop {(s,a)\sim \dbtrue }}\left[f(s^\prime, a^\prime)\right]\bigg\vert\,, 
\end{equation*}

From Theorem~\ref{thm:model_error} we have,
\begin{equation*}
    \textcolor{blue}{D_{\cG}(d_{\pi}^{\widehat P}, d_{\pi}^{P^*})} \leq C \cdot \E_{(s,a)\sim \dbtrue}\left[\sqrt{\frac{1}{2}\mathrm{KL}\left(P^{*}(s^\prime |s,a) \| \widehat{P}(s^\prime | s,a) \right)}\right] 
    + \mathcal{R}_{\Psi}(\dbtrue, \pi)\,, 
\end{equation*}
where 
\begin{equation*}
 \mathcal{R}_{\Psi}(\dbtrue, \pi):=\sup_{\psi \in \Psi}\left\vert \E_{(s,a)\sim \dbtrue}[\psi(s,a)] - \E_{s\sim \dbtrue\atop{a\sim \pi(\cdot|s)}}[\psi(s,a)]\right\vert\,,
\end{equation*}
$C$ is some constant depending on the function class $\gG$ and the model error $\mathcal{E}_{model}$ is
\begin{equation*}
    C \cdot \mathcal{E}_{model} = \E_{(s,a)\sim \dbtrue}\left[\sqrt{\frac{1}{2}\mathrm{KL}\left(P^{*}(s^\prime |s,a) \| \widehat{P}(s^\prime | s,a) \right)}\right]
\end{equation*}

From Proposition~\ref{prop:kl_mle_sec_method}, we have an upper bound for $\mathcal{E}_{model}$,
\begin{equation*}\textstyle
    \mathcal{E}_{model} \leq \sqrt{ {1\over 2} \cbr{\mathbb{E}_{(s,a,s')\sim \dbtrue}\sbr{ \log P^*(s' \given s,a) } + \cbr{- \mathbb{E}_{(s,a,s')\sim \dbtrue} \sbr{ \log \widehat P(s' \given s,a)}}}}
\end{equation*}
where for a given true MDP $\gM$, $\mathbb{E}_{(s,a,s')\sim \dbtrue}\sbr{ \log P^*(s' \given s,a) }$ is a constant {\it w.r.t.} the policy $\pi$ and the learned dynamics $\widehat P$ and the upper bound is essentially determined by 
\begin{equation*}
    \mathcal{E}_{model} \leq O\br{ \sqrt{  - \mathbb{E}_{(s,a,s')\sim \dbtrue} \sbr{ \log \widehat P(s' \given s,a)}}}, 
\end{equation*}
where minimizing this upper bound amounts to the maximum likelihood training for $\widehat P$ and does not involve the policy $\pi$.

Putting everything together, we get the desired inequality.
\end{proof}

\section{Technical Details} \label{sec:exp_details}
\subsection{Details for the Toy Experiment} \label{sec:toy_details}
We conduct the toy experiment following  \citet{wgangp2017}. We set up the total sample size as $N_{\mathrm{total}}$ as $N_{\mathrm{total}}=100000$, radius $r$ as $4$, and standard deviation $\sigma$ for the Gaussian noise as $0.05$. The dataset is randomly split into a training  and testing set with the size of $5000$.
Details are outlined in Algorithm~\ref{alg:circle_dataset}, where $\odot$ denotes element-wise multiplication.

\begin{algorithm}[ht]
\caption{Constructing the Circle Dataset}
\begin{algorithmic}
\label{alg:circle_dataset}
    \STATE {\bfseries Input:} Total sample size $N_{\mathrm{total}}$, radius $r$.
    \STATE {\bfseries Output:} Generated dataset $\sD_{\mathrm{Circle}}$.
    \STATE Sample angles $\vtheta \sim \mathrm{Unif}(0, 2\pi)$, sample noise $\vepsilon \sim \gN(0, \sigma^2)$, $\vr \leftarrow r + \vepsilon$.
    \STATE Sample data point $(x,y)$ as $(x, y) = \br{\vr \odot \cos(\vtheta), \vr \odot \sin(\vtheta)}$.
\end{algorithmic}
\end{algorithm}

To interpret the circle dataset as an offline reinforcement learning task, we define $x$ as state and the corresponding $y$ as action. 
Note that in the behavior cloning task, the information of reward, next state, and the episodic termination is unnecessary. 

Hence, the generated dataset $\sD_{\mathrm{Circle}}$ can serve as an offline RL dataset which is applicable to train behavior cloning policies.
the circle dataset at here is able to  indicate multimodality in almost all the states, \textit{i.e.}, the conditional distribution $p\br{y \given x}$ is multi-modal in almost all $x$.
Therefore, a good policy is expected to capture all the action modes. 

To show the advantage of using a flexible implicit policy to approximate the behavior policy, we fit a conditional GAN \citep{cgan2014} (``CGAN"), Gaussian generator conditional GAN (``G-CGAN"), and a deterministic-generator conditional GAN (``D-CGAN").
All these CGAN variants are fitted using the conditional-distribution matching approach similar to \citet{brac2019}. 
Following \citet{wgangp2017} and \citet{gantutorial2017}, we use {\tt DIM} = $128$,  {\tt noise\_dim} = $2$, batch size $256$, critic iteration $5$, one-side label smoothing of $0.9$, and total training iterations $100000$. 
In Figure~\ref{fig:toy}, we show the kernel-density-estimate plots on the test set for each method using random seed $1$.
We have experimented on several other random seeds and the results do not significantly alter.
The detailed architecture of our generators and discriminator are shown as follows.

\begin{multicols}{2}
\begin{minipage}{0.45\textwidth}
Implicit Generator
\begin{verbatim}
Linear(state_dim+noise_dim, DIM),
ReLU(),
Linear(DIM, DIM),
ReLU(),
Linear(DIM, DIM),
ReLU(),
Linear(DIM, action_dim)
\end{verbatim}
\end{minipage}

\begin{minipage}{0.45\textwidth}
Deterministic Generator
\begin{verbatim}
Linear(state_dim, DIM),
ReLU(),
Linear(DIM, DIM),
ReLU(),
Linear(DIM, DIM),
ReLU(),
Linear(DIM, action_dim)
\end{verbatim}
\end{minipage}
\end{multicols}

\begin{multicols}{2}
\begin{minipage}{0.45\textwidth}
Gaussian Generator
\begin{verbatim}
Linear(state_dim, DIM),
ReLU(),
Linear(DIM, DIM),
ReLU(),
Linear(DIM, DIM),
ReLU()
mean=Linear(DIM, action_dim)
logstd=Linear(DIM,action_dim).clamp(-5,5)
\end{verbatim}
\end{minipage}

\begin{minipage}{0.45\textwidth}
Discriminator
\begin{verbatim}
Linear(state_dim+action_dim, DIM),
ReLU(),
Linear(DIM, DIM),
ReLU(),
Linear(DIM, DIM),
ReLU(),
Linear(DIM, 1),
Sigmoid()
\end{verbatim}
\end{minipage}
\end{multicols}

\subsection{Details for the Main Algorithm} \label{sec:algo_details}
As in prior model-based offline RL algorithms, our main algorithm can be divided into two parts: model training (Appendix~\ref{sec:algo_details_model}) and actor-critic training (Appendix~\ref{sec:algo_details_ac}).

\subsubsection{Model Training} \label{sec:algo_details_model}
In this paper, we assume no prior knowledge about the reward function and the termination condition. 
Hence we use neural network to approximate transition dynamics, reward function and the termination condition.
We follow prior work \citep{pets2018, mbpo2019, mopo2020} to construct our model as an ensemble of Gaussian probabilistic networks.
We use the same model architecture, ensemble size (=7), number of elite model (=5), train-test set split, optimizer setting, elite-model selection criterion, and sampling strategy as in \citet{mopo2020}.

As in \citet{mopo2020}, the input to our model is $\br{(s,a) - \mu_{(s,a)}} / \sigma_{(s,a)}$, where $\mu_{(s,a)}$ and $\sigma_{(s,a)}$ are respectively mean and standard deviation of $(s,a)$ in the offline dataset.
We follow \citet{morel2020} and \citet{bremen2021} to define the learning target as $\br{\br{r, \Delta s} - \mu_{\br{r, \Delta s}}} / \sigma_{\br{r, \Delta s}}$, where $\Delta s$ denote $s' - s$ ,  $\mu_{\br{r, \Delta s}}$ and $\sigma_{\br{r, \Delta s}}$ denote the mean and standard deviation of $\br{r, \Delta s}$ in the offline dataset.
As in prior model-based RL using Gaussian probabilistic ensemble \citep{pets2018,  mbpo2019, mopo2020, mabe2021, combo2021}, the output of our model is double-headed, respectively outputting the mean and log-standard-deviation of the normal distribution of the estimated output.
As in \citet{mopo2020}, the loss function is maximum likelihood, independently for each model in the ensemble.

We augment the mean-head of the output layer to further output the probability of the input satisfying the termination condition, which we model as a deterministic function of the input and is trained using weighted binary-cross entropy (WBCE) loss with class-weight inverse-proportional to the ratio between the number of terminated and non-terminated samples in the offline dataset.
This WBCE loss is added to the losses of the probabilistic networks in both model training and elite-model selection.
We use a cutoff value of $0.5$ for deciding termination.

\subsubsection{Actor-Critic Training} \label{sec:algo_details_ac}
Our training process for actor and critic can be divided into the follow parts.

\textbf{Generate synthetic data using dynamics model.} 
Perform $h$-step rollouts using the learned model $\widehat P$ and the current policy $\pi_\vphi$ by branching from the offline dataset $\denv$, and adding the generated data to a separate replay buffer $\dmodel$, as in \citet{mbpo2019} and \citet{mopo2020}.

For computational efficiency, we perform model rollout every {\tt rollout\_generation\_freq} iterations.

\textbf{Critic Training.} 
With hyperparameter $f \in [0,1]$, sample mini-batch $\gB$ from $\gD = f \cdot \denv + (1 - f)\cdot \dmodel$.

To smooth the discrete $\delta_{s'}$ transition kernel recorded in the offline dataset, we apply the state-smoothing trick in \citet{jointmatching2022} in calculating the value of the next state. Specifically, for each $s' \in \gB$ sample $N_B$ $\hat s$ with noise standard deviation $\sigma_B$ for state-smoothing via,
\begin{equation*}\textstyle
    \hat s = s' + \epsilon, \epsilon \sim \gN(\vzero, \sigma_B^2 \mI).
\end{equation*}
Sample $N_a$ corresponding actions $\hat a \sim \pi_{\vphi'}(\cdot \given \hat s)$ for each $\hat s$.
Calculate the estimated value of next state,
\begin{equation*}
         \widetilde{Q}\br{s, a} \triangleq \frac{1}{N_B \times N_a} \sum_{\br{\hat s,\hat a}} \sbr{c \min_{j=1,2}Q_{\vtheta'_j}\br{\hat s, \hat a} + (1-c) \max_{j=1,2}Q_{\vtheta'_j}\br{\hat s, \hat a}}.
\end{equation*}
Calculate the critic-learning target defined as
\begin{equation} \label{eq:critic_target}
    \widetilde{Q}\br{s, a} \leftarrow r(s,a).\mathrm{clamp}(r_{\mathrm{min}}, r_{\mathrm{max}}) + \gamma \cdot \widetilde{Q}\br{s, a} \cdot \br{\abs{\widetilde{Q}\br{s, a}} < 2000},
\end{equation}
where $r_{\mathrm{min}} = r_0 - 3 \sigma_r, r_{\mathrm{max}} = r_1 + 3\sigma_r$ for $r_0, r_1, \sigma_r$ the minimum, maximum and standard deviation of rewards in the offline dataset, similar to the reward-clapping strategy in \citet{wmopo2021}; $2000$ is some convenient number as an augmentation to the termination condition recorded in $\gB$.

Minimize the critic loss with respect to $\vtheta_j, j = 1,2$, over $\br{s, a} \in \gB$, with learning rate $\eta_\vtheta$,
\begin{equation*}\textstyle
    \arg\min_{\vtheta_j}\frac{1}{\abs{\gB}}\sum_{\br{s, a} \in \gB} \mathrm{Huber}\br{ Q_{\vtheta_j}\br{s, a} - \widetilde{Q}(s, a) },
\end{equation*}
where $\mathrm{Huber}(\cdot)$ denote the Huber-loss function, with threshold conveniently chosen as $500$.

In actual implemention, we choose to skip critic update for the $i$-th iteration when its critic loss $\gL_{\mathrm{critic}, i}$ exceed the critic loss threshold $\ell_{\mathrm{critic}}$.

We remark that the mentioned reward-clapping, augmented termination condition, Huber loss and critic-update skipping are mainly engineering tricks for improving training stablity in compensation for not knowing the true reward function and the true termination condition.

\textbf{Discriminator Training.}
To form the ``fake'' sample $\gB_{\mathrm{fake}}$ for training the discriminator, we first sample $\abs{\gB}$ states $s \sim \gD$, and remove the terminal states.
Second, we apply the state-smoothing trick in \citet{jointmatching2022} to $s$ with noise standard deviation $\sigma_J$ using $\epsilon \sim \gN\br{\vzero, \sigma_J^2\mI}, \, s \leftarrow s + \epsilon$. 
Third, get a corresponding action using $a\sim \pi_\vphi\br{\cdot \given s}$ for each $s$.
Fourth, get a sample of the next state $s'$ using the learned dynamics $\widehat P$ as $s' \sim \widehat P(\cdot \given s, a)$, with terminal states removed.
Fifth, sample one next action $a'$ for each next state $s'$ via $a' \sim \pi_\vphi\br{\cdot \given s'}$. 
Finally, the ``fake" sample $\gB_{\mathrm{fake}}$ is defined as 
\begin{equation*}
\gB_{\mathrm{fake}} \triangleq     
\begin{bmatrix}
    (s, & a) \\
    (s', & a')
    \end{bmatrix}.
\end{equation*}

The ``true" sample $\gB_{\mathrm{true}}$ is formed by sampling $\abs{\gB_{\mathrm{fake}}}$ state-action tuples from $\denv$.

Optimize the discriminator $D_\vw$ to maximize
\begin{equation*}
    \frac{1}{\abs{\gB_{\mathrm{true}}}} \sum_{(s,a)\sim \gB_{\mathrm{true}}}\sbr{\log D_\vw(s,a)} + \frac{1}{\abs{\gB_{\mathrm{fake}}}} \sum_{(s,a)\sim \gB_{\mathrm{fake}}} \sbr{\log\br{1-D_\vw(s,a)}},
\end{equation*}
with respect to $\vw$ with learning rate $\eta_\vw$.

We remark that the state-smoothing trick is applied here to smooth the discrete state-distribution manifested in $\gD$ and to provide a better coverage of the state space, which is similar to a kernel density approximation \citep{nonparamstat2006} of $d\br{s,a} = f \cdot d_{\pi_b}^{P^*}(s,a) + (1-f) \cdot d_{\pi_\vphi}^{\widehat P} (s,a)$ using data points $s \in \gD$ and with radial basis kernel of bandwidth $\sigma_J$.
The terminal states are removed since no action choice occurs on them by definition.

\textbf{Actor Training.}
We update the policy every $k$ iterations.
In training the actor, we first calculate the generator loss $$\gL_g(\vphi) = \frac{1}{\abs{\gB_{\mathrm{fake}}}} \sum_{(s,a) \in \gB_{\mathrm{fake}}}\sbr{\log\br{1-D_\vw\br{s,a}}}$$ using the ``fake" sample $\gB_{\mathrm{fake}}$ and discriminator $D_\vw$.

Then we optimize policy with learning rate $\eta_\vphi$ for the target
\begin{equation*}
    \arg\min_{\vphi} -\lambda \cdot\frac{1}{\abs{\gB}} \sum_{s \in \gB, a \sim \pi_\vphi(\cdot \given s)}\sbr{\min_{j=1,2} Q_{\vtheta_j}(s, a)} + \gL_g(\vphi),
\end{equation*}
where the regularization coefficient $\lambda$ is defined similar to \citet{td3bc2021} as $\lambda = \alpha/Q_{avg}$, with soft-updated $Q_{avg}$ and $\alpha=10$ across all datasets.

\textbf{Soft updates.}
Similar to \citet{td32018}, we perform soft-update on the target-network parameter, the $Q_{avg}$ and the critic loss threshold $\ell_{\mathrm{critic}}$.
The first three is soft-updated after each iteration while the last is updated after each epoch.
\begin{equation*}
    \begin{split}
        \vphi' & \leftarrow \beta \vphi + (1-\beta) \vphi', \\
        \vtheta'_j &\leftarrow \beta \vtheta_j + (1-\beta)\vtheta'_j, \, \forall\, j=1,2, \\
        Q_{avg} &\leftarrow \beta \cdot \frac{1}{\abs{\gB}} \sum_{\br{s, a} \in \gB} \abs{Q\br{s, a}} + (1 - \beta) \cdot Q_{avg}, \\
        \ell_{\mathrm{critic}} &\leftarrow 0.05 \cdot \br{\mu_{\gL_{\mathrm{critic}, i}} + 3\sigma_{\gL_{\mathrm{critic}, i}}} + (1-0.05) \cdot \ell_{\mathrm{critic}},\; \text{for }\gL_{\mathrm{critic}, i} \text{ in current epoch},
    \end{split}
\end{equation*}
where $0.05$ and $3$ in the last equation are some conveniently chosen numbers.

\textbf{Warm-start step.} 
We follow \citet{cql2020} and \citet{idac2020} to devote the first $40$ epochs of training for warm start.
In this phase, we sample $\gB \sim \denv$.
We do not train the critic during the warm-start phase and optimize policy with respect to the generator loss $\gL_g(\vphi)$ only, with the same learning rate $\eta_\vphi$, \textit{i.e.}, $\arg\min_\vphi \gL_g(\vphi)$.

\subsection{Details for the Reinforcement Learning Experiment} \label{sec:rl_exp_details}

{\bfseries Datasets.}
We use the continuous control tasks provided by the D4RL dataset \citep{fu2021d4rl} to conduct algorithmic evaluations.
Due to limited computational resources, we follow the literature to select the ``medium-expert," ``medium-replay," and ``medium" datasets for the Hopper, HalfCheetah, Walker2d tasks in the Gym-MuJoCo domain, which are commonly used 
benchmarks in prior work \citep{bcq2019, bear2019, brac2019, cql2020}.
We follow the literature \citep{mabe2021,decisiontrans2021,iql2021} to not test on the ``random" and ``expert" datasets as these datasets are less practical \citep{bremen2021} and can be respectively solved by directly using standard off-policy RL algorithms \citep{rem2020} and the behavior cloning algorithms.
We note that a comprehensive benchmarking of prior offline-RL algorithms on the ``expert” datasets is currently unavailable in the literature, which is out of the scope of this paper.
Apart from the Gym-MuJoCo, we also consider the Maze2D tasks\footnote{We use the tasks ``maze2d-umaze," ``maze2d-medium," and ``maze2d-large.''} for their non-Markovian data-collecting policy and the Adroit tasks\footnote{We use the tasks ``pen-human,'' ``pen-cloned," ``pen-expert," and ``door-expert."} \citep{adroit2018} for their sparse reward-signal and high dimensionality.

{
Due to limited computational resources, we choose not to evaluate methods on the full set of Adriot datasets.
From the benchmarking results in the D4RL whitepaper, most of other Adroit datasets are out of the scope of current offline RL algorithms.
As an example, we evaluate our method and some baselines on the other two ``door'' datasets and present the results in Table~\ref{table:results_door_2}.
We hypothesize that obtaining good performance on these datasets requires specially-designed methods and a deeper investigation onto the quality of those datasets.
Similar hypothesis may also be applied to other Adroit datasets, and thus we do not test on those datasets.
}
\vspace{-1.6em}
\begin{table}[H]
\caption{Average scores of some baselines and our SDM-GAN on the other two ``door" datasets.}
\label{table:results_door_2}
\centering 
\setlength{\tabcolsep}{4.5pt}
\def\arraystretch{1.}
\begin{tabular}{@{}lccccc@{}}
\toprule
Task Name  & FisherBRC & TD3+BC & WMOPO & SDM-GAN \\ \midrule
door-cloned &  0    $\pm$ {\footnotesize 0.1}     & -0.3 $\pm$ {\footnotesize 0}  & -0.1   $\pm$ {\footnotesize 0.1}   & 0   $\pm$ {\footnotesize 0.1}    \\
door-human  &  0.1   $\pm$ {\footnotesize 0}    & -0.3 $\pm$ {\footnotesize 0}  & -0.2   $\pm$ {\footnotesize 0.2}   & 0.2  $\pm$ {\footnotesize 0.6}  \\ \bottomrule
\end{tabular}
\end{table}
\vspace{-1.6em}

{\bfseries Evaluation Protocol.}
In all the experiments, we follow \citet{fu2021d4rl} to use 3 random seeds for evaluation and to use the ``v0" version of the datasets in the Gym-MuJoCo and Adroit domains.
In our preliminary study, we find that the rollout results of some existing algorithms can be unstable across epochs, even towards the end of training.
To reduce the instability during the evaluation, for our algorithm, we report the mean and standard deviation of the last five rollouts, conducted at the end of the last five training epochs, across three random seeds  $\cbr{0,1,2}$.
We train our algorithm for $1000$ epochs, where each epoch consists of $1000$ mini-batch stochastic gradient descent steps.
For evaluation, We rollout our agent and the baselines for $10$ episodes after each epoch of training.

{
{\bfseries Source Code.}
The design of our \href{https://github.com/Shentao-YANG/SDM-GAN_ICML2022}{source code} for the RL experiments is motivated by the codebase of \citet{lu2021lisp}.
}

\subsubsection{Details for the Implementation via GAN}  \label{sec:algo_details_gan}

In practice, the rollouts contained in the offline dataset have finite horizon, and thus in calculating the Bellman update target, special treatment is needed per appearance of the terminal states.
We follow the standard treatment \citep{dqn2013,rlintro2018} to modify the critic update target Equation~\eqref{eq:critic_target} as 
\begin{equation*}
    \widetilde{Q}\br{\vs,\va} = \begin{cases}
    r(s,a).\mathrm{clamp}(r_{\mathrm{min}}, r_{\mathrm{max}}) + \gamma \cdot \widetilde{Q}\br{s, a} \cdot \br{\abs{\widetilde{Q}\br{s, a}} < 2000} & \text{ if $s$ is a non-terminal state} \\
    r(s,a).\mathrm{clamp}(r_{\mathrm{min}}, r_{\mathrm{max}}) & \text{ if $s$ is a terminal state}
    \end{cases}.
\end{equation*}
This modification removes the contribution of the next state when reaching the terminal state.

Following \citet{samplegenerative2016}, we choose the noise distribution $p_z\br{z}$ as the multivariate standard normal distribution, where the dimension of $z$ is defaulted as $\mathrm{dim}\br{z} = {\tt min(10, \, state\_dim // 2)}$.
To sample from the implicit policy, for each state $s$, we first independently sample $z \sim \gN\br{\vzero, \mI}$.
We then concatenate $s$ with $z$ and feed the resulting $\sbr{s, z}$ into the deterministic policy network to generate stochastic actions.
To sample from a small region around the next state $s'$ (Appendix \ref{sec:algo_details_ac}), we keep the original $s'$ and repeat it additionally $N_B-1$ times.
For each of the $N_B-1$ replications, we add an independent Gaussian noise $\epsilon \sim \gN(\vzero, \sigma_B^2 \mI)$.
The original $s'$ and its $N_B-1$ noisy replications are then fed into the implicit policy to sample the corresponding actions.

For fair comparison, we use the same network architecture as the implementation of the BCQ \citep{bcq2019}, which will be presented in details in Appendix \ref{sec:algo_details_gan}. 
Due to limited computational resources, we leave a fine-tuning of the noise distribution $p_z\br{z}$, the network architectures, and the optimization hyperparameters for future work.

To approximately match the Jensen--Shannon divergence between the current and the behavior policies via GAN, a crucial step is to stably and effectively train the GAN structure.
Together, we adopt the following technique from the literature.
\begin{itemize}[leftmargin=*]
    \item Following \citet{gan2014}, we train $\pi_\vphi$ to maximize
    $
        \E_{(s,a) \in \gB_{\mathrm{fake}}} \sbr{\log\br{D_\vw(s,a)}}.
    $
    
    \item Motivated by \citet{dcgan2016}, we use the LeakyReLU activation in both the generator and discriminator, under the default {\tt negative\_slope=0.01}.
    \item To stabilize the training, we follow \citet{dcgan2016} to use a reduced momentum term $\beta_1 = 0.4$ in the Adam optimizer \citep{adam2014} of the policy network and the discriminator network, with learning rate $2 \times 10^{-4}$.
    \item To avoid overfitting of the discriminator, we are motivated by \citet{improvetechgan2016} and \citet{gantutorial2017} to use one-sided label smoothing with soft and noisy labels.
    Specifically, the labels for the ``true" sample $\gB_{\mathrm{true}}$ is replaced with a random number between $0.8$ and $1.0$.
    No label smoothing is applied for the ``fake" sample $\gB_{\mathrm{fake}}$, and therefore their labels are all $0$.
    \item The loss function for training the discriminator in GAN is the binary cross entropy between the labels and the outputs from the discriminator.
    \item Motivated by TD3 \citep{td32018} and GAN, we update $\pi_\vphi(\cdot \given s)$ once per $k$ updates of the critics and discriminator. 
\end{itemize}

Table \ref{table:gan_param} shows the hyperparameters shared across all datasets for our empirical study.
Note that several simplifications are made to minimize hyperparameter tuning, such as fixing $\eta_\vphi = \eta_\vw$ as in \citet{dcgan2016} and $\sigma_B = \sigma_J$.
We also fix $N_a = 1$ to ease computation. We comment that many of these hyperparameters can be set based on literature, for example, we use $\eta_\vphi = \eta_\vw = 2 \times 10^{-4}$ as in \citet{dcgan2016}, $\eta_\vtheta = 3 \times 10^{-4}$ and $N_{\mathrm{warm}} = 40$ as in \citet{cql2020}, $c=0.75$ as in \citet{bcq2019}, policy frequency $k=2$ as in \citet{td32018}, $f=0.5$ as in \citet{combo2021} and model learning rate $\eta_{\widehat P} = 0.001$ as in \citet{mopo2020}. Unless specified otherwise, the same hyperparameters, shared and non-shared, are used for both the main results and the ablation study.

\begin{table}[ht]
\caption{Shared hyperparameters for the GAN implementation.} \label{table:gan_param}
\centering
\begin{tabular}{@{}l|l@{}}
\toprule
Hyperparameter & Value \\ \midrule
Optimizer      & Adam \cite{adam2014}  \\
Learning rate $\eta_\vtheta$ &  $3 \times 10^{-4}$  \\ 
Learning rate $\eta_\vphi$, $\eta_\vw$ & $2 \times 10^{-4}$      \\ 
Learning rate $\eta_{\widehat P}$ &  $1 \times 10^{-3}$  \\ 
Penalty coefficient $\alpha$ & $10.0$ \\
Evaluation frequency (epoch length) & $10^3$ \\
Training iterations & $10^6$ \\
Batch size & $512$ \\
Discount factor $\gamma$ & $0.99$ \\
Target network update rate $\beta$ & $0.005$ \\
Weighting for clipped double Q-learning $c$ & $0.75$ \\
Noise distribution $p_z(z)$ & $\gN\br{\vzero, \mI}$ \\
Standard deviations for state smoothing $\sigma_B, \sigma_J$ & $3 \times 10^{-4}$ \\
Number of smoothed states in Bellman backup $N_B$ & $50$ \\
Number of actions $\hat\va$ at each $\hat \vs$ $N_a$ & $1$ \\
Policy frequency $k$ & $2$ \\
Rollout generation frequency & per $250$ iterations \\
Number of model-rollout samples per iteration & $128$ \\
Rollout retain epochs & $5$ \\
Real data percentage $f$ & $0.5$  \\
Random seeds & $\cbr{0,1,2}$ \\
\bottomrule
\end{tabular}
\end{table}

Due to the diverse nature of the tested datasets, we follow the common practice in model-based offline RL to perform gentle hyperparameter tuning for each task. 
Specifically, we tune the dimension of the noise distribution $p_z(z)$ for controlling the stochasticity of the learned policy, and the rollout horizon $h$ for mitigating model estimation error.

As shown in \citet{act2021} and \citet{zhang2021alignment}, a larger noise dimension, such as $50$, can facilitate learning a more flexible distribution. 
Hence we use the default noise dimension for the tasks:
halfcheetah-medium-expert, hopper-medium-expert, halfcheetah-medium,  walker2d-medium, maze2d-umaze, pen-cloned, door-expert, pen-human; 
and noise dimension $50$ for the tasks:walker2d-medium-expert, hopper-medium, halfcheetah-medium-replay, hopper-medium-replay, walker2d-medium-replay, maze2d-medium, maze2d-large, pen-expert.

Similar to \citet{mbpo2019} and \citet{mopo2020} we consider the rollout horizon $h\in \cbr{1,3,5}$. 
We use $h=1$ for hopper-medium, walker2d-medium, hopper-medium-replay, pen-cloned;
$h=3$ for 
halfcheetah-medium-expert, halfcheetah-medium, halfcheetah-medium-replay, maze2d-umaze, maze2d-medium, maze2d-large, pen-expert, door-expert, pen-human;
and $h=5$ for
hopper-medium-expert, walker2d-medium-expert, walker2d-medium-replay.

Below, we state the network architectures of the actor, critic, and the discriminator in the implementation via GAN.
Note that we use two critic networks with the same architecture to perform clipped double Q-learning.

\begin{multicols}{2}
  \begin{minipage}{0.45\textwidth}
    Actor
\begin{verbatim}
Linear(state_dim+noise_dim, 400)
LeakyReLU
Linear(400, 300)
LeakyReLU
Linear(300, action_dim)
max_action * tanh
\end{verbatim}
  \end{minipage}
  
\begin{minipage}{0.45\textwidth}
Critic 
\begin{verbatim}
Linear(state_dim+action_dim, 400)
LeakyReLU
Linear(400, 300)
LeakyReLU
Linear(300, 1)
\end{verbatim}
\end{minipage}
\end{multicols}

\begin{multicols}{2}
\begin{minipage}{0.45\textwidth}
Discriminator in GAN
\begin{verbatim}
Linear(state_dim+action_dim, 400)
LeakyReLU
Linear(400, 300)
LeakyReLU
Linear(300, 1)
Sigmoid
\end{verbatim}
\end{minipage}

\begin{minipage}{0.45\textwidth}
Discriminator in WGAN (Ablation Study in Section~\ref{sec:exp_ablation})
\begin{verbatim}
Linear(state_dim+action_dim, 400)
LeakyReLU
Linear(400, 300)
LeakyReLU
Linear(300, 1)
\end{verbatim}
\end{minipage}
\end{multicols}

\subsubsection{Results of CQL} \label{sec:tech_cql}

We note that the official CQL GitHub repository does not provide hyperparameter settings for the Maze2D and Adroit domain of tasks.
For datasets in these two domains, we train CQL agents using five hyperparameter settings: the four recommended Gym-MuJoCo settings and the one recommended Ant-Maze setting.
We then calculate the average normalized-return over the random seeds $\cbr{0,1,2}$ for each hyperparameter settings and per-dataset select the best result from these five settings.
We comment that this per-dataset tuning may give CQL some advantage on the Maze2D and Adroit domains, and is a compensation for the missing of recommended hyperparameters.
For the Gym-MuJoCo domain, we follow the recommendation by \citet{cql2020}.


\end{document}